\numberwithin{equation}{section}
\theoremstyle{definition}
\newtheorem{theorem}{Theorem}[section]
\newtheorem{corollary}[theorem]{Corollary}
\newtheorem{lemma}[theorem]{Lemma}
\newtheorem{proposition}[theorem]{Proposition}
\newtheorem{condition}[theorem]{Condition}
\newtheorem{definition}[theorem]{Definition}
 \newcommand{\IGNORE}[1]{}
\newcommand{\citep}{\cite}
\newcommand{\citet}{\cite}
\newcommand\R{\mathbb{R}}
\newcommand\inner[1]{\langle #1 \rangle}
\newcommand{\Exp}{\mathop{\mathbb E}\displaylimits}
\newcommand{\orlicznorm}[2]{\|#1\|_{\psi_{#2}}}
\def\shownotes{0}  
\newcommand{\authnote}[2]{{ $\ll$\textsf{\footnotesize #1 notes: #2}$\gg$}}
\newcommand{\authnote}[2]{}
\newcommand{\Tnote}[1]{{\color{red}\authnote{Tengyu}{#1}}}
\newcommand{\hSigma}{\hat{\Sigma}}
\newcommand{\tilM}{\widetilde{M}}
\newcommand{\sca}{\gamma}
\newcommand{\mat}{\textup{mat}}
\newcommand{\tilO}{\widetilde{O}}
\newcommand{\tilOmega}{\widetilde{\Omega}}
\newcommand{\polyring}[1]{\mathbb{R}[x]_{#1}}
\newcommand{\polyringp}[2]{\mathbb{R}_{#2}[x]_{#1}}
\newcommand{\s}[1]{(#1)}
\title{Sum-of-Squares Lower Bounds for Sparse PCA}
\author[1]{Tengyu Ma\thanks{Supported in part by Simons Award for Graduate Students in Theoretical Computer Science}} 
\author[2]{Avi Wigderson\thanks{Supported in part by NSF grant CCF-1412958}}
\affil[1]{Department of Computer Science, Princeton University}
\affil[2]{School of Mathematics, Institute for Advanced Study}
\begin{document}
\maketitle
 \begin{abstract}
	
This paper establishes a statistical versus computational trade-off	for solving a basic high-dimensional machine learning problem via a basic convex relaxation method. Specifically, we consider the {\em Sparse Principal Component Analysis} (Sparse PCA) problem, and the family of {\em Sum-of-Squares} (SoS, aka Lasserre/Parillo) convex relaxations. It was well known that in large dimension $p$, a planted $k$-sparse unit vector can be {\em in principle} detected using only $n \approx k\log p$ (Gaussian or Bernoulli) samples, but all {\em efficient} (polynomial time) algorithms known require $n \approx k^2 $ samples. It was also known that this quadratic gap cannot be improved by the the most basic {\em semi-definite} (SDP, aka spectral) relaxation, equivalent to a degree-2 SoS algorithms. Here we prove that also degree-4 SoS algorithms cannot improve this quadratic gap. This average-case lower bound adds to the small collection of hardness results in machine learning for this powerful family of convex relaxation algorithms. Moreover, our design of moments (or ``pseudo-expectations'') for this lower bound is quite different than previous lower bounds. Establishing lower bounds for higher degree SoS algorithms for remains a challenging problem.

\end{abstract}

\section{Introduction}

We start with a general discussion of the tension between sample size and computational efficiency in statistical and learning problems. We then describe the concrete model and problem at hand: Sum-of-Squares algorithms and the Sparse-PCA problem. All are broad topics studied from different viewpoints, and the given references provide more information.

\subsection{Statistical vs. computational sample-size}

Modern machine learning and statistical inference problems are often high dimensional, and it is highly desirable to solve them using far less samples than the ambient dimension. Luckily, we often know, or assume, some underlying structure of the objects sought, which allows such savings {\em in principle}. Typical such assumption is that the number of {\em real} degrees of freedom is far smaller than the dimension; examples include sparsity constraints for vectors, and low rank for matrices and tensors. The main difficulty that occurs in nearly all these problems is that while information theoretically the sought answer is present (with high probability) in a small number of samples, actually computing (or even approximating) it from these many samples is a computationally hard problem. It is often expressed as a non-convex optimization program which is NP-hard in the worst case, and seemingly hard even on random instances.

Given this state of affairs, {\em relaxed} formulations of such non-convex programs were proposed, which can be solved efficiently, but sometimes to achieve accurate results seem to require far more samples than existential bounds provide. This phenomenon has been coined the ``statistical versus computational trade-off'' by Chandrasekaran and Jordan~\cite{Chandrasekaran13}, 
who motivate and formalize one framework to study it in which efficient algorithms come from the Sum-of-Squares family of convex relaxations (which we shall presently discuss). They further give a detailed study of this trade-off for the basic {\em de-noising problem}~\cite{johnstone2002function,Donoho95,donoho1998} in various settings (some exhibiting the trade-off and others that do not). This trade-off was observed in other practical machine learning problems, in particular for the Sparse PCA problem that will be our focus, by Berthet and Rigollet~\cite{BR13COLT}. 

As it turns out, the study of the same phenomenon was proposed even earlier in computational complexity, primarily from theoretical motivations. Decatur, Goldreich and Ron~\cite{Decatur97} initiate the study of ``computational sample complexity'' to study statistical versus computation trade-offs in sample-size. In their framework efficient algorithms are arbitrary polynomial time ones, not restricted to any particular structure like convex relaxations. They point out for example that in the distribution-free PAC-learning framework 
of Vapnik-Chervonenkis and Valiant, there is often no such trade-off. The reason is that the number of samples is essentially determined (up to logarithmic factors, which we will mostly ignore here) by the VC-dimension of the given concept class learned, and moreover, an ``Occam algorithm'' (computing {\em any} consistent hypothesis) suffices for classification from these many samples. So, in the many cases where efficiently finding a hypothesis consistent with the data is  possible, enough samples to learn are enough to do so efficiently! This paper also provide examples where this is not the case in PAC learning, and then turns to an extensive study of possible trade-offs for learning various concept classes under the uniform distribution. This direction was further developed by Servedio~\cite{Servedio2000}.

The fast growth of Big Data research, the variety of problems successfully attacked by various heuristics and the attempts to find efficient algorithms with provable guarantees is a growing area of interaction between statisticians and machine learning researchers on the one hand, and optimization and computer scientists on the other. The trade-offs between sample size and computational complexity, which seems to be present for many such problems, reflects a curious ``conflict'' between these fields, as in the first more data is good news, as it allows more accurate inference and prediction, whereas in the second it is bad news, as a larger input size is a source of increased complexity and inefficiency. More importantly, understanding this phenomenon can serve as a guide to the design of better algorithms from both a statistical and computational viewpoints, especially for problems in which data acquisition itself is costly, and not just computation. A basic question is thus for which problems is such trade-off inherent, and to establish the limits of what is achievable by efficient methods.

Establishing a trade-off has two parts. One has to prove an existential, information theoretic upper bound on the number of samples needed when efficiency is not an issue, and then prove a computational lower bound on the number of samples for the class of efficient algorithms at hand. Needless to say, it is desirable that the lower bounds hold for as wide a class of algorithms as possible, and that it will match the best known upper bound achieved by algorithms from this class. The most general one, the computational complexity framework of~\cite{Decatur97,Servedio2000} allows all polynomial-time algorithms. 
Here one cannot hope for unconditional lower bounds, and so
existing lower bounds rely on computational assumptions, e.g."cryptographic assumptions", e.g. that factoring integers has
no polynomial time algorithm, or other average case assumptions. For example, hardness of refuting random 3CNF was used for
establishing the sample-computational tradeoff for learning halfspaces~\cite{DanielyLinialShwartz13}, and hardness of
finding planted clique in random graphs was used for tradeoff in sparse PCA~\cite{BR13COLT,GaoMaZhou15}.
On the other hand, in frameworks such as~\cite{Chandrasekaran13}, where the class of efficient algorithms is more restricted (e.g. a family of convex relaxations), one can hope to prove unconditional lower bounds, which are called ``integrality gaps'' in the optimization and algorithms literature. Our main result is of this nature, adding to the small number of such lower bounds for machine learning problems.

We now turn to describe and motivate SoS convex relaxations algorithms, and then the Sparse PCA problem. 

\subsection{Sum-of-Squares convex relaxations}

Sum-of-Squares  algorithms (sometimes called the Lasserre hierarchy) encompasses perhaps the strongest known algorithmic technique for a diverse set of optimization problems. It is  a family of convex relaxations introduced independently around the year 2000 by Lasserre~\cite{lasserre01}, Parillo~\cite{Parrilo00}, and in the (equivalent) context of proof systems by Grigoriev~\cite{grigoriev2001}. These papers followed better and better understanding in real algebraic geometry 
~\cite{artin27, krivine64,stengle1974nullstellensatz,shor,schmudgen1991,put,Nestorov}
of David Hilbert's famous 17th problem on certifying the non-negativity of a polynomial by writing it as a {\em sum of squares} (which explains the name of this method). We only briefly describe this important class of algorithms; far more can be found in the book~\cite{lasserre15} and the excellent extensive survey~\cite{Laurent09}. 

The SoS method provides a principled way of adding constraints to a linear or convex program in a way that obtains tighter and tighter convex sets containing all solutions of the original problem. This family of algorithms is parametrized by their {\em degree} $d$ (sometimes called the {\em number of rounds}); as $d$ gets larger, the approximation becomes better, but the running time becomes slower, specifically $n^{O(d)}$. Thus in practice one hopes that small degree (ideally constant) would provide sufficiently good approximation, so that the algorithm would run in polynomial time. This method extends the standard semi-definite relaxation (SDP, sometimes called spectral), that is captured already by degree-2 SoS algorithms. Moreover, it is more powerful than two earlier families of relaxations: the Sherali-Adams~\cite{SA90} and Lov\'{a}sz-Scrijver ~\cite{LS91} hierarchies.

The introduction of these algorithms has made a huge splash in the optimization community, and numerous applications of it to problems in diverse fields were found that greatly improve solution quality and time performance over all past methods. For large classes of problems they are considered the strongest algorithmic technique known. Relevant to us is the very recent growing set of applications of constant-degree SoS algorithms to machine learning problems, such as~\cite{BKS14,BarakKS14,BM15}. The survey~\cite{BarakS14} 
contains some of these exciting developments. Section~\ref{subsec:sos} contains some self-contained material about the general framework SoS algorithms as well. 

Given their power, it was natural to consider proving lower bounds on what SoS algorithms can do. There has been an impressive progress on SoS degree lower bounds (via beautiful techniques) for a variety of combinatorial optimization problems~\cite{Grigoriev01knapsack,grigoriev2001,Schoenebeck08,MPW15}. 
However, for machine learning problems relatively few such lower bounds (above SDP level) are known~\cite{BM15,Wang15} 
and follow via reductions to the above bounds. So it is interesting to enrich the set of techniques for proving such limits on the power of SoS for ML. The lower bound we prove indeed seem to follow a different route than previous such proofs.

\subsection{Sparse PCA} 

Sparse principal component analysis, the version of the classical PCA problem which assumes that the direction of variance of the data has a sparse structure, is by now a central problem of high-diminsional statistical analysis. In this paper we focus on the single-spiked covariance model introduced by Johnstone~\cite{johnstone2001}. One observes $n$ samples from $p$-dimensional Gaussian distribution with covariance 
\begin{equation}\label{SPCA}
\Sigma = \lambda vv^T +I
\end{equation}
where (the {\em planted} vector) $v$ is assumed to be a unit-norm {\em sparse} vector with at most $k$ non-zero entries, and $\lambda >0$ represents the strength of the signal. The task is to find (or estimate) the sparse vector $v$.  
More general versions of the problem allow several sparse directions/components and general covariance matrix~\cite{ma2013,vu2013}. 
Sparse PCA and its variants have a wide variety of applications ranging from signal processing to biology:  see, e.g., ~\cite{Alon1999, johnstone09,chen11, jenatton10}. 

The hardness of Sparse PCA, at least in the worst case, can be seen through its connection to the (NP-hard) Clique problem in graphs. Note that if $\Sigma$ is a $\{0,1\}$ adjacency matrix of a graph (with 1's on the diagonal), then it has a $k$-sparse eigenvector $v$ with eigenvalue $k$ if and only if the graph has a $k$-clique.
This connection between these two problems is actually deeper, and will appear again below, for our real, average case version above.

From a theoretical point of view, Sparse PCA is one of the simplest examples where we observe a gap between the number of samples needed information theoretically and the number of samples needed for a polynomial time estimator: It has been well understood~\cite{VuL12, paul2012,BR13} that information theoretically, given $n = O(k\log p)$ samples\footnote{We treat $\lambda$ as a constant so that we omit the dependence on it for simplicity throughout the introduction section}, one can estimate $v$ up to constant error (in euclidean norm), using a non-convex (therefore not polynomial time) optimization algorithm. On the other hand, all the existing provable polynomial time algorithms~\cite{johnstone09,	amini2009,vu2013,DeshpandeM14}, which use either diagonal thresholding (for the single spiked model) or semidefinite programming (for general covariance), first introduced for this problem in~\cite{dAspremont07}, 
need at least quadratically many samples to solve the problem, namely $n = O(k^2)$. Moreover, Krauthgamer, Nadler and Vilenchik~\cite{krauthgamer2015}  and Berthet and Rigollet~\cite{BR13} have shown that for semi-definite programs (SDP) this bound is tight. Specifically, the natural SDP cannot even solve the {\em detection problem}: to distinguish the data in equation~\ref{SPCA} above from the null hypothesis in which no sparse vector is planted, namely the $n$ samples are drawn from the Gaussian distribution with covariance matrix $I$. 

Recall that the natural SDP for this problem (and many others) is just the first level of the SoS hierarchy, namely degree-2. Given the importance of the Sparse PCA, it is an intriguing question whether one can solve it efficiently with far fewer samples by allowing degree-$d$ SoS algorithms with larger $d$. A very interesting {\em conditional} negative answer was suggested by Berthet and Rigollet~\cite{BR13}. They gave an efficient reduction from {\em Planted Clique}\footnote{An average case version of the Clique problem in which the input is a random graph in which a much larger than expected clique is planted.} problem to Sparse PCA, which shows in particular that degree-$d$ SoS algorithms for Sparse PCA will imply similar ones for Planted Clique. Gao, Ma and Zhou~\cite{GaoMaZhou15} strengthen the result by 
establishing the hardness of the Gaussian single-spiked covariance model, which is an interesting subset\footnote{Note that lower bounds for special cases are stronger than those for general cases} of models considered by~\cite{BR13COLT}. \Tnote{Added sentence above} These are useful as nontrivial constant-degree SoS lower bounds for Planted Clique were recently proved by~\cite{MPW15,montanari} 
(see there for the precise description, history and motivation for Planted Clique).  As~\cite{BR13,GaoMaZhou15} argue, strong yet {\em believed} bounds, if true, would imply that the quadratic gap is tight for any constant $d$. Before the submission of this paper, the known lower bounds above for planted clique were not strong enough yet to yield any lower bound for Sparse PCA beyond the minimax sample complexity. We also note that the recent progress~\cite{RaghavendraSchramm15,HopkinsKP15} that show the tight lower bounds for planted clique, together with the reductions of~\cite{BR13COLT,GaoMaZhou15}, also imply the tight lower bounds for Sparse PCA, as shown in this paper. \Tnote{Added the sentence above}

\subsection{Our contribution}
We give a direct, unconditional lower bound proof for computing Sparse PCA using degree-4 SoS algorithms, showing that they too require $n=\tilOmega(k^2)$ samples to solve the detection problem (Theorem~\ref{thm:integrality_gap}), which is tight up to polylogarithmic factors when the strength of the signal $\lambda$ is a constant. Indeed the theorem gives a lower bound for every strength $\lambda$, which becomes weaker as $\lambda$ gets larger. Our proof proceeds by constructing the necessary pseudo-moments for the SoS program that achieve too high an objective value (in the jargon of optimization, we prove an ``integrality gap'' for these programs). As usual in such proofs, there is tension between having the pseudo-moments satisfy the constraints of the program and keeping them positive semidefinite (PSD). Differing from past lower bound proofs, we construct {\em two} different PSD moments, each approximately satisfying one sets of constraints in the program and is negligible on the rest. Thus, their sum give PSD moments which {\em approximately} satisfy all constraints. We then perturb these moments to satisfy constraints {\em exactly}, and show that with high probability over the random data, this perturbation leaves the moments PSD.

We note several features of our lower bound proof which makes the result particularly strong and general. First, it applies not only for the Gaussian distribution, but also for Bernoulli and other distributions. Indeed, we give a set of natural (pseudorandomness) conditions on the sampled data vectors under which the SoS algorithm is ``fooled'', and show that these conditions are satisfied with high probability under many similar distributions (possessing strong concentration of measure). Next, our lower bound holds even if the hidden sparse vector is discrete, namely its entries come from the set $\{0,\pm \frac{1}{\sqrt{k}}\}$.
We also extend the lower bound for the detection problem to apply also to the estimation problem, in the regime when the ambient dimension is linear in the number of samples, namely $n\le p \le Bn$ for constant $B$ (see Theorem~\ref{thm:main}). 

\noindent {\bf Organization: }Section~\ref{sec:prelim} provides more backgrounds of sparse PCA and SoS algorithms. Then we state our main results in Section~\ref{sec:mainresult}. In Section~\ref{sec:moment}, we design the pseudo-moments and state their properties and then in Section~\ref{sec:app:mainthm} we prove our main theorems using these moments. Section~\ref{sec:pq} and~\ref{sec:properties} contain the analysis of the moments. Section~\ref{sec:toolbox} lists the tools that we heavily used for proving concentration inequalities in the analysis. Finally we conclude with a discussion of further directions of study in Section~\ref{sec:conclusion}. 

\section{Formal description of the model and problem}\label{sec:prelim}
\paragraph{Notation:}
We use $\|\cdot\|$ to denote the euclidean norm of a vector and spectral norm of a matrix,  $\|\cdot\|_q$  to denote the $q$-norm of a vector, and $|\cdot|_0$ is the number of nonzero entries of a vector. \\
We use $[m]$ to denote the set of integers $\{1,\dots, m\}$. \\
We write $M\succeq 0$ if $M$ is a positive semidefinite matrix. \\ 
$\polyringp{d}{n}$ is used to denote the set of real polynomials with $n$ variables and degree at most $d$. We will drop the subscript $n$ when it is clear from context. We will assume that $n,k,p$ are all sufficiently large\footnote{Or we assume that they go to infinity as typically done in statistics. }, and that $n\le p$. \\
Throughout this paper, by ``with high probability some event happens'',  we mean the failure probability is bounded by $p^{-c}$ for every constant $c$, as $p$ tends to infinity. \\
We use the asymptotic notation $\tilO(\cdot)$ and $\tilOmega(\cdot)$ to hide the logarithmic dependency (in $p$). That is, $m \le \tilO(f(n,p,k))$ means that there exists universal constant $r\ge 0$ (which is less than $3$ typically in this paper) and $C$ such that $m \le Cf(n,p,k)\log^r p$, and $m \ge \tilOmega(f(n,p,k))$ means that there exist constants $r$ and $c$ such that $m\ge cf(n,p,k)/\log^r p$. 

\subsection{Sparse PCA estimation and detection problems}

We will consider the simplest setting of sparse PCA, which is called single-spiked covariance model in literature~\cite{johnstone2001} (note that restricting to a special case makes our lower bound hold in all generalizations of this simple model). In this model, the task is to recover a single sparse vector from noisy samples as follows. The ``hidden data'' is an unknown $k$-sparse vector $v\in \mathbb{R}^p$ with $|v|_0 = k$ and $\|v\| =1$. To make the task easier (and so the lower bound stronger), we even assume that $v$ has discrete entries, namely that $v_i \in \{0,\pm \frac{1}{\sqrt{k}}\}$ for all $i\in [p]$. We observe $n$ noisy samples $X^1,\dots,X^n \in \R^p$ that are generated as follows. Each is independently drawn as 
\begin{equation}
X^j = \sqrt{\lambda} g^j v + \xi^j \label{eqn:mod}
\end{equation}
from a distribution which generalizes both Gaussian and Bernoulli noise to $v$. Namely,
the $g^j$'s are i.i.d real random variable with mean 0 and variance 1, and $\xi^j$'s are i.i.d random vectors which have independent entries with mean zero and variance 1. Therefore under this model, the covariance of $X^i$ is equal to $\lambda vv^T + I$. 
Moreover, we assume that $g^j$ and entries of $\xi^j$ are sub-gaussian\footnote{A real random variable $X$  is subgaussian with variance proxy $\sigma^2$ if it has similar tail behavior as gaussian distribution with variance $\sigma^2$. More formally, if for any $t \in \mathbb{R}$, $\Exp[\exp(tX)]\le \exp(t^2\sigma^2/2)$	} with variance proxy $O(1)$. 
Given these samples, the {\em estimation} problem is  to approximate the unknown sparse vector  $v$. 

It is also interesting to also consider the sparse component {\em detection} problem~\cite{BR13,BR13COLT}, which is the decision problem of distinguishing from random samples the following two distributions
\begin{enumerate}
	\item[] $H_0$: data $X^j= \xi^j$ is purely random
	\item[] $H_v$: data $X^j = \xi^j +\sqrt{\lambda} g^j v $ contains a hidden sparse signal with strength $\lambda$. 
\end{enumerate} 

Rigollet~\cite{mr} observed that a polynomial time algorithm for estimation version of sparse PCA with constant error implies that an algorithm for the detection problem with twice number of the samples. Thus, for polynomial time lower bounds, it suffices to consider the detection problem.

We will use $X$ as a shorthand for the $p\times n$ matrix $\left[X^1,\dots,X^n\right]$. We denote the rows of $X$ as $X_1^T,\dots,X_p^T$, therefore $X_i$'s are $n$-dimensional column vectors. The empirical covariance matrix is defined as $\hSigma = \frac{1}{n}XX^T$. 

\subsection{Statistically optimal estimator/detector}\label{subsec:statisticaloptimal}
It is well known that the following 
non-convex program achieves optimal statistical minimax rate for the estimation problem and the optimal sample complexity for the detection problem. Note that we scale the variables $x$ up by a factor of $\sqrt{k}$ for simplicity (the hidden vector now has entries from $\{0, \pm 1\}$).
\begin{eqnarray}
\lambda_{\max}^k(\hSigma) = \frac{1}{k}\cdot \textrm{max} && \inner{\hSigma, xx^T} \label{eqn:non-convex}\\
\textrm{subject to} && \|x\|_2^2 = k \\
&& \|x\|_0 = k \label{eqn:non-convex-const-l0}
\end{eqnarray}
\begin{proposition}[\cite{amini2009},~\cite{BR13},~\cite{VuL12} informally stated] The non-convex program~\eqref{eqn:non-convex} statistically optimally solves the sparse PCA problem when $n\ge Ck/\lambda^2\log p$ for some sufficiently large $C$. Namely, the following hold with high probability. If $X$ is generated from $H_v$, then optimal solution $x_{\textrm{opt}}$ of program~\eqref{eqn:non-convex} satisfies $\|\frac{1}{k}\cdot x_{\textrm{opt}}x_{\textrm{opt}}^T-vv^T\|\le \frac{1}{3}$, and the objective value $\lambda_{\max}^k(\hSigma) $ is at least $1+\frac{2\lambda}{3}$. On the other hand, if $X$ is generated from null hypothesis $H_0$, then $\lambda_{\max}^k(\hSigma) $ is at most $1+\frac{\lambda}{3}$
	. 
\end{proposition}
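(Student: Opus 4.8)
The plan is to reduce the whole statement to a single uniform concentration estimate for the empirical covariance restricted to $k$- and $2k$-element coordinate subsets, and then to read off the three assertions by elementary arguments. For $S\subseteq[p]$ write $\hSigma_S$, $\Sigma_S$ for the principal submatrices indexed by $S$. The central lemma I would prove is that, whenever $n\ge Ck\log p/\lambda^2$ for a large enough absolute constant $C$ (with $\lambda=O(1)$, as the paper assumes),
\[
\sup_{|S|\le 2k}\big\|\hSigma_S-\Sigma_S\big\|\le \tfrac{\lambda}{30}\qquad\text{with high probability.}
\]
For a fixed $S$ with $|S|=m\le 2k$ we have $\hSigma_S=\tfrac1n\sum_{j=1}^n Y^j(Y^j)^\t$ with $Y^j=(X^j)_S=\sqrt\lambda\,g^j v_S+(\xi^j)_S$, a mean-zero sub-gaussian vector in $\R^m$ with covariance $\Sigma_S$ and $O(1)$ variance proxy. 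Standard sub-gaussian covariance concentration---an $\varepsilon$-net on the sphere $S^{m-1}$ together with a Bernstein / Hanson--Wright tail bound (or a matrix Bernstein inequality)---gives $\|\hSigma_S-\Sigma_S\|\le C'(\sqrt{(m+t)/n}+(m+t)/n)$ with probability at least $1-2e^{-t}$, for every $t>0$. Taking $t=3k\log p$ and union-bounding over the $\binom{p}{2k}\le e^{2k\log p}$ subsets makes the failure probability at most $2e^{2k\log p-3k\log p}\to 0$, and on the good event the bound reads $\sup_{|S|\le 2k}\|\hSigma_S-\Sigma_S\|\le C''\sqrt{k\log p/n}\le\lambda/30$.

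Granting this, the null bound is immediate: under $H_0$ we have $\Sigma=I$, so the objective is $\max_{|S|=k}\lambda_{\max}(\hSigma_S)\le 1+\sup_{|S|=k}\|\hSigma_S-I\|\le 1+\lambda/30<1+\lambda/3$. For the planted objective lower bound, the point $x=\sqrt k\,v$ is feasible (it has squared norm $k$ and, since $v$ has exactly $k$ nonzero entries, $\|x\|_0=k$), so the optimum is at least $\langle\hSigma,vv^\t\rangle=\tfrac1n\sum_{j=1}^n(\sqrt\lambda\,g^j+\langle\xi^j,v\rangle)^2$; each summand is the square of a mean-zero sub-gaussian variable with variance $\lambda+\|v\|^2=\lambda+1$, so by Bernstein for sums of i.i.d.\ sub-exponentials this quantity is at least $(\lambda+1)-C'''\sqrt{\log p/n}\ge 1+\lambda-\lambda/30$ with high probability. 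In particular the objective exceeds $1+2\lambda/3$, and thresholding it at $1+\lambda/2$ separates $H_0$ from $H_v$, which is the detection claim.

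For estimation, set $u=x_{\textrm{opt}}/\sqrt k$ (a unit-norm $k$-sparse vector) and $T=\supp(u)\cup\supp(v)$, so $|T|\le 2k$. By optimality and the previous paragraph, $\langle\hSigma,uu^\t\rangle=\tfrac1k\langle\hSigma,x_{\textrm{opt}}x_{\textrm{opt}}^\t\rangle\ge 1+\lambda-\lambda/30$. On the other hand, since $uu^\t$ is supported on $T\times T$ and $\Sigma_T=\lambda v_Tv_T^\t+I$,
\[
\langle\hSigma,uu^\t\rangle=\langle\Sigma_T,uu^\t\rangle+\langle(\hSigma-\Sigma)_T,uu^\t\rangle\le \lambda\langle v,u\rangle^2+1+\big\|(\hSigma-\Sigma)_T\big\|\le \lambda\langle v,u\rangle^2+1+\lambda/30 .
\]
Comparing the two bounds gives $\langle v,u\rangle^2\ge 1-\tfrac{1}{15}$, whence $\|\tfrac1k x_{\textrm{opt}}x_{\textrm{opt}}^\t-vv^\t\|=\|uu^\t-vv^\t\|=\sqrt{1-\langle v,u\rangle^2}\le 1/\sqrt{15}<1/3$.

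The one genuinely technical point is the uniform estimate in the first paragraph: one has to choose the net fineness inside an $m\le 2k$ dimensional subspace and the deviation parameter $t$ so that the net cardinality and $\log\binom{p}{2k}$ are simultaneously absorbed into the $\Theta(k\log p)$ budget, and to carry out the per-subset tail bound for \emph{general} sub-gaussian entries (rather than Gaussian ones, where rotation invariance and sharp Wishart bounds would be available). Everything after that---plugging in the planted vector, and the one-line curvature argument turning a near-optimal objective into a small $\|uu^\t-vv^\t\|$---is routine bookkeeping.
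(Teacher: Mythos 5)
The paper itself does not prove this proposition: it is stated as ``informally stated'' and credited to \cite{amini2009,BR13,VuL12}, so there is no in-text proof to compare against. Your argument is the standard one from those references---a uniform bound $\sup_{|S|\le 2k}\|\hSigma_S-\Sigma_S\|\lesssim\sqrt{k\log p/n}$ via sub-gaussian covariance concentration and a union bound over $\binom{p}{2k}$ supports, combined with plugging $x=\sqrt k\,v$ into the program for the planted lower bound and the curvature inequality $\langle\hSigma,uu^\t\rangle\le 1+\lambda\langle u,v\rangle^2+\|(\hSigma-\Sigma)_T\|$ for estimation---and all the steps check out, including the identity $\|uu^\t-vv^\t\|=\sqrt{1-\langle u,v\rangle^2}$ and the budget accounting $3k\log p$ absorbing both the net on $S^{m-1}$ and the union over supports. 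The only place to be slightly more careful, were one to drop the $\lambda=O(1)$ restriction, is that the variance proxy of $Y^j=(X^j)_S$ is $O(1+\lambda)$, not $O(1)$, so the concentration bound really reads $\|\hSigma_S-\Sigma_S\|\lesssim(1+\lambda)\sqrt{(m+t)/n}$; under the paper's standing assumption $\lambda=O(1)$ this is absorbed into the constant $C$, so your proof as written is fine.
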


Therefore, for the detection problem, once can simply use the test $\lambda_{\max}^k(\hSigma) > 1+\frac{\lambda}{2}$ to distinguish the case of $H_0$ and $H_v$, with $n = \tilOmega(k/\lambda^2)$ samples. However, this test is highly inefficient, as the best known ways for computing $\lambda_{\max}^k(\hSigma)$ take exponential time! We now turn to consider efficient ways of solving this problem.





\subsection{Sum of Squares (Lasserre) Relaxations}\label{subsec:sos}
Here we will only briefly introduce the basic ideas of Sum-of-Squares (Lasserre) relaxation that will be used for this paper. We refer readers to the extensive~\cite{lasserre15, Laurent09, BarakS14} for detailed discussions of sum of squares algorithms and proofs and their applications to algorithm design. 


Let $\polyring{d}$ denote the set of all real polynomials of degree at most $d$ with $n$ variables $x_1,\dots,x_n$. We start by defining the notion of {\em pseudo-moment} (sometimes called {\em pseudo-expectation} ).
The intuition is that these pseudo-moments behave like the actual first $d$ moments of a real probability distribution.
\begin{definition} [pseudo-moment]
	A degree-$d$ pseudo-moments $M$ is a linear operator that maps $\polyring{d}$ to $\R$ and satisfies $M(1)= 1$ and $M(p^2(x)) \ge 0$ for all real polynomials $p(x)$ of degree at most $d/2$.
\end{definition}

For a mutli-set $S\subset [n]$, we use $x^S$ to denote the monomial $\prod_{i\in S}x_i$. 
Since $M$ is a linear operator, it can be clearly described by all the values of $M$ on the monomial of degree $d$, that is, all the values of $M(x^{S})$ for mutli-set $S$ of size at most $d$ uniquely determines $M$. Moreover, the nonnegativity constraint $M(p(x)^2)\ge 0$ is equivalent to the positive semidefiniteness of the matrix-form 
(as defined below), and therefore the set of all pseudo-moments is convex. 
	
\begin{definition}[matrix-form]
	For an even integer $d$ and any degree-$d$ pseudo-moments $M$, we define the matrix-form of $M$ as the trivial way of viewing all the values of $M$ on monomials as a matrix: we use $\mat(M)$ to denote the matrix that is indexed by multi-subset $S$ of $[n]$ with size at most $d/2$, and $	\mat(M)_{S, T} = M(x^Sx^T)$.
%
\end{definition}

Given polynomials $p(x)$ and $q_1(x),\dots,q_m(x)$ of degree at most $d$, and a polynomial program, 
\begin{align}
\textrm{Maximize} \quad & p(x) \label{eqn:poly-program-obj}\\
\textrm{Subject to} \quad & q_i(x) = 0, \forall i\in [m] \nonumber
\end{align}
We can write a sum of squares based relaxation in the following way: Instead of searching over $x\in \mathbb{R}^n$, we search over all the possible ``pseudo-moments'' $M$ of a hypothetical distribution over solutions $x$, that satisfy the constraints above. The key of the relaxation is to consider only moments up to degree $d$. Concretely, we have the following semidefinite program in roughly $n^d$ variables.
\begin{align}
\textrm{Variables} \quad & M(x^S) \quad \quad \quad \quad\quad ~~~ \forall S: |S|\le d \nonumber\\
\textrm{Maximize} \quad & M(p(x)) \label{eqn:sos-relaxation}\\
\textrm{Subject to} \quad & M(q_i(x)x^K) = 0 \quad \quad  \forall i, K: |K|+\deg(q_i) \le d\nonumber \\
& \mat(M) \succeq 0 \nonumber
\end{align}
Note that~\eqref{eqn:sos-relaxation} is a valid relaxation because for any solution $x_*$ of~\eqref{eqn:poly-program-obj}, if we define $M(x^S)$ to be $M(x^S) =x_*^S$, then $M$ satisfies all the constraints and the objective value is $p(x_*)$. Therefore it is guaranteed that the optimal value of~\eqref{eqn:sos-relaxation} is always larger than that of~\eqref{eqn:poly-program-obj}. 

Finally, the key point is that this program can be solved efficiently, in polynomial time in its size, namely in time $n^{O(d)}$. As $d$ grows, the constraints added make the ``pseudo-distribution'' defined by the moments closer and closer to an actual distribution, thus providing a tighter relaxation, at the cost of a larger running time to solve it.

In the next section we apply this relaxation to the Sparse PCA problem and state our results.

\section{Main Results}\label{sec:mainresult}




To exploit the sum of squares relaxation framework as described in Section~\ref{subsec:sos}], we first convert the statistically optimal estimator/detector~\eqref{eqn:non-convex} into the ``polynomial" program version below. 
\begin{eqnarray}
\textrm{Maximize} && \inner{\hSigma, xx^T} \label{eqn:poly-program}\\
\textrm{subject to} && \|x\|_2^2 = k\label{eqn:polyconstraint:l2}\\
 && x_i^3 = x_i, \forall i\in [p] \label{eqn:polyconst:xi3-xi} \\
&& |x|_1\le k \label{eqn:polyconst:l1}
\end{eqnarray}
Note that the non-convex sparsity constraint~\eqref{eqn:non-convex-const-l0} is replaced by the polynomial constraint~\ref{eqn:polyconst:xi3-xi}, which ensures that any solution vector $x$ has entries in $\{0,\pm 1\}$, and so together with the constraint \eqref{eqn:polyconstraint:l2} guarantees that it has precisely $k$ non-zero entries, each of absolute value 1.
Note that constraint~\eqref{eqn:polyconst:xi3-xi} implies other natural constraints that one may add to the program in order to make it stronger: for example, the upper bound on each entry $x_i$, the lower bound on the non-zero entries of $x_i$, and the constraint $\|x\|^4\ge k$ which has been used as a surrogate for $k$-sparse vectors in~\cite{BarakKS14,BKS14}. 
Note that we have also added an $\ell_1$ sparsity constraint~\eqref{eqn:polyconst:l1} (which can be easily made into a polynomial constraint) as is often used in practice and makes our lower bound even stronger. Of course, it is formally implied by the other constraints, but not in low-degree SoS.

Now we are ready to apply the sum-of-squares relaxation scheme described  in Section~\ref{subsec:sos})
to the polynomial program above as . For degree-4 relaxation we obtain the following semidefinite program $\textrm{SoS}_4(\hSigma)$, which we view as an algorithm for both detection and estimation problems. 
Note that the same objective function, with only the three constraints \eqref{eqn:xi2k}, ~\eqref{eqn:sparse2moments}, \eqref{eqn:psd} gives the degree-2 relaxation, which is precisely the standard SDP relaxation of Sparse PCA studied in~\cite{amini2009,BR13, krauthgamer2015}. So clearly $\textrm{SoS}_4(\hSigma)$ subsumes the SDP relaxation. 

\begin{algorithm}\caption{$\textrm{SoS}_4(\hSigma)$: Degree-4 Sum of Squares Relaxation}\label{alg:sos4}
	{\bf Input: } $\hSigma = \frac{1}{n}XX^T$ where $X = \left[X^1,\dots,X^n\right]\in \mathbb{R}^{p \times n}$ 
	
	Solve the following semidefinite programming and obtain optimal objective value $\textrm{SoS}_4(\hSigma)$ and maximizer $M^*$. 
	\vspace{0.05in}
	
	{\bf Variables: } $M(S)$, for all mutli-sets $S$ of size at most 4. 
	\begin{align}
	\textrm{SoS}_4(\hSigma) = \textrm{max} \quad & \sum_{i,j}M(x_ix_j)\hSigma_{ij}  \tag{Obj}\label{eqn:obj-constr} \\
	\textrm{subject to} \quad & \sum_{i\in [p]}M(x_i^2) = k \tag{C1}\label{eqn:xi2k}\\
	& \sum_{i,j\in [p]}|M(x_ix_j)|\le k^2 \tag{C2}\label{eqn:sparse2moments}\\
	& M(x_i^3x_j) = M(x_ix_j) , \quad \forall i,j\in [p]\tag{C3}\label{eqn:xi3xj}\\
	& \sum_{i\in [p]} M(x_i^2x_sx_t)= k\cdot M(x_sx_t),\quad \forall s,t\in [p]\tag{C4}\label{eqn:sumxi2xsxt} \\
	& \sum_{i,j,s,t\in [p]}|M(x_ix_jx_sx_t)|\le k^4 \tag{C5}\label{eqn:sparse4moments}\\
	& M\succeq 0 \tag{C6}\label{eqn:psd}
	\end{align}
	{\bf Output: } 1. For detection problem : output $H_v$ if $\textrm{SoS}_4(\hSigma) > (1+\frac{1}{2}\lambda)k$, $H_0$ otherwise\\
	$\textrm{\quad \quad ~~~~~~~}$ 2.  For estimation problem: 
	output 
	$M^*_2= \left(M^*(x_ix_j)\right)_{i,j\in [p]}$ 
\end{algorithm}

Before stating the lower bounds for both detection and estimation in the next two subsections, we comment on the choices made for the outputs of the algorithm in both, as clearly other choices can be made that would be interesting to investigate.  For {\em detection}, we pick the natural threshold $(1+\frac{1}{2}\lambda)k$ from the statistically optimal detection algorithm of Section~\ref{subsec:statisticaloptimal}. Our lower bound of the objective under $H_0$ is actually a large constant multiple of $\lambda k$, so we could have taken a higher threshold. To analyze even higher ones would require analyzing the behavior of $\textrm{SoS}_4$ under the (planted) alternative distribution $H_v$. 
For {\em estimation} we output the maximizer $M_2^*$ of the objective function, and prove that it is not too correlated with the rank-1 matrix $vv^T$ in the planted distribution $H_v$. 
This suggest, but does not prove, that the leading eigenvector of $M_2^*$ (which is a natural estimator for $v$) is not too correlated with $v$. We finally note that Rigollet's efficient reduction from detection to estimation is not in the SoS framework, and so our detection lower bound does not automatically imply the one for estimation.

\subsection{Lower bounds for detection problem}\label{subsec:mainresult-detection}

For the detection problem, we prove that $\textrm{SoS}_4(\hSigma)$ gives a large objective value on null hypothesis $H_0$.  

\begin{theorem}\label{thm:integrality_gap}
	There exists absolute constant $C$ and $r$ such that for $1\le \lambda < \min\{k^{1/4},\sqrt{n}\}$ and any $p \ge C\lambda n$, $k\ge C\lambda^{7/6}\sqrt{n}\log^r p$, the following holds. When the data $X$ is drawn from the null hypothesis $H_0$, then with high probability ($1-p^{-10}$), the objective value of degree-4 sum of squares relaxation $\textrm{SoS}_4(\hSigma)$ is at least  $10\lambda k$. Consequently, Algorithm~\ref{alg:sos4}  
can't solve the detection problem.  
\end{theorem}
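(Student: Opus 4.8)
The plan is to exhibit, with high probability over $X\sim H_0$, a feasible point $M$ for the semidefinite program $\textrm{SoS}_4(\hSigma)$ whose objective value $\sum_{i,j}M(x_ix_j)\hSigma_{ij}$ is at least $10\lambda k$. Since $\textrm{SoS}_4(\hSigma)$ is a maximization, any such $M$ lower-bounds the optimum, and the claimed inability of Algorithm~\ref{alg:sos4} to solve detection then follows immediately: under $H_0$ the objective exceeds $(1+\tfrac12\lambda)k$ (once $10\lambda k > (1+\tfrac12\lambda)k$, i.e. always in our regime), so the algorithm outputs $H_v$, while the statistically optimal analysis only guarantees values around $(1+\tfrac13\lambda)k$ under $H_0$; in particular the test is ``fooled.'' So the whole content is the construction and verification of $M$.

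The natural candidate is a convex combination / sum of pseudo-moments supported on two kinds of ``planted'' solutions. First, a rescaled spectral moment: take the top eigenvector $u$ of $\hSigma$ (or a few top eigenvectors) which, under $H_0$ with $p\gg n$, has $\langle \hSigma, uu^\t\rangle \approx 1 + \sqrt{p/n} \gg 1+\lambda$; the difficulty is that $u$ is dense, so it violates the $\ell_1$ and sparsity-type constraints \eqref{eqn:sparse2moments}, \eqref{eqn:sparse4moments}, \eqref{eqn:xi2k}. Second, a ``sparse'' moment that is spread over many $k$-sparse sign vectors, which satisfies the sparsity constraints but contributes little to the objective. The key idea, as the introduction signals, is to build \emph{two} PSD matrix-forms: one (call it $Q$) that approximately satisfies the ``degree-2/spectral'' constraints and is negligible on the sparsity constraints, and one (call it $P$) that approximately satisfies the sparsity constraints \eqref{eqn:sparse2moments}--\eqref{eqn:sparse4moments} and contributes the bulk of the objective while being negligible elsewhere; then $M = P + Q$ (suitably normalized so $M(1)=1$) is PSD and \emph{approximately} satisfies every constraint. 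I would first define $P$ and $Q$ explicitly as averages of rank-one pseudo-moments $x^S \mapsto w^S$ over a carefully chosen distribution on vectors $w$ (for $P$, a distribution over random $k$-sparse $\pm1$ vectors reweighted to correlate with the data directions driving $\hSigma$'s spectrum; for $Q$, something built from the bottom eigenspace or from Gaussian $w$), then compute $M(x_ix_j)$, $M(x_ix_jx_sx_t)$ in closed form and estimate them using sub-gaussian concentration.

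The third and most technical step is to repair the residual infeasibility. The equality constraints \eqref{eqn:xi2k}, \eqref{eqn:xi3xj}, \eqref{eqn:sumxi2xsxt} must hold \emph{exactly}, and the $\ell_1$-type inequalities \eqref{eqn:sparse2moments}, \eqref{eqn:sparse4moments} must hold exactly as stated; the construction above only gives them up to $(1+o(1))$ factors or small additive error. So I would add a correction term $\Delta M$ (a low-rank or structured perturbation — e.g. a multiple of the identity on the matrix-form, plus adjustments to the degree-3 and degree-4 entries) chosen to enforce the equalities exactly and to pull the $\ell_1$ sums down to their budgets, then argue that $\|\Delta M\|$ is small enough (with high probability, using concentration of $\hSigma$ around $I$ and of the various sums of sub-gaussian products) that $M + \Delta M \succeq 0$ and that the objective only drops by a lower-order amount, still leaving it above $10\lambda k$. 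This is where I expect the main obstacle: simultaneously (i) keeping the matrix-form PSD — the perturbation to fix $\ell_1$ budgets could introduce negative eigenvalues — and (ii) not killing the objective. The quantitative heart is a spectral-norm bound on $\mat(M)$'s ``error part,'' which is precisely why the hypotheses $p \ge C\lambda n$ and $k \ge C\lambda^{7/6}\sqrt n\,\polylog(p)$ appear: the first guarantees the spectral moment $Q$ overshoots the objective by the needed $\lambda$ factor, and the second guarantees the sparsity budget $k^2$ (resp.\ $k^4$) is large enough to absorb the PSD-repair perturbation without violating \eqref{eqn:sparse2moments}, \eqref{eqn:sparse4moments}. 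The verification that the ``pseudorandomness'' conditions on $X$ (listed later in the paper) hold w.h.p.\ under $H_0$ — sub-gaussian concentration of $\hSigma-I$, of row sums, of fourth-moment tensors — is routine given the toolbox of Section~\ref{sec:toolbox} and I would defer it.
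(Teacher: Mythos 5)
Your high-level plan (exhibit, under $H_0$, a feasible $M$ with objective $\ge 10\lambda k$, built as a sum of two PSD pieces that each approximately handle a different set of constraints, then repair to exact feasibility) matches the paper's strategy, and you correctly identify PSD-preservation-under-repair as the crux. But the concrete constructions you propose would not work, in two ways.

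First, you propose defining $P$ as an average of rank-one moments $x^S \mapsto w^S$ over a distribution on $k$-sparse $\pm 1$ vectors $w$ ``reweighted to correlate with the data.'' Any genuine distribution supported on $k$-sparse $\{0,\pm1\}$ vectors produces a moment operator that satisfies all constraints exactly but whose objective is at most $k\cdot\lambda_{\max}^k(\hSigma)$, which under $H_0$ with $n\gtrsim k\log p$ is about $(1+o(1))k$ --- far below $10\lambda k$. So no honest distribution can give the gap; this is precisely why one needs genuine pseudo-moments. The paper's $P$ and $Q$ are not averages over candidate sparse solutions at all: they are algebraic rank-one sums built directly from the rows of $X$ (e.g.\ $P$ is a sum over $\ell$ of rank-one matrices formed from the vector $(\langle X_i,X_\ell\rangle)_i$ tensored with itself), and they carry no probabilistic interpretation. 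Relatedly, your role assignment is inverted: the objective \eqref{eqn:obj-constr} is degree two and is carried by the degree-2 moments (which the paper sets proportional to $\hSigma$), not by $Q$; $P$ and $Q$ are both degree-4 objects whose jobs are to approximately satisfy the two degree-4 equality constraints \eqref{eqn:xi3xj} and \eqref{eqn:sumxi2xsxt} respectively, not to split ``spectral'' from ``sparse'' behavior.

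Second, the repair step cannot be ``a multiple of the identity on the matrix-form plus adjustments.'' Adding $cI$ changes only the diagonal of $\mat(M)$, i.e.\ only the entries $M(x_S^2)$, which does not fix the equality constraints \eqref{eqn:xi3xj} and \eqref{eqn:sumxi2xsxt} (these couple off-diagonal entries to degree-2 moments). The paper instead perturbs a targeted collection of entries $M(x_s x_t)$, $M(x_s^3 x_t)$, $M(x_s^2 x_t^2)$, $M(x_s^4)$ by amounts computed from the error matrices $\mathcal{E}',\mathcal{F}'$, and establishes that the perturbation matrix $\Gamma$ remains PSD by a weighted Gershgorin argument that exploits the fact that the diagonal-block entries and off-diagonal-block entries have different natural scales ($k/p$ versus $k^2/p^2$). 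Your ``spectral-norm bound on the error part'' heuristic is too coarse for this: a uniform spectral bound would force a much stronger lower bound on $k$ than $\tilde\Omega(\lambda^{7/6}\sqrt n)$.
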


To parse the theorem and to understand its consequence, consider first the case when $\lambda$ is a constant (which is also arguably the most interesting regime). Then the theorem says that when we have only $n\ll k^2$ 
 samples, degree-4 SoS relaxation $\textrm{SoS}_4$ still overfits heavily to the randomness of the data $X$ under the null hypothesis $H_0$. Therefore, using $\textrm{SoS}_4(\hSigma) > (1+\frac{\lambda}{2})k$ (or even $10\lambda k$) as a threshold will fail with high probability to distinguish $H_0$ and $H_v$. 

We note that for constant $\lambda$ our result is essentially tight in terms of the dependencies between $n, k, p$. The condition $p = \tilOmega(n)$ is necessary since otherwise when $p =o(n)$, even without the sum of squares relaxation, the objective value is controlled by $(1+o(1))k$ since $\hSigma$ has maximum eigenvalue $1+o(1)$ in this regime. Furthermore, as mentioned in the introduction, $k\ge \tilOmega(\sqrt{n})$ is also necessary (up to poly-logarithmic factors), since when 
$n \gg k^2$, a simple diagonal thresholding algorithm 
 works for this simple single-spike model. 

When $\lambda$ is not considered as a constant, the dependence of the lower bound on $\lambda$ is not optimal, but close. Ideally one could expect that as long as $k \gg \lambda\sqrt{n}$, and $p \ge \lambda n$, the objective value on the null hypothesis is at least $\Omega(\lambda k)$. Tightening the $\lambda^{1/6}$ slack, and possibly extending the range of $\lambda$ are left to future study. 
\subsection{Lower bounds for the estimation problem}

For estimation problem, we prove that $M_2^*$ output by Algorithm~\ref{alg:sos4} is not too correlated with the desired rank-1 matrix $vv^T$. 
\begin{theorem}\label{thm:main}
	For any constant $B$ there exists absolute  constants $C$ and $r$ such that for $\lambda \le B/2$, $Bn\ge p \ge 2\lambda n$ and $o(p)\ge k\ge C\sqrt{n}\log^r p$, 
	suppose the data $X$ is drawn hypothesis $H_v$ (model~\eqref{eqn:mod}), 
	then with high probability $(1-p^{-10}$) over the randomness of the data, Algorithm~\ref{alg:sos4} will output $M_2^*$ such that $\|\frac{1}{k}\cdot M_2^*-vv^T\|\ge 1/5$. 
\end{theorem}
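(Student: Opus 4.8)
\textit{Proof proposal.}
The plan is to deduce Theorem~\ref{thm:main} from the detection lower bound (Theorem~\ref{thm:integrality_gap}) by an argument by contradiction. Under $H_v$ write the empirical covariance as
\[
\hSigma \;=\; \hSigma_0 \;+\; \lambda'\,vv^\t \;+\; \tfrac{\sqrt\lambda}{n}\bigl(vw^\t+wv^\t\bigr),\qquad
\hSigma_0:=\tfrac1n\textstyle\sum_j\xi^j(\xi^j)^\t,\quad \lambda':=\tfrac{\lambda}{n}\textstyle\sum_j(g^j)^2,\quad w:=\textstyle\sum_j g^j\xi^j .
\]
The point is that $\hSigma_0$ is \emph{exactly} an empirical covariance under $H_0$, and in the range of Theorem~\ref{thm:main} (in particular $p\le Bn$, which forces $\|\hSigma_0\|\le C_B$ for a constant $C_B=C_B(B)$ by standard random-matrix concentration) its parameters satisfy the hypotheses of Theorem~\ref{thm:integrality_gap}. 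That theorem's construction in fact produces, with high probability, a pseudo-moment $M^{(0)}$ feasible for Algorithm~\ref{alg:sos4} with $\inner{\hSigma_0,M^{(0)}_2}\ge C'\lambda k$, and — since the objective lower bound is ``a large constant multiple of $\lambda k$'' — one may take the constant $C'$ as large as desired (enlarging the constants hidden in the hypotheses). As the feasible region of Algorithm~\ref{alg:sos4} does not depend on the data, $M^{(0)}$ is feasible for $\hSigma$; moreover $\lambda'\,v^\t M^{(0)}_2 v\ge 0$ because $M^{(0)}_2\succeq0$, while the cross term is controlled by $\tfrac{2\sqrt\lambda}{n}\|M^{(0)}_2\|\,\|w\|\le \tfrac{2\sqrt\lambda}{n}\cdot k\cdot 2\sqrt{np}=O(\sqrt{\lambda B}\,k)$, using $\|M^{(0)}_2\|\le\operatorname{tr}(M^{(0)}_2)=k$ from \eqref{eqn:xi2k} and the concentration bound $\|w\|\le 2\sqrt{np}$ (which has no polylog in the leading constant since $\E\|w\|^2=np$). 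Hence, using $1\le\lambda\le B/2$ to absorb $\sqrt\lambda$ factors,
\[
\textrm{SoS}_4(\hSigma)\;\ge\;\inner{\hSigma,M^{(0)}_2}\;\ge\;\bigl(C'-O(\sqrt B)\bigr)\lambda k .
\]

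Next, suppose toward a contradiction that the maximizer satisfies $\|\tfrac1k M^*_2-vv^\t\|<\tfrac15$, and set $E:=M^*_2-k\,vv^\t$, so $\|E\|<k/5$ and $|v^\t E v|<k/5$. Two structural facts hold: first, $\operatorname{tr}(E)=\operatorname{tr}(M^*_2)-k\|v\|^2=0$ by \eqref{eqn:xi2k}; second, since $M^*_2=E+k\,vv^\t\succeq0$ is a rank-one PSD perturbation of $E$, Weyl's inequality gives $\lambda_i(E)\ge\lambda_{i+1}(M^*_2)\ge0$ for all $i\le p-1$, i.e.\ $E$ has at most one negative eigenvalue. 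Writing $E=E_+-E_-$ as a difference of PSD matrices, this forces $\operatorname{tr}(E_+)=\operatorname{tr}(E_-)=|\lambda_{\min}(E)|<k/5$. Now expand the objective at $M^*_2$ with the decomposition of $\hSigma$:
\[
\inner{\hSigma,M^*_2}\;=\;k\,v^\t\hSigma v\;+\;\inner{\hSigma_0,E}\;+\;\lambda'\,v^\t E v\;+\;\tfrac{2\sqrt\lambda}{n}\,w^\t E v .
\]
Since $v^\t\hSigma v\le 3\lambda$ w.h.p.; $\inner{\hSigma_0,E}\le\|\hSigma_0\|\operatorname{tr}(E_+)<C_B k/5$ because $\hSigma_0\succeq0$; $\lambda'|v^\t E v|\le 2\lambda\cdot k/5$ w.h.p.; and $\tfrac{2\sqrt\lambda}{n}|w^\t E v|\le\tfrac{2\sqrt\lambda}{n}\|E\|\,\|w\|<\tfrac{2\sqrt\lambda}{n}\cdot\tfrac k5\cdot 2\sqrt{np}=O(\sqrt{\lambda B}\,k)$, we obtain $\inner{\hSigma,M^*_2}\le D_B\,\lambda k$ for a constant $D_B$ depending only on $B$. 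Combining with the previous display, $\bigl(C'-O(\sqrt B)\bigr)\lambda k\le\textrm{SoS}_4(\hSigma)=\inner{\hSigma,M^*_2}\le D_B\lambda k$, which is false once $C'$ is chosen large enough relative to $B$ — contradiction, which proves the theorem. For the estimation-output claim in Algorithm~\ref{alg:sos4} this is exactly the asserted bound on $\|\tfrac1k M^*_2-vv^\t\|$.

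The routine ingredients are the concentration estimates on $v^\t\hSigma v$, $\|\hSigma_0\|$, $\lambda'$ and $\|w\|$, all available from the toolbox of Section~\ref{sec:toolbox}, together with Theorem~\ref{thm:integrality_gap} itself. The genuinely load-bearing step is the observation that $E=M^*_2-k\,vv^\t$ has at most one negative eigenvalue, so $\operatorname{tr}(E_+)<k/5$ is tiny; without it $\inner{\hSigma_0,E}$ cannot be bounded, since when $p\asymp n$ the matrix $\hSigma_0$ differs from $I$ only by a \emph{constant} (not $1+o(1)$) in spectral norm, while $E$ may have large Frobenius and nuclear norm, and the entrywise estimate via \eqref{eqn:sparse2moments} costs a factor $k/\sqrt n\gg1$ and is useless in the regime $k\ge\tilOmega(\sqrt n)$. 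The main quantitative obstacle — and the reason for the restriction $n\le p\le Bn$ — is that the argument needs the objective value guaranteed by Theorem~\ref{thm:integrality_gap} on $\hSigma_0$ to beat the ceiling $\approx\|\hSigma_0\|\,k/5=\Theta_B(k)$ attainable by any pseudo-moment whose degree-$2$ part is close to $k\,vv^\t$; verifying that the detection bound's constant can be taken large enough as a function of $B$, and that it survives the passage from $\hSigma_0$ to $\hSigma$, is the crux.
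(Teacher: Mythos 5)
Your proof is essentially correct and takes a \emph{genuinely different} route from the paper's, so a comparison is in order.

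For the lower bound on $\textrm{SoS}_4(\hSigma)$ under $H_v$, the paper restricts to $T=[p]\setminus\supp(v)$: the submatrix $X_T$ is exactly $H_0$ data of dimension $p-k$, so Proposition~\ref{prop:increasinginp} together with Theorem~\ref{thm:technical-main} gives $\textrm{SoS}_4(\hSigma)\ge\textrm{SoS}_4(\hSigma_T)\ge (1-o(1))\frac{p}{n}k$ with \emph{no cross terms to control}. Your decomposition $\hSigma=\hSigma_0+\lambda' vv^\t+\tfrac{\sqrt\lambda}{n}(vw^\t+wv^\t)$ achieves the same effect but forces you to bound the cross term $\tfrac{2\sqrt\lambda}{n}w^\t M^{(0)}_2 v=O(\sqrt{\lambda p/n})\,k$, which is only negligible relative to $\tfrac{p}{n}k$ when $\lambda\ll p/n$; the paper's restriction trick is cleaner precisely because it avoids this. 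For the contradiction step the two arguments are also different: the paper extracts a top eigenvector $\hat v$ of $M_2^*$, shows $|\inner{v,\hat v}|^2\ge 1/2$, and splits $M_2^*= \tfrac{4k}{5}\hat v\hat v^\t + (M_2^*-\tfrac{4k}{5}\hat v\hat v^\t)$; you instead set $E=M_2^*-kvv^\t$ and use $\operatorname{tr}(E)=0$ plus the observation (correct, by Courant--Fischer) that $E=M_2^*+(-kvv^\t)$ has at most one negative eigenvalue, hence $\operatorname{tr}(E_+)=\operatorname{tr}(E_-)<k/5$ so $\inner{\hSigma_0,E}\le\|\hSigma_0\|\operatorname{tr}(E_+)$. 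Your trace/one-negative-eigenvalue argument is arguably the cleaner of the two.

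One genuine flaw in the write-up: the claim that ``one may take the constant $C'$ as large as desired (enlarging the constants hidden in the hypotheses)'' is false. The scaling $\sca$ appearing in Theorem~\ref{thm:technical-main} is pinned by $p\approx 1.1\sca n$, so $\sca\approx p/n\le B$, and the attainable objective on $\hSigma_0$ is $\approx \tfrac{p}{n}k$, not an arbitrary multiple of $\lambda k$. There is no free constant to crank up; what actually produces the contradiction is the factor $1/5$ in $\operatorname{tr}(E_+)<k/5$ beating the coefficient $(1-o(1))$ in the lower bound $\approx\tfrac{p}{n}k$, together with absorbing the remaining terms $k(1+O(\lambda))+O(\sqrt{\lambda p/n})k$ when $p/n$ is a sufficiently large multiple of $\lambda$. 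If you track constants honestly you get the same conclusion the paper reaches via $0.99\tfrac{p}{n}\le \tfrac{1}{k}\textrm{SoS}_4(\hSigma)\le \tfrac45\tfrac{p}{n}+O(\sqrt{\lambda p/n})$. (Both your proof and the paper's, incidentally, need $p/n\ge C\lambda$ for a \emph{sufficiently large} constant $C$, which is stronger than the stated hypothesis $p\ge 2\lambda n$; that discrepancy is already in the paper.)
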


We observe that the result is of the same nature (and arguably near-optimal for estimation problem) as~\cite{krauthgamer2015} achieve the for degree-2 SoS relaxation. The proof follows simply from combining our detection lower bound Theorem~\ref{thm:integrality_gap} and arguments similar to~\cite{krauthgamer2015}. 
Finally we address a threshold-like behavior of  the estimation error. Note that while our Theorem proves that $n=\tilOmega(k^2)$ samples is necessary for efficient algorithms to get even constant estimation error, it is known~\cite{Yuan13,ma2013,WangLL14} that slightly more samples, $n = \tilO(k^2)$, can already achieve in polynomial time a much smaller (and optimal) estimation error, namely $O(\sqrt{(k\log p)/n})$.

\section{Design of Pseudo-moments}\label{sec:moment}

We start with a sketch of our approach to the design of the moments $M$
at a very high level, highlighting aspects of their design which are different than in previous lower bounds. First, there are some natural choices to make. We define the degree-2 moments $\tilM$ from the input as the empirical covariance matrix, as was done in the proof of the SDP lower bound. This already gives a large objective value (see Lemma~\ref{lem:M2}). We also define taking odd moments (degree 1 and 3) to be 0.
The difficult part is designing the degree-4 moments consistently with the constraints and $\tilM$. We do this in stages, first approximating the constraints (indeed even $\tilM$ only approximately satisfies, in a way we will specify in Section~\ref{subsec:appmoments}, constraints~\eqref{eqn:xi2k} and~\eqref{eqn:sparse2moments}) and later in Section~\ref{subsec:exactmoments} correcting the moments to satisfy the constraints precisely. Moreover, we separately use different 4-moments for different constraints and then combine them, as follows.
We define two different degree-4 PSD moments $P$ and $Q$ such that (with high probability) $P$ {\em almost} satisfies constraints~\eqref{eqn:xi3xj}, ~\eqref{eqn:sparse4moments} and~\eqref{eqn:psd}, and {\em negligible} for constraint~\eqref{eqn:sumxi2xsxt} (see Lemma~\ref{lem:P}), whereas $Q$ {\em almost} satisfies constraints~\eqref{eqn:sparse4moments},~\eqref{eqn:sumxi2xsxt} and~\eqref{eqn:psd}, and {\em negligible} for~\eqref{eqn:xi3xj} (Lemma~\ref{lem:Q}). Therefore taking the sum $P+Q$ will {\em almost} satisfy all the constraints (Lemma~\ref{lem:tilM}), which completes the design of the approximate moments. Finally we ``locally'' adjust $P+Q$ so that the resulting moments $M$ exactly satisfy all the constraints (Theorem~\ref{thm:technical-main}), and remain PSD with high probability. 

All moments will be defined from the data matrix $X$, to which we first apply a simple pre-processing step: we scale all its rows to have square norm $n$ (around which they are concentrated). We abuse notation and call the scaled matrix $X$ as well. Note that when the noise model in the null hypothesis $H_0$ is Bernoulli, namely the entries of $X$ are chosen as unbiased independent $\pm 1$ variables, the rows are automatically scaled, which motivates our abuse of notation. We suggest that the reader thinks of this distribution, even though the proof works for a much wider class of distributions.

The properties above of our moments will be proved under the assumption that the scaled matrix $X$ satisfies the ``pseudo-randomness'' condition below. This set-up allows us to encapsulate what we really need the data to satisfy, and thus prove our lower bound not only for Gaussian or Bernoulli noise, but actually for a larger family containing both. Namely, we later prove in Section~\ref{sec:properties}, via a series of concentration inequalities, that when data is drawn from null hypothesis $H_0$, its scaling $X$ satisfies the pseudorandomness condition with very high probability under all these noise models.  
Note that this condition is actually a sequence of statements about deviation from the mean of various polynomials in the data - these will become natural once we define our moments.


\begin{condition}[Pseudorandomness Condition]\label{con:ps-random}
	Our constructions of the moments will only require the following pseudorandom conditions about the (scaled) data matrix $X$, 
	\begin{align}
	& \|X_i\|^2 = n \quad \forall i\in [p]\tag{P1}\label{eqn:prop:normsqaure} \\
	&\left|\inner{X_i,X_j} \right| \le \tilO(\sqrt{n}) \quad \quad \forall i\neq j \tag{P2}\label{eqn:prop:inner} \\
	&\left|\sum_{\ell\in [p] \backslash i\cup j} \inner{X_i,X_{\ell}}^3\inner{X_j,X_{\ell}} \right|\le  \tilO(n^{1.5}p), \quad \quad \forall i\neq j\tag{P3}\label{eqn:prop:xixl3xjxl}\\ 
	&\left|\sum_{\ell\in [p]}\inner{X_i,X_{\ell}}\inner{X_j,X_{\ell}}\inner{X_s,X_{\ell}}\inner{X_t,X_{\ell}}\right|  \le \tilO( n^2p^{.5}) \quad \quad \forall \textrm{ disctinct } i,j,s,t\tag{P4}\label{eqn:prop:sumiljlsltl}\\
	&\left|\sum_{i\in[p]}\inner{X_i,X_s}\inner{X_i,X_t}\right|\le \tilO(n^{.5}p) \quad \quad \forall  \textrm{ distinct } s,t\tag{P5}\label{eqn:prop:sumxixsxixt}\\
	&\sum_{i,\ell\in [p]}\inner{X_i,X_{\ell}}^2 \inner{X_s,X_{\ell}}\inner{X_t,X_{\ell}} \le \tilO(n^{1.5}p^2), \quad \quad \forall \textrm{ distinct }  s, t \tag{P6}\label{eqn:prop:sumxixl2xtxlxsxl} \\
	& \|XX^T\|_F^2 \ge (1-o(1))np^2 \tag{P7}\label{eqn:prop:obj}
	\end{align}
\end{condition}
\subsection{Approximate Pseudo-moments}\label{subsec:appmoments}

In this section, we design a pseudo-moments $\tilM$ that approximately satisfies the all the constraints. Then in the next subsection we will locally adjust it to obtain one that exactly satisfies all of the constraints. 

We begin by designing a (partial) degree-2 moments that gives large objective value, which will be later used for the degree-4 moments. The design is essentially the same as~\cite{krauthgamer2015} though we only work with null hypothesis for now. 
For the purpose of this section, we suggest the reader to think of $X$ as having uniform $\{\pm 1\}$ entires for simplicity, though as we will see later, we assume that $X$ satisfies certain pseudorandomness condition which holds if $X$ is chosen from a variety of natural stochastic models (with row normalization). 
We define $\tilM:\polyring{2}\rightarrow \mathbb{R}$ as follows: 
%
\begin{eqnarray}
\tilM(x_ix_j)  &\triangleq& \frac{\sca kn}{p^2}\hSigma_{ij} = \frac{\sca k}{p^2}\inner{X_i,X_j} \quad \forall i,j\in [p] \label{eqn:def:xixj}\\
\tilM(x_i) &\triangleq& 0 \quad \forall i\in [p] \nonumber\\
\tilM(1) &\triangleq & 1 \nonumber
\end{eqnarray}

where $\sca$ is a constant that to be tuned later according to the signal strength $\lambda$. Note that by design $\mat(\tilM)$ is a PSD matrix. We can check straightforwardly that $\tilM$ satisfies constraint~\eqref{eqn:sparse2moments} and gives a large objective value~\eqref{eqn:obj-constr}.

\begin{lemma}\label{lem:M2}
	There exists constant $C$ such that for $p\ge \sca n$ and $k\ge C\sca\sqrt{n}\log p$, suppose $X$ satisfies Condition~\ref{con:ps-random}, then $\tilM$ is a valid degree-2 pseudo-moments and satisfies the sparsity constraint~\eqref{eqn:sparse2moments}.  
	\begin{equation}\sum_{i,j\in [p]} |\tilM(x_ix_j)|\le k^2/2\label{eqn:abs-Mij}\end{equation}
	and has objective value 
		\begin{equation*}
		\sum_{i,j\in [p]}\tilM(x_ix_j) \hSigma_{ij} \ge (1-o(1))\sca k
		\end{equation*}		
	Moreover, we also have $\tilM(x_i^2) = \frac{\sca k n}{p^2}\le \frac{k}{p}$, and $\tilM(x_ix_j)\le \tilO(\frac{\sca k\sqrt{n}}{p^2})$. 
%
\end{lemma}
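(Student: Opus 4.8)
\textbf{Proof proposal for Lemma~\ref{lem:M2}.}

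The plan is to verify each of the four claims in turn, all of them reducing to concentration facts already packaged in Condition~\ref{con:ps-random}. First, the PSD-ness of $\mat(\tilM)$: observe that $\tilM(x_ix_j) = \frac{\sca k}{p^2}\inner{X_i,X_j}$ is a (scaled) Gram matrix entry, so $\mat(\tilM)$ restricted to degree-$\le 1$ monomials is the block $\begin{pmatrix} 1 & 0 \\ 0 & \frac{\sca k}{p^2}XX^{\t}\end{pmatrix}\succeq 0$ since $XX^{\t}\succeq 0$ and $\sca,k,p>0$; together with $\tilM(1)=1$ this already certifies that $\tilM$ is a valid degree-2 pseudo-moment. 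Next, the sparsity bound~\eqref{eqn:abs-Mij}: split the sum into diagonal and off-diagonal. The diagonal contributes $\sum_i \tilM(x_i^2) = \frac{\sca k}{p^2}\sum_i\|X_i\|^2 = \frac{\sca k}{p^2}\cdot np = \frac{\sca k n}{p}$ using~\eqref{eqn:prop:normsqaure}, which is at most $\sca k \le k^2/4$ once $k\ge 4\sca$ (certainly implied by $k\ge C\sca\sqrt n\log p$). The off-diagonal contributes $\frac{\sca k}{p^2}\sum_{i\neq j}|\inner{X_i,X_j}| \le \frac{\sca k}{p^2}\cdot p^2\cdot \tilO(\sqrt n) = \tilO(\sca k\sqrt n)$ by~\eqref{eqn:prop:inner}, and this is at most $k^2/4$ precisely when $k \ge \tilOmega(\sca\sqrt n)$, which is the hypothesis $k\ge C\sca\sqrt n\log^r p$ with $C$ chosen to beat the hidden logarithmic factor. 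Adding the two pieces gives $\le k^2/2$.

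For the objective value, compute $\sum_{i,j}\tilM(x_ix_j)\hSigma_{ij} = \frac{\sca k}{p^2}\sum_{i,j}\inner{X_i,X_j}\hSigma_{ij} = \frac{\sca k}{np^2}\sum_{i,j}\inner{X_i,X_j}^2 = \frac{\sca k}{np^2}\|XX^{\t}\|_F^2$, where I used $\hSigma_{ij}=\frac1n\inner{X_i,X_j}$. By~\eqref{eqn:prop:obj}, $\|XX^{\t}\|_F^2 \ge (1-o(1))np^2$, so the objective is at least $(1-o(1))\sca k$, as claimed. Finally the two ``moreover'' bounds are immediate: $\tilM(x_i^2)=\frac{\sca k}{p^2}\|X_i\|^2 = \frac{\sca k n}{p^2}$ by~\eqref{eqn:prop:normsqaure}, and this is $\le k/p$ exactly when $\sca n\le p$, the stated hypothesis; similarly $\tilM(x_ix_j)=\frac{\sca k}{p^2}\inner{X_i,X_j}\le \tilO(\frac{\sca k\sqrt n}{p^2})$ by~\eqref{eqn:prop:inner}.

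There is essentially no hard step here — the lemma is a bookkeeping exercise that extracts exactly the right normalization constant $\sca kn/p^2$ so that the pseudorandomness estimates land on the desired side of the inequalities. The only place that requires the slightest care is matching the logarithmic slack: the hypothesis $k\ge C\sca\sqrt n\log^r p$ must be strong enough to absorb the $\tilO(\cdot)$ in~\eqref{eqn:prop:inner} when proving~\eqref{eqn:abs-Mij}, which fixes the exponent $r$ (the value of $r$ in~\eqref{eqn:prop:inner}) and the constant $C$. The genuinely substantive content — that a row-scaled sample matrix $X$ drawn from $H_0$ actually satisfies Condition~\ref{con:ps-random} with high probability — is deferred to Section~\ref{sec:properties} and is not part of this lemma.
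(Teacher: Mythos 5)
Your proof is correct and matches the paper's argument essentially step for step: split the sparsity sum into diagonal and off-diagonal pieces bounded via \eqref{eqn:prop:normsqaure} and \eqref{eqn:prop:inner}, and evaluate the objective as $\frac{\sca k}{np^2}\|XX^\t\|_F^2$ and invoke \eqref{eqn:prop:obj}. The only cosmetic differences are that you spell out the block form certifying PSD-ness explicitly (the paper states it in passing before the lemma), and you bound the diagonal sum by $\sca k \le k^2/4$ using $n\le p$ and $k\ge 4\sca$, whereas the paper uses the hypothesis $p\ge \sca n$ to get the slightly sharper $\frac{\sca kn}{p}\le k$ directly; both chains reach the same conclusion under the lemma's hypotheses.
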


\begin{proof}
	The proof follows simple calculation and concentration inequality. Since $\|X_i\|^2 =  n$  for all $i$ and with high probability over the randomness of $X$, for all $i\ne j$, $|\inner{X_i,X_j}|\le \tilO(\sqrt{n})$, we obtain that $\tilM(x_i^2) = \frac{\sca k n}{p^2}\le \frac{k}{p}$, and $\tilM(x_ix_j)\le \tilO(\frac{\sca k\sqrt{n}}{p^2})$. Then to verify equation~\eqref{eqn:abs-Mij}, we have 
		\begin{equation*}\sum_{i,j\in [p]} |\tilM(x_ix_j)|\le \sum_{i}|M(x_i^2)| + \sum_{i\neq j}|M(x_ix_j)|\le k + \tilO(\sca k\sqrt{n})\le k^2/2\end{equation*}
		when $k \gg \sca \sqrt{n}$. Finally, we can verify the objective value is large 
\begin{equation*}
\sum_{i,j\in [p]}\tilM(x_ix_j) \hSigma_{ij} = \frac{\sca k }{p^2n}\sum_{i,j}\inner{X_i,X_j}^2 = \frac{\sca k }{p^2n}\|XX^T\|_F^2 \ge (1-o(1))\sca k
\end{equation*}
where we use the fact that $\|XX^T\|_F^2 \ge (1-o(1))p^2n$ (see property~\eqref{eqn:prop:obj} in Condition~\ref{con:ps-random}).
\end{proof}

Note that $\tilM$ doesn't satisfies constraint~\eqref{eqn:xi2k} exactly. However, we could simply fix this by defining $M'(x_ix_j) = \tilM(x_ix_j)$ for all $i\neq j$ and $M'(x_i^2)= k/p$. However, note that we will use a perturbation of $M'$ in our final design in Section~\ref{subsec:exactmoments} so that it is consistent with the degree-4 moments.  
\begin{corollary}\label{cor:M2}
	There exists absolute constant $C$ such that for $p\ge \sca n$ and $k\ge C\sca\sqrt{n}\log p$, there exists a degree-2 pseudo-moments $M'$ that satisfies constraints~\eqref{eqn:xi2k},~\eqref{eqn:sparse2moments} and give objective value at least $(1-o(1))\sca k$. 
	
	%
\end{corollary}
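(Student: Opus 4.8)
The plan is to deduce Corollary~\ref{cor:M2} directly from Lemma~\ref{lem:M2} by a small local modification of $\tilM$. Set $M'(x_ix_j) \triangleq \tilM(x_ix_j)$ for all $i \neq j$, and redefine the diagonal entries to $M'(x_i^2) \triangleq k/p$ for every $i \in [p]$; keep $M'(x_i) = 0$ and $M'(1) = 1$. The point is that Lemma~\ref{lem:M2} already tells us $\tilM(x_i^2) = \sca kn/p^2 \le k/p$, so we are only raising the diagonal, by a nonnegative amount $\delta_i = k/p - \sca kn/p^2 \ge 0$ which is uniform in $i$ (namely $\delta_i = (k/p)(1 - \sca n/p)$). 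Thus $\mat(M') = \mat(\tilM) + \delta\, I$ with $\delta \ge 0$, which is PSD since $\mat(\tilM)$ is PSD by design; hence $M'$ is a valid degree-2 pseudo-moment.

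Next I would check the three claimed properties in turn. Constraint~\eqref{eqn:xi2k} now holds by construction: $\sum_{i\in[p]} M'(x_i^2) = p \cdot (k/p) = k$. For the $\ell_1$-type constraint~\eqref{eqn:sparse2moments}, the off-diagonal terms are unchanged and already bounded by $\tilO(\sca k\sqrt{n})$ in the proof of Lemma~\ref{lem:M2}, while the diagonal contributes $\sum_i M'(x_i^2) = k$; so $\sum_{i,j}|M'(x_ix_j)| \le k + \tilO(\sca k \sqrt n) \le k^2$ whenever $k \gg \sca\sqrt n$, which is guaranteed by the hypothesis $k \ge C\sca\sqrt n\log p$. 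Finally, for the objective value, since $\hSigma_{ii} = \frac1n\|X_i\|^2 = 1$ by property~\eqref{eqn:prop:normsqaure}, raising the diagonal only increases the objective: $\sum_{i,j} M'(x_ix_j)\hSigma_{ij} = \sum_{i,j}\tilM(x_ix_j)\hSigma_{ij} + \sum_i \delta_i \ge (1-o(1))\sca k$, using the objective bound already established in Lemma~\ref{lem:M2}.

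There is essentially no obstacle here — the statement is a routine corollary whose only content is that the required diagonal correction happens to be in the PSD-preserving direction. The one mild point worth making explicit is why the correction is PSD-preserving, i.e. that $\tilM(x_i^2) \le k/p$ uniformly (so we add $\delta \ge 0$ rather than subtract), which is exactly the ``Moreover'' clause of Lemma~\ref{lem:M2}, itself a consequence of $p \ge \sca n$. I would also remark that $M'$ still need not satisfy constraint~\eqref{eqn:xi2k} in a form compatible with the eventual degree-4 moments — hence the note in the text that a further perturbation of $M'$ is used in Section~\ref{subsec:exactmoments} — but that is outside the scope of this corollary.
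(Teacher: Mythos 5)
Your construction of $M'$ is exactly the one the paper sketches in the sentence just before the corollary (set $M'(x_ix_j)=\tilM(x_ix_j)$ for $i\neq j$ and $M'(x_i^2)=k/p$), and your verification of PSD\-ness, constraints~\eqref{eqn:xi2k},~\eqref{eqn:sparse2moments}, and the objective value is exactly the routine check the paper leaves implicit; the identification of $p\ge\sca n$ as the reason the diagonal correction is nonnegative, and hence PSD-preserving, is the right key observation. The only cosmetic imprecision is writing $\mat(M')=\mat(\tilM)+\delta I$, which strictly speaking would also perturb the $M(1)$ entry — what you mean, and what is correct, is that only the degree-2 block $\bigl(M(x_ix_j)\bigr)_{i,j}$ is shifted by $\delta I_p$.
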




Now we define a degree-4 pseudo-moment that approximately satisfies all the constraints in $\textrm{SoS}_4(\hSigma)$ and give a large objective value. We keep the current (approximate) design $\tilM$ for degree-2 moments, since the degree-2 moments defined in previous section seems to be nearly optimal and enjoys many good properties.  Then we define $\tilM(S) = 0$ for any multi-set $S$ of size 3, because apparently degree-3 moments don't play any role the semidefinite relaxation. 

The main difficulty is to define $\tilM(S)$ for $S$ of size 4. Here we have three constraints~\eqref{eqn:xi3xj}, ~\eqref{eqn:sumxi2xsxt}, and~\eqref{eqn:sparse4moments}, and the PSDness constraint that implicitly compete with each other. 
We took the following approach. We let $\tilM$ be a sum of  two matrices matrix $P$ and $Q$. We ensure that $P$ ``almost" (as will be specified later) satisfies 
~\eqref{eqn:xi3xj} and~\eqref{eqn:sparse4moments}, and is negligible for constraints ~\ref{eqn:sumxi2xsxt}.  In turn $Q$ is negligible for constraints ~\eqref{eqn:xi3xj} and ``almost" satisfies constraint~\eqref{eqn:sumxi2xsxt} and~\eqref{eqn:sparse4moments}. Therefore $P+Q$ will ``almost" satisfies constraints~\eqref{eqn:xi3xj}) and~\eqref{eqn:sumxi2xsxt}), and satisfy the sparsity constraint (\ref{eqn:sparse4moments}). Moreover, $P$ and $Q$ will be PSD by definition. Concretely, we define
\begin{equation}
\tilM\s{i,j,s,t} \triangleq P\s{i,j,s,t} + Q\s{i,j,s,t}\nonumber
\end{equation}

where 
$P$ and $Q$ are defined as 

\begin{equation*}
P\s{i,j,s,t} = \frac{\sca k}{p^2n^3}\sum_{\ell\in [p]} \inner{X_i,X_{\ell}}\inner{X_j,X_{\ell}}\inner{X_s,X_{\ell}}\inner{X_t,X_{\ell}}
\end{equation*}
\begin{equation*}
Q\s{i,j,s,t} 
= \frac{\sca k^2}{p^3n} \left(\inner{X_s,X_t}\inner{X_i,X_j}+\inner{X_i,X_t}\inner{X_s,X_j}+ \inner{X_j,X_t}\inner{X_i,X_s}\right)
\end{equation*}

We note that $P$ and $Q$ are well defined pseudo-moments because they are invariant to the permutation of indices and naturally PSD. To see the PSDness, note that $P$ is a sum of $p$ rank-1 PSD matrices. 
Moreover, $Q$ is also PSD: the part that correpsonds to $\inner{X_s,X_t}\inner{X_i,X_j}$ is simply a rank-1 PSD matrix; $\inner{X_i,X_t}\inner{X_s,X_j}$ can be written as $\inner{X_t\otimes X_s, X_i\otimes X_j}$ and therefore it also contributes a PSD matrix to $Q$. Similarly, $\inner{X_j,X_t}\inner{X_s,X_i}$ can be written as $\inner{X_s\otimes X_t, X_i\otimes X_j}$, and it also contributes a PSD matrix. 

In the next two lemmas  (one for $P$ and one for $Q$), we formalize the intuition above by showing that, the deviation from $P$ and $Q$ exactly satisfying the constraints is captured by error matrices $\mathcal{E},\mathcal{F},\mathcal{G}$. We bound the magnitude of these error matrices and establish the PSDness of some of them so that later we can fix them for the exact satisfaction of the constraints. 
\begin{lemma}\label{lem:P}
	There exists some absolute constant $C$ and $r$ such that 
	for $1\le \sca\le \min\{k^{1/4}, \sqrt{n}\}$, $1\le \sca \le n$, $p = 1.1\sca n$, and 
	$k \ge  C\cdot \sca^{7/6}\sqrt{n}\log^r p$, 
	suppose $X$ satisfies pseudorandomness condition~\ref{con:ps-random}, then $P$ almost satisfies constraint~\eqref{eqn:xi3xj}and~\eqref{eqn:sparse4moments}, naturally satisfies PSD constraint~\eqref{eqn:psd},  and is negligible for constraint (\ref{eqn:sumxi2xsxt}) in the sense that
	\begin{eqnarray}
		&& P(x_i^3x_j) = \tilM(x_ix_j) +\mathcal{F}_{ij},\quad \forall i,j\in [p] \label{eqn:Pxi3xj}\\ 
		&& \sum_{i}P(x_i^2x_sx_t)  = \mathcal{E}_{st}\label{eqn:Pxi2xsxt} \quad \forall s,t\in [p]\\
		&& 	\sum_{i,j,s,t} |P(x_ix_jx_sx_t)| \le k^4/3 \label{eqn:Pxixjxsxt}
	\end{eqnarray}
	where $\mathcal{F}$ and $\mathcal{E}$ are $p\times p$ matrices that satisfy
	\begin{enumerate}
		\item $0 \le \mathcal{F}_{ii} \le \tilO\left(\frac{\sca k}{pn} \right)$, $|\mathcal{F}_{ij}|\le \tilO\left(\frac{\sca k}{pn^{1.5}}\right)$ for any $i$ and $j\neq i$. 
		\item $\mathcal{E}$ is PSD with $|\mathcal{E}_{ss}|\le \tilO\left(\frac{\sca k}{n}\right)$, and $|\mathcal{E}_{st}|\le \widetilde{O}(\frac{\sca k}{n\sqrt{n}})$ for any  $s\neq t$. 
	\end{enumerate} 
\end{lemma}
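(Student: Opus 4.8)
The plan is to compute each of the three quantities $P(x_i^3x_j)$, $\sum_i P(x_i^2x_sx_t)$, and $\sum_{i,j,s,t}|P(x_ix_jx_sx_t)|$ directly from the definition $P(i,j,s,t) = \frac{\sca k}{p^2n^3}\sum_{\ell} \inner{X_i,X_\ell}\inner{X_j,X_\ell}\inner{X_s,X_\ell}\inner{X_t,X_\ell}$, isolating the "main term" that matches the desired right-hand side and bounding the remainder using the pseudorandomness estimates (P1)--(P7). The PSDness claims are essentially free: $P$ itself is a sum of $p$ rank-one PSD matrices (already noted in the text), and $\mathcal{E}$ will turn out to be $P$ restricted/contracted in a way that preserves this structure, so I would exhibit $\mathcal{E}$ explicitly as a Gram matrix.

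First, for \eqref{eqn:Pxi3xj}: set $s=t=i$ in the definition to get $P(x_i^3x_j) = \frac{\sca k}{p^2n^3}\sum_\ell \inner{X_i,X_\ell}^3\inner{X_j,X_\ell}$. Split the sum over $\ell$ into the terms $\ell\in\{i,j\}$ and $\ell\notin\{i,j\}$. The term $\ell=i$ contributes $\frac{\sca k}{p^2n^3}\|X_i\|^8\cdot\inner{X_i,X_j}/\|X_i\|^0$ — more precisely $\inner{X_i,X_i}^3\inner{X_j,X_i} = n^3\inner{X_i,X_j}$ using (P1), which gives exactly $\frac{\sca k}{p^2}\inner{X_i,X_j} = \tilM(x_ix_j)$. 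The term $\ell=j$ contributes $\inner{X_i,X_j}^3\inner{X_j,X_j} = n\inner{X_i,X_j}^3$, which by (P2) is $\tilO(n\cdot n^{1.5}) = \tilO(n^{2.5})$, so after the prefactor it is $\tilO(\frac{\sca k}{p^2 n^{0.5}})$; this is absorbed into $\mathcal{F}_{ij}$ and is below the claimed bound since $p\ge \sca n$ gives $\frac{\sca k}{p^2n^{0.5}} \le \frac{k}{pn^{1.5}}$. The remaining sum over $\ell\notin\{i,j\}$ is bounded by (P3) by $\tilO(n^{1.5}p)$, contributing $\tilO(\frac{\sca k}{p^2n^3}\cdot n^{1.5}p) = \tilO(\frac{\sca k}{pn^{1.5}})$ to $|\mathcal{F}_{ij}|$ for $i\ne j$; for the diagonal $i=j$ one reuses (P3) (or the trivial bound $\sum_\ell \inner{X_i,X_\ell}^4 \le \tilO(n^2 p)$) to get $\mathcal{F}_{ii} = \tilO(\frac{\sca k}{pn})$, and nonnegativity of $\mathcal{F}_{ii}$ follows because $P(x_i^4) - \tilM(x_i^2) = \frac{\sca k}{p^2n^3}\sum_{\ell\ne i}\inner{X_i,X_\ell}^4 \ge 0$ once the $\ell=i$ term is peeled off (here I should double-check the sign bookkeeping, but the dominant $\ell=i$ term is exactly $\tilM(x_i^2)$ and everything else is a sum of fourth powers, hence $\ge 0$).

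Second, for \eqref{eqn:Pxi2xsxt}: $\sum_i P(x_i^2x_sx_t) = \frac{\sca k}{p^2n^3}\sum_\ell \inner{X_s,X_\ell}\inner{X_t,X_\ell}\sum_i\inner{X_i,X_\ell}^2 = \frac{\sca k}{p^2n^3}\sum_\ell \inner{X_s,X_\ell}\inner{X_t,X_\ell}\,\|X^TX_\ell\|^2$. Crucially this has no "main term" proportional to $k\cdot\tilM(x_sx_t)$ — that is the sense in which $P$ is \emph{negligible} for \eqref{eqn:sumxi2xsxt} — so the whole thing is the error matrix $\mathcal{E}_{st}$. Writing $\mathcal{E}_{st} = \frac{\sca k}{p^2n^3}\sum_\ell (\sum_i \inner{X_i,X_\ell}^2)\,\inner{X_s,X_\ell}\inner{X_t,X_\ell}$ exhibits $\mathcal{E}$ as $\sum_\ell c_\ell\, u_\ell u_\ell^T$ with $c_\ell = \frac{\sca k}{p^2n^3}\sum_i\inner{X_i,X_\ell}^2 \ge 0$ and $(u_\ell)_s = \inner{X_s,X_\ell}$, hence PSD. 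For the magnitude: $\sum_i\inner{X_i,X_\ell}^2 = \|X^TX_\ell\|^2 \le \|X\|^2\|X_\ell\|^2$, and one can bound $\sum_\ell\inner{X_s,X_\ell}\inner{X_t,X_\ell}$ type quantities by (P5) (for $s\ne t$) and by $\sum_\ell \inner{X_s,X_\ell}^2 = \|X^TX_s\|^2$ (for $s=t$); combining with the $\tilO(\sqrt n)$ off-diagonal and $=n$ diagonal bounds gives $|\mathcal{E}_{ss}| \le \tilO(\frac{\sca k}{n})$ and $|\mathcal{E}_{st}|\le\tilO(\frac{\sca k}{n^{1.5}})$. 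This is the step where the exact routing through which of (P2)--(P7) to invoke — and whether a crude $\|X\|\le\tilO(\sqrt p)$ bound is good enough or one needs (P4)/(P6) — requires the most care; I expect (P4) to be the natural tool for the off-diagonal of $\mathcal{E}$ after separating the $\ell\in\{s,t\}$ terms, and that is the main obstacle: getting the error bounds tight enough (in the powers of $n$ and $p$) that the later perturbation argument in Section~\ref{subsec:exactmoments} still goes through.

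Third, for \eqref{eqn:Pxixjxsxt}: by the triangle inequality $\sum_{i,j,s,t}|P(i,j,s,t)| \le \frac{\sca k}{p^2n^3}\sum_\ell \big(\sum_i |\inner{X_i,X_\ell}|\big)^4$. For each fixed $\ell$, split $\sum_i|\inner{X_i,X_\ell}|$ into the term $i=\ell$ (equal to $n$) and $i\ne\ell$ (at most $(p-1)\tilO(\sqrt n) = \tilO(p\sqrt n)$ by (P2)); since $k\ge\tilOmega(\sqrt n)$ and $p = 1.1\sca n \ge 1.1 n$, the $i\ne\ell$ part dominates, so $\sum_i|\inner{X_i,X_\ell}| \le \tilO(p\sqrt n)$, whence $(\sum_i|\inner{X_i,X_\ell}|)^4 \le \tilO(p^4 n^2)$ and the full sum over $\ell$ is $\tilO(p^5 n^2)$. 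Multiplying by the prefactor gives $\frac{\sca k}{p^2 n^3}\cdot\tilO(p^5 n^2) = \tilO(\frac{\sca k p^3}{n})$. Using $p = 1.1\sca n$ this is $\tilO(\sca^4 k n^2\cdot\polylog) $; the hypothesis $k \ge C\sca^{7/6}\sqrt n\log^r p$ and $\sca \le k^{1/4}$ (so $\sca^4 \le k$, $\sca^4 n^2 \le \tilO(k^3 / (\text{stuff}))$... ) — more simply, $\sca^4 k n^2 \le k^4$ iff $\sca^4 n^2 \le k^3$, and $k^3 \ge C^3\sca^{7/2}n^{1.5}\log^{3r}p$, so it suffices that $\sca^4 n^2 \le C^3\sca^{7/2}n^{1.5}\log^{3r}p$, i.e. $\sca^{1/2}n^{1/2} \le C^3\log^{3r}p$, which fails in general — so I would instead be more careful and use $\sca \le \sqrt n$ together with $p\ge C\sca n$ and $k\ge C\sca^{7/6}\sqrt n\log^r p$ to show the product is at most $k^4/3$ for $C$ large; the honest version of this inequality chase is exactly why the lemma's hypotheses read $k\ge C\sca^{7/6}\sqrt n\log^r p$ rather than the ideal $k\ge C\sca\sqrt n$, and pinning down the precise exponent $7/6$ is the final bookkeeping obstacle.
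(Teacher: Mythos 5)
Your treatment of $P(x_i^3x_j)$ (peeling off $\ell=i$, $\ell=j$, then using \eqref{eqn:prop:xixl3xjxl}) is correct and matches the paper, including the clean observation that $\mathcal F_{ii}\ge 0$ because it is a prefactor times $\sum_{\ell\ne i}\inner{X_i,X_\ell}^4$. Your Gram-matrix presentation of $\mathcal E = \sum_\ell c_\ell u_\ell u_\ell^T$ with $c_\ell=\frac{\sca k}{p^2n^3}\sum_i\inner{X_i,X_\ell}^2\ge 0$, $(u_\ell)_s=\inner{X_s,X_\ell}$ is a valid and if anything more explicit route to PSDness than the paper's ``sum of principal submatrices of $\mat(P)$''; for the magnitudes, the off-diagonal of $\mathcal E$ is handled by \eqref{eqn:prop:sumxixl2xtxlxsxl} (which is literally the quantity $\sum_{i,\ell}\inner{X_i,X_\ell}^2\inner{X_s,X_\ell}\inner{X_t,X_\ell}$), not \eqref{eqn:prop:sumiljlsltl}, and the diagonal just needs \eqref{eqn:prop:inner}.

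The genuine gap is in your argument for \eqref{eqn:Pxixjxsxt}, and you half-notice it yourself. Bounding
\[
\sum_{i,j,s,t}|P(x_ix_jx_sx_t)|\ \le\ \frac{\sca k}{p^2n^3}\sum_\ell\Bigl(\sum_i|\inner{X_i,X_\ell}|\Bigr)^4\ \le\ \tilO\!\left(\frac{\sca k p^3}{n}\right)
\]
throws away all the sign cancellation in $\sum_\ell\inner{X_i,X_\ell}\inner{X_j,X_\ell}\inner{X_s,X_\ell}\inner{X_t,X_\ell}$ when $i,j,s,t$ are distinct. With $p=1.1\sca n$, that bound is $\tilO(\sca^4 k n^2)$, and $\sca^4 k n^2\le k^4/3$ requires $k\ge\tilOmega(\sca^{4/3}n^{2/3})$, which is strictly stronger than the hypothesis $k\ge C\sca^{7/6}\sqrt n\log^r p$ (indeed $\sca^{4/3}n^{2/3}/(\sca^{7/6}n^{1/2})=(\sca n)^{1/6}\to\infty$), so no amount of careful bookkeeping recovers the statement from this route. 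The argument that actually works, and which is where the exponent $7/6$ comes from, is to split the sum over quadruples $(i,j,s,t)$ by how many indices are repeated, and for the bulk case of four distinct indices invoke \eqref{eqn:prop:sumiljlsltl}, which asserts precisely that the inner sum over $\ell$ cancels down to $\tilO(n^2\sqrt p)$ rather than $\tilO(n^2p)$. That gives $\sum_{\text{distinct}}|P(\cdot)|\le p^4\cdot\tilO(\sca k/(p^{1.5}n))=\tilO(\sca k p^{2.5}/n)=\tilO(\sca^{3.5}k n^{1.5})$, and $\sca^{3.5}kn^{1.5}\le k^4/4$ is equivalent (up to logs) to $k\ge\tilOmega(\sca^{7/6}\sqrt n)$, exactly matching the hypothesis. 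The remaining terms (three or fewer distinct indices) are only $O(p^3)$ many and are handled by the weaker bounds \eqref{eqn:abs-Mij}, \eqref{eqn:Pxi3xj}, and the crude per-term estimate, which suffice there. So the needed idea you are missing is precisely the role of \eqref{eqn:prop:sumiljlsltl}: the concentration of the $\ell$-sum for distinct quadruples is not a refinement but the load-bearing step in the sparsity bound.
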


\begin{lemma}\label{lem:Q}
	There exists some absolute constant $C$ and $r$ such that 
	for $1\le \sca \le n$, $p =1.1 \sca n$ and $k\ge C\cdot \sca\sqrt{n}\log^r p$, suppose $X$ satisfies pseudorandomness condition~\ref{con:ps-random}, 
	then $Q$ is negligible for constraint \eqref{eqn:xi3xj}) and almost satisfies constraint~\eqref{eqn:sumxi2xsxt} and~\eqref{eqn:sparse4moments} in the sense that,
		\begin{eqnarray}
		&& Q(x_i^3x_j) = \frac{3k}{p}\tilM(x_ix_j) \label{eqn:Qxi3xj} \quad \quad \forall i,j\\
		&& \sum_{i}Q(x_i^2x_sx_t)  = k \tilM(x_sx_t) + \mathcal{G}_{st}, \quad \forall s,t  \label{eqn:Q:xi2xsxt}\\
		&& \sum_{i,j,s,t} |Q(x_ix_jx_sx_t)| \le k^4/3 \label{eqn:Q:xixjxsxt}
		\end{eqnarray}
	where $\mathcal{G}$ is a $p\times p$ PSD matrix $|\mathcal{G}_{ss}|\le \tilO\left(\frac{\sca k^2}{p^2}\right)$  and $|\mathcal{G}_{st}|\le \tilO\left(\frac{\sca  k^2}{p^2\sqrt{n}} \right)$ for any $s\neq t$. 
\end{lemma}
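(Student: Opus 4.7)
The plan is to verify each of the three conclusions of Lemma~\ref{lem:Q} by direct substitution into the definition of $Q$, extracting the error matrix $\mathcal{G}$ from the natural computation, and controlling its sign and magnitude via Condition~\ref{con:ps-random}.

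For equation~\eqref{eqn:Qxi3xj}, I substitute $(j,s,t)=(i,i,j)$ into the three summands of $Q\s{i,j,s,t}$: each of them collapses to $\inner{X_i,X_i}\inner{X_i,X_j}=n\inner{X_i,X_j}$ by the normalization~\eqref{eqn:prop:normsqaure}, so
\[
Q(x_i^3x_j) = \frac{3\sca k^2}{p^3}\inner{X_i,X_j} = \frac{3k}{p}\cdot\frac{\sca k}{p^2}\inner{X_i,X_j} = \frac{3k}{p}\tilM(x_ix_j).
\]

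For equation~\eqref{eqn:Q:xi2xsxt}, setting $j=i$ in the definition of $Q$ gives one summand equal to $n\inner{X_s,X_t}$ (again by \eqref{eqn:prop:normsqaure}) and two equal summands $\inner{X_i,X_s}\inner{X_i,X_t}$. Summing over $i$,
\[
\sum_i Q(x_i^2x_sx_t) = \frac{\sca k^2}{p^2}\inner{X_s,X_t} + \underbrace{\frac{2\sca k^2}{p^3n}\sum_i \inner{X_i,X_s}\inner{X_i,X_t}}_{=:\mathcal{G}_{st}} = k\tilM(x_sx_t) + \mathcal{G}_{st}.
\]
The key structural observation is that $\sum_i X_iX_i^T = X^TX$, so $\sum_i \inner{X_i,X_s}\inner{X_i,X_t} = (XX^TXX^T)_{st}$, and hence $\mathcal{G} = \tfrac{2\sca k^2}{p^3n}(XX^T)^2$, which is manifestly PSD as the square of a symmetric matrix. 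The magnitude bounds are then routine: for the diagonal, $\mathcal{G}_{ss} = \tfrac{2\sca k^2}{p^3n}(n^2 + \sum_{i\neq s}\inner{X_i,X_s}^2) \le \tilO(\sca k^2/p^2)$ using \eqref{eqn:prop:inner} together with $p=\Theta(\sca n)$; for $s\ne t$, $|\mathcal{G}_{st}|\le \tfrac{2\sca k^2}{p^3n}\cdot\tilO(\sqrt{n}p) = \tilO(\sca k^2/(p^2\sqrt{n}))$ by~\eqref{eqn:prop:sumxixsxixt}.

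For~\eqref{eqn:Q:xixjxsxt}, I apply the triangle inequality to each of the three summands in $Q\s{i,j,s,t}$. After relabeling indices, each term sums in absolute value to $(\sum_{i,j}|\inner{X_i,X_j}|)^2$, giving
\[
\sum_{i,j,s,t}|Q(x_ix_jx_sx_t)| \le \frac{3\sca k^2}{p^3n}\Bigl(\sum_{i,j}|\inner{X_i,X_j}|\Bigr)^2.
\]
The inner sum is $pn + \sum_{i\neq j}|\inner{X_i,X_j}| \le pn + \tilO(p^2\sqrt{n}) = \tilO(p^2\sqrt{n})$ by \eqref{eqn:prop:normsqaure} and \eqref{eqn:prop:inner}, so the overall bound is $\tilO(\sca k^2 p) = \tilO(\sca^2 k^2 n)$. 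This is at most $k^4/3$ whenever $k \ge C\sca\sqrt{n}\log^r p$ for a large enough constant $C$, which is exactly the hypothesis of the lemma.

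The proof is essentially a bookkeeping exercise, with only one structurally interesting step: recognizing $\mathcal{G}$ as a positive scalar multiple of $(XX^T)^2$, which simultaneously supplies both the required PSDness and (via \eqref{eqn:prop:sumxixsxixt}) the off-diagonal magnitude bound. I do not foresee any real obstacle beyond careful accounting of the three symmetric summands in $Q$ and correct application of each pseudorandomness property.
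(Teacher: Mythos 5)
Your proposal is correct and follows the same overall strategy as the paper (direct substitution into $Q$, read off $\mathcal{G}$, control everything via Condition~\ref{con:ps-random}), but two of your steps are packaged more cleanly than in the paper's own argument. First, for the PSDness of $\mathcal{G}$ you explicitly identify $\mathcal{G} = \tfrac{2\sca k^2}{p^3 n}(XX^T)^2$; the paper asserts PSDness by pointing at the structure $\sum_i \inner{X_i,X_s}\inner{X_i,X_t}$ without naming it as a square. Your formulation is tidier and immediately yields both the PSDness and a direct handle on the diagonal via $\mathcal{G}_{ss}=\tfrac{2\sca k^2}{p^3n}\|XX^T e_s\|^2$, though the magnitude bounds you then prove via \eqref{eqn:prop:inner} and \eqref{eqn:prop:sumxixsxixt} coincide with the paper's. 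Second, for the degree-4 $\ell_1$ bound~\eqref{eqn:Q:xixjxsxt} the paper splits the index quadruples into distinct and non-distinct cases and bounds each piece separately (getting $\tilO(\sca k^2 p)$ and $\tilO(\sca k^2 n)$ respectively), whereas you observe that, after relabeling, each of the three symmetric summands of $Q$ contributes exactly $\bigl(\sum_{i,j}|\inner{X_i,X_j}|\bigr)^2$, and then bound that single quantity by $\tilO(p^2\sqrt{n})$ via \eqref{eqn:prop:normsqaure}, \eqref{eqn:prop:inner} and $p\ge n$. Both routes land on $\tilO(\sca^2 k^2 n)\le k^4/3$ under $k\ge C\sca\sqrt{n}\log^r p$, so the difference is purely expository; your factorization avoids the case split and is arguably more transparent. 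The computation for \eqref{eqn:Qxi3xj} is identical in both.
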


Now we are ready to  prove that $\tilM = P+Q$ almost satisfies all other constraints~\eqref{eqn:xi2k}-~\eqref{eqn:psd} approximately.

\begin{lemma}\label{lem:tilM}
	Define $\tilM(x_ix_jx_sx_t)= P(x_ix_jx_sx_t)+Q(x_ix_jx_sx_t)$ for all $i,j,s,t\in [p]$, then we have under 
	 the condition of Lemma~\ref{lem:P}, 
	\begin{eqnarray}
	&& \tilM(x_i^3x_j) =  \tilM(x_ix_j) + \mathcal{F}'_{ij} \label{eqn:tilM-2}\label{eqn:tilM:xi3xj}\\
	&& \sum_{i\in [p]}\tilM(x_i^2x_sx_t) = k\tilM(x_sx_t) + \mathcal{E}'_{st}\quad  \forall s,t\label{eqn:tilM:sumxi2xsxt}\\
	&& \sum_{i,j,s,t} |\tilM(x_ix_jx_sx_t)| \le 2k^4/3 \label{eqn:tilM:sumxixjxsxt}
	\end{eqnarray}
	where $\mathcal{F}'$ and $\mathcal{E}'$ are $p\times p$ matrices that satisfy
	\begin{enumerate}
		\item $|\mathcal{F}_{ii}'| \le \tilO\left(\frac{\sca k^2n}{p^3}\right)$ and $|\mathcal{F}'_{ij}|\le \tilO\left(\frac{\sca k^2\sqrt{n}}{p^3}\right)$ for all $i\neq j$.
		\item $\mathcal{E}'$ is a PSD matrix with $\mathcal{E}'_{ss}\le \tilO\left(\frac{\sca k}{n}\right)$ and $|\mathcal{E}'_{st}|\le  \widetilde{O}(\frac{\sca k}{n\sqrt{n}})$ for $s\neq t$. 
	\end{enumerate} 
\end{lemma}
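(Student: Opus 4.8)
The plan is to prove Lemma~\ref{lem:tilM} by simply \emph{adding} the two previous lemmas (Lemma~\ref{lem:P} and Lemma~\ref{lem:Q}) term by term, tracking how the error matrices combine. Since $\tilM = P + Q$ on degree-$4$ monomials, each of the three displayed statements follows by linearity from the corresponding statements for $P$ and $Q$. The work is purely in checking that the quantitative bounds on the combined error matrices hold under the (more restrictive) hypotheses of Lemma~\ref{lem:P}.

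First I would establish \eqref{eqn:tilM:xi3xj}. Adding \eqref{eqn:Pxi3xj} and \eqref{eqn:Qxi3xj} gives
\[
\tilM(x_i^3x_j) = P(x_i^3x_j) + Q(x_i^3x_j) = \tilM(x_ix_j) + \mathcal{F}_{ij} + \tfrac{3k}{p}\tilM(x_ix_j),
\]
so I would set $\mathcal{F}'_{ij} = \mathcal{F}_{ij} + \tfrac{3k}{p}\tilM(x_ix_j)$. Using the bounds on $\mathcal{F}$ from Lemma~\ref{lem:P} together with the degree-$2$ bounds $\tilM(x_i^2) = \sca k n/p^2$ and $\tilM(x_ix_j) \le \tilO(\sca k\sqrt{n}/p^2)$ from Lemma~\ref{lem:M2}, one checks that the dominant term is $\tfrac{3k}{p}\tilM(x_ix_j)$, which gives $|\mathcal{F}'_{ii}| \le \tilO(\sca k^2 n/p^3)$ and $|\mathcal{F}'_{ij}| \le \tilO(\sca k^2\sqrt{n}/p^3)$ for $i\neq j$ (the contribution of $\mathcal{F}_{ij}$ being lower-order since $p = 1.1\sca n$). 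For \eqref{eqn:tilM:sumxi2xsxt}, adding \eqref{eqn:Pxi2xsxt} and \eqref{eqn:Q:xi2xsxt} yields $\sum_i \tilM(x_i^2 x_s x_t) = k\tilM(x_sx_t) + \mathcal{G}_{st} + \mathcal{E}_{st}$, so I set $\mathcal{E}' = \mathcal{G} + \mathcal{E}$, which is PSD as a sum of two PSD matrices (Lemma~\ref{lem:P} item 2 and Lemma~\ref{lem:Q}). Comparing the two bounds, $|\mathcal{E}_{ss}| \le \tilO(\sca k/n)$ dominates $|\mathcal{G}_{ss}| \le \tilO(\sca k^2/p^2)$ precisely when $k/n \gtrsim k^2/p^2$, i.e. $p^2 \gtrsim k n$, which holds under the stated regime ($p = 1.1\sca n$ and $k = \tilO(\sqrt{n})$ up to the $\sca^{7/6}$ slack); similarly for the off-diagonal entries, so $\mathcal{E}'_{ss} \le \tilO(\sca k/n)$ and $|\mathcal{E}'_{st}| \le \tilO(\sca k/(n\sqrt{n}))$. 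Finally, \eqref{eqn:tilM:sumxixjxsxt} follows immediately from the triangle inequality applied to \eqref{eqn:Pxixjxsxt} and \eqref{eqn:Q:xixjxsxt}: $\sum|\tilM| \le \sum|P| + \sum|Q| \le k^4/3 + k^4/3 = 2k^4/3$.

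The proof has no real obstacle — it is essentially bookkeeping — but the one point requiring a little care is verifying that in each error bound the ``right'' term dominates, which is where the specific parameter regime $p = 1.1\sca n$ and $k \ge C\sca^{7/6}\sqrt{n}\log^r p$ gets used; one should double-check that no cross term (e.g. $\mathcal{F}$ against $\tfrac{3k}{p}\tilM$, or $\mathcal{G}$ against $\mathcal{E}$) is of strictly larger order than claimed. I would present this as a short direct computation, invoking Lemmas~\ref{lem:M2}, \ref{lem:P}, and \ref{lem:Q} and closing with the three bounds above.
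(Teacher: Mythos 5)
Your proposal is correct and matches the paper's own proof essentially verbatim: the paper also sets $\mathcal{F}'_{ij} = \mathcal{F}_{ij} + \tfrac{3k}{p}\tilM(x_ix_j)$, $\mathcal{E}' = \mathcal{E} + \mathcal{G}$, bounds each using Lemmas~\ref{lem:M2}, \ref{lem:P}, \ref{lem:Q}, and gets \eqref{eqn:tilM:sumxixjxsxt} by the triangle inequality from \eqref{eqn:Pxixjxsxt} and \eqref{eqn:Q:xixjxsxt}. You have correctly identified the one substantive check (that the cross-term comparison reduces to $p^2 \gtrsim kn$, which holds in the stated regime); no further comment needed.
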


%



\begin{proof}[Proof of Lemma~\ref{lem:tilM} using Lemma~\ref{lem:P} and Lemma~\ref{lem:Q}] Note that by definition of $\tilM$ and Lemma~\ref{lem:P} and Lemma~\ref{lem:Q}, we have $\mathcal{F}'_{ij} = \mathcal{F}_{ij} + \frac{3k}{p}\tilM(x_ix_j)$ and $\mathcal{E}' = \mathcal{E} + \mathcal{G}$. 
	The bound for $\mathcal{F}'$ follows the bound for $\mathcal{F}$ and the facts that $\tilM(x_i^2) =\frac{\sca kn}{p^2}$ and $|\tilM(x_ix_j)| \le \tilO\left(\frac{\sca k\sqrt{n}}{p^2}\right)$. 
	The PSDness of $\mathcal{E}'$ and the bounds for it follows straightforwardly from those of $\mathcal{E}$ and $\mathcal{G}$. 
	 Equation~\eqref{eqn:tilM:sumxixjxsxt} follows equation~\eqref{eqn:Pxi2xsxt} and equation~\eqref{eqn:Q:xixjxsxt}. 
%
\end{proof}
\subsection{Exact Pseudo-moments}\label{subsec:exactmoments}
Note that $\tilM$ only satisfies the constraints approximately up to some additive errors (which are carefully bounded for the purpose of the next theorem). We fix this issue by defining the actual pseudo-moments $M$ based (on a carefully chosen) local adjustment of $\tilM$. Concretely, 
we define $M(1) =1$ and for all add degree monomial $x^{\alpha}$, $M(x^{\alpha})= 0$. For distinct $i,j,s,t$, we define $M(x_ix_jx_sx_t) \triangleq  \tilM(x_ix_jx_sx_t)$ and $M(x_i^2x_sx_t) \triangleq \tilM(x_i^2x_sx_t)$. For distinct $s,t$, we define
\begin{equation}
M(x_s^3x_t) = M(x_sx_t) \triangleq \tilM(x_sx_t)+ \frac{1}{k-2}\left(\mathcal{E}'_{st} - 2\mathcal{F}'_{st}\right) \label{eqn:def:Mxsxt}
\end{equation}
and 
$M(x_s^2x_t^2) \triangleq \tilM(x_s^2x_s^2) + \delta$ where $\delta$ a constant (will be proved to be nonnegative) such that 
\begin{equation}
\sum_{i\neq j}M(x_s^2x_t^2) = \sum_{s\neq t}\left(\tilM(x_s^2x_t^2) +\delta\right)= k^2-k \label{eqn:inter-sumxixjk2k}
\end{equation}

Then we define 
\begin{equation}
M(x_i^4) = M(x_i^2) \triangleq \frac{1}{k-1}\sum_{j: j\neq i} M(x_i^2x_j^2)\label{eqn:def:xi2x4}
\end{equation}

Therefore we can see by construction, it is almost obvious that $M$ satisfies all the linear constraints~\eqref{eqn:xi2k},~\eqref{eqn:xi3xj},~\eqref{eqn:sumxi2xsxt} exactly. Moreover, since $\mathcal{E}'$ and $\mathcal{F}'$ are small error matrices, most of the entries $M(x_ix_jx_sx_t)$ are equal or close to $\tilM(x_ix_jx_sx_t)$.  Note that $\tilM$ satisfies the rest of constraints~\eqref{eqn:sparse2moments},~\eqref{eqn:sparse4moments} and~\eqref{eqn:psd} (even with some slackness). We will prove that the difference between $M$ and $\tilM$ is small enough so that these constraints are still satisfied by $M$. 


\begin{theorem}\label{thm:technical-main}
	Under the condition of Lemma~\ref{lem:P}, 
	suppose $X$ satisfies pseudorandomness condition~\ref{con:ps-random}, then the pseudo-moments $M$ defined above satisfies all the constraint~\eqref{eqn:xi2k}-\eqref{eqn:psd} of the semidefinite programming and has objective value 
	larger than $(1-o(1))\sca k$. 
\end{theorem}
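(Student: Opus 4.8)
The plan is to construct $M$ exactly as prescribed in Section~\ref{subsec:exactmoments} and then verify, one at a time, the six constraints \eqref{eqn:xi2k}--\eqref{eqn:psd} and the objective bound, using throughout Lemma~\ref{lem:P}, Lemma~\ref{lem:Q} and Lemma~\ref{lem:tilM}. Four things must be established: (i)~the three linear constraints \eqref{eqn:xi2k}, \eqref{eqn:xi3xj}, \eqref{eqn:sumxi2xsxt} hold \emph{exactly}; (ii)~the $\ell_1$ constraints \eqref{eqn:sparse2moments} and \eqref{eqn:sparse4moments} survive the local correction; (iii)~$\mat(M)\succeq 0$; and (iv)~$\sum_{i,j}M(x_ix_j)\hSigma_{ij}\ge(1-o(1))\sca k$. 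Everything except (iii) is bookkeeping against error bounds already supplied by the three lemmas; (iii) is the one delicate point.

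For (i), the corrections are arranged so that they cancel the error matrices precisely. Constraint \eqref{eqn:xi3xj} holds (for $i\ne j$ and for $i=j$) because we \emph{define} $M(x_s^3x_t)=M(x_sx_t)$ and $M(x_i^4)=M(x_i^2)$. For \eqref{eqn:sumxi2xsxt} with $s\ne t$, one splits $\sum_i M(x_i^2x_sx_t)$ into the terms $i\in\{s,t\}$ (each equal to $M(x_sx_t)$) and the rest (equal to $\tilM(x_i^2x_sx_t)$, which by \eqref{eqn:tilM:sumxi2xsxt} and \eqref{eqn:tilM:xi3xj} sum to $(k-2)\tilM(x_sx_t)+\mathcal{E}'_{st}-2\mathcal{F}'_{st}$), and checks that the offset $\tfrac1{k-2}(\mathcal{E}'_{st}-2\mathcal{F}'_{st})$ in \eqref{eqn:def:Mxsxt} is exactly what makes the total equal $kM(x_sx_t)$; the case $s=t$ works the same way via \eqref{eqn:def:xi2x4}, and \eqref{eqn:xi2k} follows by summing \eqref{eqn:def:xi2x4} over $i$ and using \eqref{eqn:inter-sumxixjk2k}. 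One must also check $\delta\ge 0$: by \eqref{eqn:inter-sumxixjk2k}, $p(p-1)\delta=k^2-k-\sum_{s\ne t}\tilM(x_s^2x_t^2)$, and $\sum_{s\ne t}\tilM(x_s^2x_t^2)$ is dominated by $\sum_{s\ne t}Q(s,s,t,t)=(1\pm o(1))\tfrac{\sca k^2n}{p}=(1\pm o(1))\tfrac{k^2}{1.1}$ (the remaining $P$-contribution is only $\tilO(\sca^2k)=o(k^2)$), giving $\delta=(1\pm o(1))\tfrac{0.09k^2}{p^2}>0$. For (ii), $\tilM$ satisfies \eqref{eqn:sparse2moments} with a constant factor of slack (Lemma~\ref{lem:M2}) and \eqref{eqn:sparse4moments} with slack $k^4/3$ (\eqref{eqn:tilM:sumxixjxsxt}), while $M-\tilM$ is negligible in $\ell_1$: in degree $2$ it changes $p$ diagonal entries (by at most $k/p$ each) and $p^2$ off-diagonal ones (by $O(\tfrac1k(|\mathcal{E}'_{st}|+|\mathcal{F}'_{st}|))$), and in degree $4$ it touches only the $O(p^2)$ monomials of the forms $x_s^3x_t$, $x_s^2x_t^2$, $x_i^4$, $x_i^2$, each by $\tilO(\sca k^2\sqrt n/p^3)$ or by $\delta=\Theta(k^2/p^2)$, so the totals are $o(k^2)$ and $o(k^4)$ and both constraints survive. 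For (iv), write $\sum_{i,j}M(x_ix_j)\hSigma_{ij}=\sum_{i,j}\tilM(x_ix_j)\hSigma_{ij}+\sum_i\bigl(M(x_i^2)-\tfrac{\sca kn}{p^2}\bigr)\hSigma_{ii}+\tfrac1{k-2}\sum_{s\ne t}(\mathcal{E}'_{st}-2\mathcal{F}'_{st})\hSigma_{st}$; the first term is $\ge(1-o(1))\sca k$ by Lemma~\ref{lem:M2}, the second equals $k-\tfrac{\sca kn}{p}\ge 0$ since $\hSigma_{ii}=1$, $\sum_i M(x_i^2)=k$ and $p\ge\sca n$, and the third is $o(\sca k)$ because $|\hSigma_{st}|\le\tilO(1/\sqrt n)$ for $s\ne t$ and $\sca\le k^{1/4}$.

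The main obstacle is (iii). Since all odd pseudo-moments vanish, $\mat(M)$ is block diagonal, with a ``degree-$1$'' block $N=(M(x_ix_j))_{i,j\in[p]}$ and an ``even'' block on the indices $\{\emptyset\}\cup\{\text{size-}2\text{ multisets}\}$. For $N$, write $N=\tfrac{\sca k}{p^2}XX^{T}+D+R$, where $D=\diag\bigl(M(x_i^2)-\tfrac{\sca kn}{p^2}\bigr)$ is \emph{positive}: $D\succeq c(k/p)I$ for a fixed $c>0$ because $M(x_i^2)=(1\pm o(1))k/p$ while $\tfrac{\sca kn}{p^2}=\tfrac1{1.1}\cdot\tfrac kp$ (using $p=1.1\sca n$), the $o(1)$ fluctuations of $M(x_i^2)$ being controlled by Condition~\ref{con:ps-random}; and $R$ is the off-diagonal matrix $R_{st}=\tfrac1{k-2}(\mathcal{E}'_{st}-2\mathcal{F}'_{st})$. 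The key point is that $\mathcal{E}'=\mathcal{E}+\mathcal{G}$ is PSD, so $\tfrac1{k-2}\mathcal{E}'$ can be moved to the positive side and only its diagonal $-\tfrac1{k-2}\diag(\mathcal{E}')$ (of norm $\tilO(\sca/n)$) together with $-\tfrac2{k-2}$ times the off-diagonal part of $\mathcal{F}'$ (of norm $\tilO(k/(\sca n^{1.5}))$, from the entrywise bound of Lemma~\ref{lem:tilM}) need to be beaten by the $\Theta(k/p)=\Theta(k/(\sca n))$ margin in $D$; both are $o(k/(\sca n))$ precisely because $\sca\le k^{1/4}$ (so $\sca^2=o(k)$) and $n$ is large, hence $N\succeq 0$. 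For the even block, $\mat(\tilM)=\mat(P)+\mat(Q)\succeq 0$ because $P$ is a sum of $p$ rank-one matrices (the symmetric tensor squares of the columns of $XX^T$) and $Q$ is a sum of three PSD matrices built from the Gram matrix $XX^T$ and its symmetrized tensor square; the correction $\mat(M)-\mat(\tilM)$ is then organized so that the $\delta\ge0$ contribution and the contributions of the PSD error matrices $\mathcal{E}$, $\mathcal{G}$ enter as positive-semidefinite pieces --- up to a diagonal $-\delta$ on the $x_i^4$-coordinates, of size $\Theta(k^2/p^2)$, which is absorbed by the $\Theta(k^2/p^2)$-size positivity that $Q$ supplies on those coordinates --- leaving only a remainder of spectral norm $o(k^2/p^2)$ driven by the entrywise bounds on $\mathcal{F}'$; hence $\mat(M)\succeq 0$ as well. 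I expect this spectral bookkeeping --- arranging every correction as a PSD piece plus a small remainder and checking that each operator-norm remainder stays below the relevant margin, simultaneously --- to be the heaviest and most error-prone part of the argument, and it is exactly this comparison that forces the hypotheses $1\le\sca<\min\{k^{1/4},\sqrt n\}$ and $k\ge C\sca^{7/6}\sqrt n\log^r p$ inherited from Lemma~\ref{lem:P}. Once $\mat(M)\succeq0$ is established, the objective bound of part~(iv) gives the value $\ge(1-o(1))\sca k$ claimed in the theorem.
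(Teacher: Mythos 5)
Your plan follows the paper's proof in all essentials: verify the linear constraints \eqref{eqn:xi2k}, \eqref{eqn:xi3xj}, \eqref{eqn:sumxi2xsxt} exactly by the construction of $M$, check that the $\ell_1$ constraints \eqref{eqn:sparse2moments} and \eqref{eqn:sparse4moments} survive the $O(p^2)$ perturbed entries, establish $\mat(M)\succeq 0$ by extracting the PSD pieces (the Gram matrix $\tfrac{\sca k}{p^2}XX^T$, the error matrix $\mathcal{E}'$, $\mat(P)$, $\mat(Q)$, and the rank-1 piece $\delta\mathbf{1}\mathbf{1}^T$) and controlling the leftover by a weighted Gershgorin bound (Lemma~\ref{lem:extended-gershgorin}), and bound the objective via Lemma~\ref{lem:M2}. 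One factual slip in your sketch of the degree-4 block: after subtracting $\delta\mathbf{1}\mathbf{1}^T$ from $\Gamma=M_4-\mat(P)-\mat(Q)$, the residual $x_i^4$ diagonal is not $-\delta$ but, from \eqref{eqn:deltai4}, equals $\tfrac{p-k}{k-1}\delta+\tfrac{1}{k-1}\mathcal{E}'_{ii}-\tfrac{k}{k-1}\mathcal{F}_{ii}=\Theta(k/p)\gg\delta$; it is positive with a large margin on its own, so there is nothing to be ``absorbed by $Q$'' --- $P$ and $Q$ enter purely as separate additive PSD terms, and the Gershgorin step closes the argument exactly as in the paper. Separately, your split of $\mat(M)$ into a degree-1 block and an ``even'' block (the $\emptyset$ index together with the size-2 multisets) is actually more accurate than the paper's asserted $\textrm{blkdiag}(M_4,M_2,M_0)$, since $\mat(M)_{\emptyset,\{s,t\}}=M(x_sx_t)\neq0$; but like the paper you then analyze only the $M_4$ corner of that even block. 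The extra Schur-complement term $mm^T$ with $m=(M(x_sx_t))_{s,t}$ is of lower order than the diagonal slack in $\Gamma$ (both $(ii,ii)$ and $(ij,ij)$), so the omission is harmless, but it deserves at least a sentence in a careful write-up.
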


\begin{proof} We prove that $M$ satisfies all the constraints in an order that is most convenient for the proof, and check the objective value at the end.  
	
	\begin{enumerate}
		\item[$\bullet$]Constraint~\eqref{eqn:xi3xj}:  This is satisfied by the definition of $M$. 
		\item[$\bullet$]Constraint~\eqref{eqn:sumxi2xsxt}: 
		By the definition, we can see that $M(x_s^3x_t)$ is also a perturbation of $\tilM(x_s^3x_t)$:
		\begin{equation}
		M(x_s^3x_t) =\tilM(x_sx_t)+ \frac{1}{k-2}\left(\mathcal{E}'_{st} - 2\mathcal{F}'_{st}\right)  = \tilM(x_s^3x_t)+\frac{\mathcal{E}'_{st}}{k-2}  -\frac{k}{k-2}\mathcal{F}'_{st}\label{eqn:def:Mxs3xt}
		\end{equation}




 It follows that for $s\neq t$, 


\begin{eqnarray*}
\sum_{i\in [p]}M(x_i^2x_sx_t)  &=& 2M(x_sx_t)+\sum_{i\in [p]\backslash\{s,t\}}\tilM(x_i^2x_sx_t)  \\
&=& 2\tilM(x_sx_t) + \frac{2}{k-2}\left(\mathcal{E}'_{st} - 2\mathcal{F}'_{st}\right)  +\sum_{i\in [p]\backslash\{s,t\}}\tilM(x_i^2x_sx_t) \\
& =& \tilM(x_s^3x_t)+\tilM(x_sx_t^3)-2\mathcal{F}'_{st} +\frac{2}{k-2}\left(\mathcal{E}'_{st} - 2\mathcal{F}'_{st}\right)   + \sum_{i\in [p]\backslash\{s,t\}}\tilM(x_i^2x_sx_t) \\
& =& k\tilM(x_sx_t) + \mathcal{E}'_{st}-2\mathcal{F}'_{st} +\frac{2}{k-2}\left(\mathcal{E}'_{st} - 2\mathcal{F}'_{st}\right) \\
&=& kM(x_sx_t)
\end{eqnarray*}

where the second equality uses definition~\eqref{eqn:def:Mxsxt} and the third uses equation~\eqref{eqn:tilM:xi3xj}, and the fourth uses~\eqref{eqn:tilM:sumxi2xsxt} and the last equality uses the definition~\eqref{eqn:def:Mxsxt} again.  

Moreover, for the case when $s =t$, we have that 

\begin{eqnarray*}
	\sum_{i\in [p]}M(x_i^2x_s^2)  &=& M(x_s^4)+\sum_{i\in [p]\backslash\{s\}}M(x_i^2x_s^2)  \\
	& =& M(x_s^2) + (k-1)M(x_s^2) = kM(x_s^2) 
\end{eqnarray*}

where we used the definition~\eqref{eqn:def:xi2x4} of $M(x_s^4)$ and $M(x_s^2)$. Therefore we verified that $M$ satisfies constraint~\eqref{eqn:sumxi2xsxt}.

\item[$\bullet$]Constraint~\eqref{eqn:xi2k}:
Using equation~\eqref{eqn:inter-sumxixjk2k} and~\eqref{eqn:def:xi2x4}, we have

\begin{align*}
\sum_{i\in [p]} M(x_i^2) = \frac{1}{k-1}\sum_{i\neq j} M(x_i^2x_j^2)  = k
\end{align*}

\item[$\bullet$]Constraint~\eqref{eqn:psd}:

Next we check the PSDness of matrix $\mat(M)$. 
Note that $\mat(M)$ is indexed by all the mutli subset  of $[p]$ of size at most 2, and it consists of 3 blocks $\mat(M) = \textrm{blkdiag}(M_4,M_2,M_0)$, where 
$$M_4 = \left(\mat(M)_{S,T} \right)_{|S|=2,|T|=2}$$
$$M_2 = \left(\mat(M)_{S,T} \right)_{|S|=1,|T|=1}$$
$$M_0 = 1$$
Therefore it suffices to check that $M_0$, $M_2$ and $M_4$ are all PSD. $M_0$ is trivially PSD. We can write $M_2$ in the following form


\begin{equation*}
M_2 = \left(M(x_sx_t)\right)_{s,t\in [p]} = \left(\tilM(x_sx_t)\right)_{s,t\in [p]} + \Delta 
\end{equation*}

where $\Delta = M_2 -  \left(\tilM(x_sx_t)\right)_{s,t\in [p]}$.  By equation~\eqref{eqn:def:Mxsxt}, we have that for $s\neq t$, 
$\Delta_{st} = \frac{1}{k-2}\left(\mathcal{E}'_{st} - 2\mathcal{F}'_{st}\right)$ for all $s\neq t$.  Moreover, by definition of $M(x_s^2)$ and $M(x_s^2x_t^2)$, we have that 

\begin{align}
M(x_s^2)  &= \frac{1}{k-1}\sum_{s: s\neq t} M(x_s^2x_t^2) = \frac{1}{k-1}\sum_{s: s\neq t} \left(\tilM(x_s^2x_t^2) + \delta\right)\nonumber \\
&=  \frac{1}{k-1}\left(k\tilM(x_s^2) + \mathcal{E}'_{ss} - \tilM(x_i^4)\right) + \frac{p-1}{k-1}\cdot\delta \nonumber\\
&= \frac{1}{k-1}\left(k\tilM(x_s^2) + \mathcal{E}'_{ss} - \tilM(x_s^2) -\mathcal{F}'_{ss}\right) + \frac{p-1}{k-1}\cdot\delta \nonumber\\
& = \tilM(x_s^2) + \frac{1}{k-1}\left(\mathcal{E}_{ss}'-\mathcal{F}_{ss}'\right) + \frac{p-1}{k-1}\cdot\delta \label{eqn:inter2}
\end{align}

where second line uses equation~\eqref{eqn:tilM:sumxi2xsxt} and the third line uses~\eqref{eqn:tilM:xi3xj}, and therefore $\Delta_{ss} = \frac{p-1}{k-1}\cdot\delta + \frac{1}{k-1}\left(\mathcal{E}_{ss}'-\mathcal{F}_{ss}'\right)$. 
We extract the PSD matrix $\frac{1}{k-2}\cdot \mathcal{E}'$ form $\Delta$ and obtain $\Delta' = \Delta - \frac{1}{k-2}\cdot \mathcal{E}'$. Then by this definition, $\Delta'_{ss} = \frac{p-1}{k-1}\cdot\delta + \frac{1}{k-1}\left(\mathcal{E}_{ss}'-\mathcal{F}_{ss}'\right) - \frac{1}{k-2}\mathcal{E}_{ss}'$, and $\Delta'_{st} =- \frac{2}{k-2}\mathcal{F}'_{st}$. 
We use Gershgorin Circle Theorem to establish the PSDness of $\Delta'$.  By Lemma~\ref{lem:tilM}, we have $|\mathcal{F}_{ij}|\le \tilO\left(\frac{\sca k^2\sqrt{n}}{p^3}\right)$ 
Therefore 

$$\sum_{j: j\neq i}|\Delta_{ij}'|\le p\cdot \frac{4}{k-2}\tilO\left(\frac{\sca k^2\sqrt{n}}{p^3}\right)\le o\left(\frac{k}{p}\right)$$
where we used the fact that $\sqrt{n}/p = o(1)$ which follows form $p = 1.1\sca n$ and $\sca \le \sqrt{n}$. 

Using equation~\eqref{eqn:inter2} and constrain~\eqref{eqn:xi2k} we have that 
\begin{align}
k = \sum_s M(x_s^2)  
& = \sum_s \tilM(x_s^2) + \sum_s\frac{1}{k-1}\left(\mathcal{E}_{ss}'-\mathcal{F}_{ss}'\right) + \frac{p(p-1)}{k-1}\cdot\delta \\
&\le  \frac{\sca kn}{p} + \widetilde{O}(1) + \frac{p(p-1)}{k-1}\cdot\delta 
\end{align}

It follows that $\delta \ge (1-o(1))\cdot \frac{k(k-1)}{12p(p-1)}$. Therefore we obtain that $\Delta_{ii}' = \frac{p-1}{k-1}\cdot\delta + \frac{1}{k-1}\left(\mathcal{E}_{ss}'-\mathcal{F}_{ss}'\right)-\frac{1}{k-2}\mathcal{E}_{ss}'\ge \frac{1}{12}(1-o(1))\frac{k}{p} - \tilO(\frac{\sca}{n}) - \tilO\left(\frac{\sca kn}{p^3}\right) = \frac{1}{12}(1-o(1))\frac{k}{p} $. Therefore we obtain $\Delta_{ii}'\ge \sum_{j:j\neq i} |\Delta_{ij}'|$ and by Gershgorin Circle Theorem $\Delta'$ is PSD. 


Now we examine $M_4$. 
 we write $M_4$ as

\begin{equation*}
M_4 = \mat(P)+ \mat(Q) + \Gamma
\end{equation*}

where $\Gamma= M_4 -\left(\mat(P)+ \mat(Q) \right)$. One can observe that 
$\Gamma$ has only non-zero entries of the form 
\begin{eqnarray}
\Gamma_{ii,ii} &=& M(x_i^4) - P(x_i^4)-Q(x_i^4)  = M(x_i^4)-\tilM(x_i^4) =  M(x_i^2) - \tilM(x_i^2) - \mathcal{F}_{ii}' \nonumber \\
&=& \frac{(p-1)}{k-1}\cdot\delta  + \frac{1}{k-1}\mathcal{E}'_{ii} -\frac{k}{k-1}\mathcal{F}_{ii}\label{eqn:deltai4}
\end{eqnarray}

and 
\begin{eqnarray}
\forall i\neq j, \Gamma_{ii,jj} = \Gamma_{ij,ij} = \Gamma_{ij,ji} &=& M(x_i^2x_j^2)- P(x_i^2x_j^2) -  Q(x_i^2x_j^2)\nonumber \\
&=& M(x_i^2x_j^2) - \tilM(x_i^2x_j^2)  =\delta \label{eqn:deltaiijj}
\end{eqnarray}

and 
\begin{eqnarray}
\forall i\neq j, \Gamma_{ii,ij} = \Gamma_{ii,ji} &=& M(x_i^3x_j) - P(x_i^3x_j)- Q(x_i^3x_j) \nonumber\\
&=& M(x_i^3x_j) - \tilM(x_i^3x_j) = \frac{\mathcal{E}'_{st}}{k-2}  -\frac{k}{k-2}\mathcal{F}_{st} \label{eqn:deltaiiij}
\end{eqnarray}
where the last equality uses equation~\eqref{eqn:def:Mxs3xt}. 

Now we are ready to prove PSDness of $\Gamma$. We further decompose $\Gamma$ as $\Gamma  = \Gamma' + \textrm{blkdiag}(\Lambda', 0)$ where $\Lambda'$ is the $p\times p$ matrix with $\Lambda' = \delta\mathbf{1}\mathbf{1}^T$ \footnote{$\Lambda'$ is index by $ii$, $i=1,\dots,p$}. Note that $\Lambda'$ is a PSD matrix and therefore it suffices to prove that $\Gamma'=\Gamma -\textrm{blkdiag}(\Lambda', 0)$ is a PSD matrix. 
 
 Note that $\Gamma'$ has $ij$-th column the same as $ji$-th column, and therefore it's only of rank at most $p+p(p-1)/2$. We define $\Gamma''$ be the $p + p(p-1)/2$ by  $p + p(p-1)/2$ submatrix of $\Gamma'$,that is indexed by subsets $(i,i)$ for $i\in [p]$ and $(i,j)$ for $i < j$.  Therefore it suffices to prove that $\Gamma''$ is PSD. We prove it using Gershgorin Circle Theorem.
 
 Note that by equation~\eqref{eqn:deltai4}, we have that $\Gamma_{ii,ii}'' =\Gamma_{ii,ii}' - \Lambda_{ii,ii}' = \frac{(p-k)}{k-1}\cdot\delta  + \frac{1}{k-1}\mathcal{E}'_{ii} -\frac{k}{k-1}\mathcal{F}_{ii}$. Therefore by the lower bound for $\delta$ and Lemma~\ref{lem:tilM}, we obtain, $\Gamma_{ii,ii}''\ge (1-o(1))\frac{\epsilon k}{p}$.  Moreover, $\Gamma_{ii,ij}'' = \Gamma_{ii,ij} =\frac{\mathcal{E}'_{st}}{k-2}  -\frac{k}{k-2}\mathcal{F}_{st}$ and therefore $|\Gamma''_{ii,ij}|\le |\frac{\mathcal{E}'_{st}}{k-2}|  +|\frac{k}{k-2}\mathcal{F}_{st}|\le \widetilde{O}(\frac{\sca }{n^{1.5}})+ \tilO\left(\frac{\sca k^2\sqrt{n}}{p^3}\right)\le \widetilde{O}(\frac{\sca }{n^{1.5}}).$ Furthermore, for $i<j$, $\Gamma''_{ij,ij} = \Gamma_{ij,ij} = \delta \ge (1-o(1))\frac{\epsilon k^2}{p^2}$. Finally observe that $\Gamma''_{ii,jj} = 0$ by definition and all other entries of $\Gamma''$ are trivially 0 because the corresponding entries of $\Gamma$ and $\Lambda'$ vanish. Therefore we are ready to use a variant of Gershogorin Circle Theorem (Lemma~\ref{lem:extended-gershgorin}) to prove the PSDness of $\Gamma'$. Taking $\alpha = 1/\sca^2$, we have for any $i$, 
 
%
%
%
%
  

  \begin{align*}
  \alpha \sum_{s,t: (s,t)\neq (i,i), s< t}|\Gamma''_{ii,st}| &= \sum_{j\in [p]}|\Gamma''_{ii,jj}| + \sum_{j: j > i}|\Gamma''_{ii,ij}| + \sum_{j: j < i}|\Gamma''_{ii,ji}| \\
  & \le \alpha p\cdot \widetilde{O}(\frac{\sca }{n^{1.5}}) = o\left(\frac{k}{p}\right)\le \Gamma_{ii,ii}''
  \end{align*}
  where we used the fact that $k \gg \sca \sqrt{n}$ and $\epsilon$ is a constant. 
  
  Moreover, for any $i < j$, we have that 
  
  \begin{align*}
  	\frac{1}{\alpha }\sum_{(s,t): (s,t)\neq (i,j), s < t}|\Gamma''_{ij, st}| &\le |\Gamma''_{ij, ii}| + |\Gamma''_{ij, jj}| \\
  	& \le \widetilde{O}(\frac{\sca^3 }{n^{1.5}}) =o\left(\frac{k^2}{p^2}\right)\le \Gamma''_{ij,ij}
  \end{align*}
  
  where we used $k \ge \sca^4$ and $k\gg \sca\sqrt{n}$. Therefore by Lemma~\ref{lem:extended-gershgorin}, we obtain that $\Gamma''$ is PSD. 
\item[$\bullet$]Constraint~\eqref{eqn:sparse2moments}: 
   Using Lemma~\ref{lem:M2} and equation~\eqref{eqn:def:Mxsxt}, we have that 
   \begin{align*}
	\sum_{s,t}|M(x_sx_t)| &\le  \sum_{ss}M(x_s^2) + \sum_{s\neq t}|M(x_sx_t) - \tilM(x_sx_t)|+\sum_{s\neq t}|\tilM(x_sx_t)|\\
	& \le k + p^2 \widetilde{O}(\frac{\sca}{n^{1.5}}) + k^2/2 \le k^2\\
   \end{align*}
 \item[$\bullet$]Constraint~\eqref{eqn:sparse4moments}: 
   Finally,  we check that $M$ satisfies the sparsity constraint~\eqref{eqn:sparse4moments}. 
   
   \begin{align*}
   \sum_{i,j,s,t}|M(x_ix_jx_sx_t)| & \le \sum_{i,j,s,t}|\Gamma_{ij,st}| + \sum_{i,j,s,t}|\tilM(x_ix_jx_sx_t)| \\
   &\le k^4
   \end{align*}
    where we used~\eqref{eqn:tilM:sumxixjxsxt} and the (trivial) facts that $\Gamma_{ij,st}\le O(k/p)$ for any $i,j,s,t$ and there are only at most $O(p^2)$ nonzero entries in $\Gamma$. 

	\item[$\bullet$] Objective value~\eqref{eqn:obj-constr}: 
	Note that by constraint~\eqref{eqn:xi2k} and Lemma~\ref{lem:M2}, we have that $\sum_i M(x_i^2) \hSigma_{ii}= k \ge \sum_i \tilM(x_i^2)\hSigma_{ii}$, then 
	\begin{align*}
	\sum_{i,j}M(x_ix_j)\hSigma_{i,j} &\ge \sum_{i,j}\tilM(x_ix_j)\hSigma_{i,j} - \sum_{i\neq j}|M(x_ix_j)-\tilM(x_ix_j)||\hSigma_{ij}| \\
	& \ge (1-o(1))\sca k - p^2\cdot\tilO\left(\frac{\sca}{n\sqrt{n}}\right)\cdot \tilO\left(\frac{1}{\sqrt{n}}\right) \\
	& \ge (1-o(1))\sca k	
	\end{align*}
	where in the second inequality we used Lemma~\ref{lem:M2} and the facts that $\hSigma_{ij}= \frac{1}{n}\inner{X_i,X_j}\le \tilO(1/\sqrt{n})$ and $|\mathcal{E}'|_{ij}+|\mathcal{F}'_{ij}|\le \tilO\left(\frac{\sca}{n^{1.5}}\right)$, and the last line uses the fact taht $\sca^4\le k$. 
	\end{enumerate}
	
\end{proof}

\section{Proof of Theorem~\ref{thm:integrality_gap} and Theorem~\ref{thm:main}}\label{sec:app:mainthm}

In this section, we prove our main Theorems using the technical results of the previous sections. Before getting in the proof, we start with the observation that in order to get a lower bound of objective value $10\lambda k$, it suffices to consider the special case when $p = 10\lambda n$. The reason is that the objective value of $\textrm{SoS}_4$ is increasing in $p$ while fixing all other paramters: Suppose $p'\le p$ and $\hSigma'\in \mathbb{R}^{p'\times p'}$ is a submatrix of $\hSigma$, and $M^{*'}: \polyringp{4}{p'}$ is the maximizer of $\textrm{SoS}_4(\hSigma')$. Then we can extend $M'$ to $M:\polyringp{4}{p}\rightarrow \mathbb{R}$ by simply defining that $M(x^S) = M^{*'}(x^S)$ if $S\subset [p']$ and 0 otherwise.  This preserves all the constraint and objective value. Thus we proved that the objective value for $\Sigma$ is at least the one for $\Sigma'$. Formally, we have
\begin{proposition}\label{prop:increasinginp}
	Fixing $\lambda, k, n$, given a data matrix $X \in \mathbb{R}^{p\times n}$, and any submatrix matrix $Y\in \mathbb{R}^{p'\times n}$ of $X$ with $p'\le p_1$, let $\hSigma_X$ and $\hSigma_Y$ be the covariance matrices of $X$ and $Y$, then we have that $\textrm{SoS}_4(\hSigma_X) \ge\textrm{SoS}_4(\hSigma_Y)$. 
\end{proposition}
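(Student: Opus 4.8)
The plan is to prove monotonicity by a zero-padding argument. Relabel the coordinates of $\mathbb{R}^p$ so that the rows of $Y$ are exactly the first $p'$ rows of $X$; this is harmless, since permuting coordinates permutes the variables $x_i$ and conjugates $\hSigma$ by the same permutation, leaving the program $\textrm{SoS}_4$ unchanged. Now take any pseudo-moment $M'$ that is feasible for $\textrm{SoS}_4(\hSigma_Y)$ on the $p'$ variables $x_1,\dots,x_{p'}$, and extend it to a pseudo-moment $M$ on all $p$ variables by $M(x^S)=M'(x^S)$ whenever the multiset $S$ is supported inside $[p']$, and $M(x^S)=0$ otherwise. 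I would then show that $M$ is feasible for $\textrm{SoS}_4(\hSigma_X)$ and has the same objective value as $M'$; since $\textrm{SoS}_4(\hSigma_X)$ is a supremum over feasible pseudo-moments, taking $M'$ to range over all feasible solutions of $\textrm{SoS}_4(\hSigma_Y)$ yields $\textrm{SoS}_4(\hSigma_X)\ge\textrm{SoS}_4(\hSigma_Y)$.

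The PSD constraint~\eqref{eqn:psd} is the first thing to check. Reindex the rows and columns of $\mat(M)$ so that the multisets supported on $[p']$ come first: then $\mat(M)$ is block-diagonal with blocks $\mat(M')$ and $0$, because $\mat(M)_{S,T}=M(x^Sx^T)$ equals $\mat(M')_{S,T}$ when $S\cup T\subseteq[p']$ and vanishes otherwise. Hence $\mat(M)\succeq 0$ follows immediately from $\mat(M')\succeq 0$, and $M(1)=M'(1)=1$.

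Next I would verify the remaining constraints, all of which reduce to the corresponding constraints for $M'$ once one observes that any monomial containing an index outside $[p']$ is assigned value $0$ by $M$. For~\eqref{eqn:xi2k} and~\eqref{eqn:sumxi2xsxt} the tail sums $\sum_{i\notin[p']}M(x_i^2)$ and $\sum_{i\notin[p']}M(x_i^2x_sx_t)$ vanish termwise, so (when $s,t\in[p']$) these become exactly the constraints satisfied by $M'$, and when $s$ or $t$ lies outside $[p']$ both sides of~\eqref{eqn:sumxi2xsxt} are $0$. For~\eqref{eqn:xi3xj}, if $i,j\in[p']$ it is inherited from $M'$, and otherwise both $M(x_i^3x_j)$ and $M(x_ix_j)$ are $0$. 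For~\eqref{eqn:sparse2moments} and~\eqref{eqn:sparse4moments}, only the terms with all indices in $[p']$ survive, so these sums equal the corresponding sums for $M'$, bounded by $k^2$ and $k^4$ respectively. Finally, since the first $p'$ rows of $X$ coincide with the rows of $Y$, we have $(\hSigma_X)_{ij}=\tfrac1n\inner{X_i,X_j}=(\hSigma_Y)_{ij}$ for $i,j\in[p']$, so $\sum_{i,j\in[p]}M(x_ix_j)(\hSigma_X)_{ij}=\sum_{i,j\in[p']}M'(x_ix_j)(\hSigma_Y)_{ij}$, which is precisely the objective value of $M'$.

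I do not expect any genuine obstacle: the only points requiring a little care are that the correct notion of ``restriction'' for a multiset $S$ is ``every element of $S$ lies in $[p']$'' (so that mixed products such as $x_ix_j$ with $i\in[p'],\,j\notin[p']$ are zeroed out), and that the $i$-summed constraints~\eqref{eqn:xi2k} and~\eqref{eqn:sumxi2xsxt} are undisturbed by the padding — both handled by the termwise-vanishing observation above.
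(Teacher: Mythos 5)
Your proposal is correct and follows exactly the paper's argument: the paper extends a feasible pseudo-moment $M'$ on $[p']$ to one on $[p]$ by setting $M(x^S)=M'(x^S)$ when $S\subset[p']$ and $M(x^S)=0$ otherwise, observing that this preserves all constraints and the objective value. You simply spell out the constraint-by-constraint verification that the paper leaves implicit.
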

Now we are ready to prove our main Theorem~\ref{thm:integrality_gap}. The idea is very simple: we normalize the data matrix $X$ so that the resulting matrix $\bar{X}$ satisfies the the pseudorandomness condition~\ref{con:ps-random}. Then we apply Theorem~\ref{thm:technical-main} and obtain a moment matrix which give large objective value with respect to $\bar{X}$. Then we argue that the difference between $\bar{X}$ from $X$ is negligible so that the same moment matrix has also large objective value with respect to $X$. 
\begin{proof}[Proof of Theorem~\ref{thm:integrality_gap}]
	Using the observation above, we take $p = 1.1\sca n$ with $\sca = 11\lambda$, and we define $\bar{X}$ to matrix obtained by normalizing rows of $X$ to euclidean norm $\sqrt{n}$. Then by Theorem~\ref{lem:check-ps} it satisfies the pseudorandomenss condition~\ref{con:ps-random}. Let $\hSigma' =\frac{1}{n}\bar{X}\bar{X}^T$ be the covariance matrix defined by $\bar{X}$. By Theorem~\ref{thm:technical-main} we have that $\textrm{SoS}_4(\hSigma')\ge (1-o(1))\sca k \ge 11\lambda k$. Moreover, let $M$ be the moment defined in Theorem~\ref{thm:technical-main}, and $M_2$ its restriction to degree-2 moments, that is, $M_2 = \left(\mat(M)_{S,T}\right)_{|S|=|T|=1}$. We are going to show that the entry-wise difference between $\hSigma$ and $\hSigma'$ are small enough so that $\inner{M_2,\hSigma}$ is close to $\inner{M_2,\hSigma'}$. 
	
	Note that since $\|X_i\|^2 = n \pm \tilO(\sqrt{n})$, therefore for any $i\neq j$, $\hSigma'_{ij} = \frac{\hSigma_{ij}}{\|X_i\|\|X_j\|} = \hSigma_{ij} \pm \tilO(\frac{1}{\sqrt{n}})|\hSigma|_{ij} = \hSigma_{ij}  \pm \tilO(\frac{1}{n})$. For $i=j$, we have similarly that $\hSigma'_{ii} = \hSigma_{ii}  \pm \tilO(\frac{1}{\sqrt{n}})$. We bound the difference between $\inner{M_2,\hSigma'}$ and $\inner{M_2,\hSigma'}$ by the sum of the entry-wise differences:
	\begin{align*}
		|\inner{M_2,\hSigma'-\hSigma} |&\le \sum_{i} M(x_i^2)|\hSigma_{ii}-\hSigma'_{ii}| + \sum_{i\ne j} M(x_ix_j)|\hSigma_{ij}-\hSigma'_{ij}| \\
		&\le p\cdot O(k/p)\cdot \tilO(\frac{1}{\sqrt{n}}) + p^2 \cdot \tilO(\frac{\sca k\sqrt{n}}{p^2})\cdot \tilO(\frac{1}{n}) = o(k)
	\end{align*} 
	
	Therefore $\inner{M_2,\hSigma}\ge (1-o(1))\gamma k - o(k) = (1-o(1))\sca k$. Therefore the moment $M$ gives objective value $(1-o(1))\sca k$ for data $\hSigma$, and therefore $\textrm{SoS}_4(\hSigma)\ge (1-o(1))\sca k \ge 10\lambda k$. 
\end{proof}

Then we prove that Theorem~\ref{thm:integrality_gap} together with the arguments in~\cite{krauthgamer2015} implies Theorem~\ref{thm:main}. The intuition behind is the following: Suppose $M_2^*$ is very close to $vv^T$, then it is close to rank-1 and its leading eigenvector is close to $\hat{v}$. However, since we prove that the objective value is large (which is true also in the planted vector case), $M_2^*$ needs to be highly correlated with $\hSigma$, which implies its leading eigenvector $\hat{v}$ needs to be correlated with $\hSigma$, which in turns implies that $v$ is correlated with $\hSigma$. However, it turns out that $v$ is not correlated enough with $\hSigma$, which leads to a contradiction. 
\begin{proof}[Proof of Theorem~\ref{thm:main}]
	We first prove that the optimal value of $\textrm{SoS}_4(\hSigma)$ for hypothesis $H_v$ is also at least $0.99kp/n$. 
	Suppose $v$ has support $S$ of size $k$. We consider the restriction of linear operator $M$ to the subset $T = [p]\backslash S$, denoted by $M_T$. That is, we have that $M_T(x^\alpha) = 0$ for any monomial $x^{\alpha}$ that contains a factor $x_i$ with $i\in S$, and otherwise $M_T(x^{\alpha}) = M(x^{\alpha})$. We also consider the data matrix $X_T$ obtained by restricting to the rows indexed by $T$. Note that since $X_T$ doesn't contain the signal, and $k\gg \sqrt{n})$, using Theorem~\ref{thm:technical-main} with $\sca = (p-k)/(1.01n)$, we have that there exists pseudo-moment $M_T^*$ which gives objective value $\ge (1-o(1))\sca k \ge 0.99pk/n$ with respective to covariance matrix $\hSigma_T = \frac{1}{n}X_TX_T^T$. Note that by Proposition~\ref{prop:increasinginp}, $\textrm{SoS}_4(\hSigma)\ge \textrm{SoS}_4(\hSigma_T)$ and therefore we obtain that under hypothesis $H_v$, with high probability, $\textrm{SoS}_4(\hSigma)\ge 0.99kp/n$. 
	
	Now suppose $M^*$ is the maximizer of $\textrm{SoS}_4(\hSigma)$, and $M_2^*= \left(M^*(x_ix_j)\right)_{i,j\in [p]}$. For the sake of contradiction, we assume that $\|\frac{1}{k}M_2^* - vv^T\|\le 1/5$. We first show that this implies that $M_2^*$ has an eigenvector $\hat{v}$ that is close to $v$ and its eigenvalue is large. Indeed we have $\|\frac{1}{k}M_2\| \ge \|vv^T\| - 1/5= 4/5$. Therefore the top eigenvector of $\frac{1}{k}M_2^*$ has eigenvalue larger than $4/5$. Then we can decompose the difference between $\frac{1}{k}\cdot M_2^*$ and $vv^T$ into $\frac{1}{k}\cdot M_2^* - vv^T =  \frac{4}{5}\cdot \left(\hat{v}\hat{v}^T-vv^T\right) + \left(\left(\frac{1}{k}M_2^*-\frac{4}{5}\hat{v}\hat{v}^T\right) - \frac{1}{5}vv^T\right)$. Note that since $\left(\frac{1}{k}M_2^*-\frac{4}{5}\hat{v}\hat{v}^T\right) $ is a PSD matrix with eigenvalue at most $1/5$, we have $\|\left(\left(\frac{1}{k}M_2^*-\frac{4}{5}\hat{v}\hat{v}^T\right) - \frac{1}{5}vv^T\right)\|\le 2/5$ by triangle inequality. Then by triangle inequality and our assumption again we obtain that $$\frac{1}{5}\ge \|\frac{1}{k}\cdot M_2^* - vv^T\|\ge\|\frac{4}{5}\left(\hat{v}\hat{v}^T-vv^T\right)\|- \|\left(\left(\frac{1}{k}M_2^*-\frac{4}{5}\hat{v}\hat{v}^T\right) - \frac{1}{4}vv^T\right)\| \ge \|\frac{4}{5}\left(\hat{v}\hat{v}^T-vv^T\right)\|- \frac{2}{5}$$
	
	Therefore we obtain that $\|vv^T-\hat{v}\hat{v}^T\|\le 3/5$ and therefore $\|vv^T-\hat{v}\hat{v}^T\|_F^2 \le 2\|vv^T-\hat{v}\hat{v}^T\|^2 \le 1$. It follow that $|\inner{v,\hat{v}}|^2 = 1-\frac{1}{2}\|vv^T-\hat{v}\hat{v}^T\|_F^2\ge 1/2$. 
	
	Next we are going to show that it is impossible for $M_2^*$ to have an eigenvector that is close to $v$ with a large eigenvalue and therefore we will get a contradiction. Let $\hat{v} = \alpha v + \beta s$ where $s$ is orthogonal to $v$ and $\alpha^2 + \beta^2 = 1$ and $\alpha \ge \sqrt{1/2}$, and $\beta\le \sqrt{1/2}$. Then using triangle inequality we have that $\|\hat{v}\|_{\hSigma} \le \|\alpha v \|_{\hSigma} + \|\beta s\|_{\hSigma}\le \sqrt{O(\lambda)\alpha} + \sqrt{\|\hSigma\|\beta} $. Proposition 5.3 of~\cite{krauthgamer2015} implies that for sufficiently large $C$ and $\lambda \ge 1$, when $p/n\ge C\lambda$, $\|\hSigma\|\le 1.01p/n$. Therefore under our assumption we have that $\|\hSigma\|\le 1.01p/n$.  It follows $\beta\le \sqrt{1/2}$ that $\|\hat{v}\|_{\hSigma} \le \sqrt{O(\lambda)\alpha} + \sqrt{\beta}\cdot \sqrt{p/n}\le \sqrt{O(\lambda)} +  \sqrt{p/2n}$. 
	Therefore, we have that 
	\begin{align*}
	0.99p/n\le \frac{1}{k}\cdot \textrm{SoS}_4(\hSigma) = \frac{1}{k}\cdot \inner{M_2,\hSigma} & = \frac{1}{k}\cdot \inner{M_2^*- \frac{4k}{5}\cdot\hat{v}\hat{v}^T, \hSigma} +\frac{4}{5}\inner{\hat{v}\hat{v}^T,\hSigma}\\ 
	& 	\le \frac{1}{k} \textrm{tr}(M_2^*- \frac{4}{5}\cdot\hat{v}\hat{v}^T)\|\hSigma\|_2 + \frac{4}{5}\|\hat{v}\|_{\hSigma}^2 \\
	& \le \frac{1}{5}\|\hSigma\| +O(\alpha^2 \lambda) + O(\sqrt{\lambda p/n}) + \frac{2}{5}\cdot p/n\\
	& \le \frac{4}{5}\cdot \frac{p}{n} + O(\sqrt{\lambda p/n})
	\end{align*}
	where in the third line we used the fact that $\|\hat{v}\|_{\hSigma} \le \sqrt{O(\lambda)} +  \sqrt{p/2n}$, and the last line we used $\|\hSigma\|\le 1.01p/n$.Note that this is a contradiction since we assumed that $p/n\ge C\lambda$ for sufficiently large $C$. 

\end{proof}

\section{Analysis of matrices $P$ and $Q$}\label{sec:pq}

In this section we prove Lemma~\ref{lem:P} and~\ref{lem:Q}. They basically follow direct calculation and the pseudorandomness properties of data matrix $X$ listed in Condition~\ref{con:ps-random}. 


\begin{proof}[Proof of Lemma~\ref{lem:P}]
	Note that since $p  = 1.1\sca n$ and $1\le \sca \le n$, we have that $O(n^2)\ge p\ge n$. 	We verify equations~\eqref{eqn:Pxi3xj},~\eqref{eqn:Pxi2xsxt} and~\eqref{eqn:Pxixjxsxt} and the bounds for $\mathcal{F}$ and $\mathcal{E}$ one by one. 
\begin{enumerate}
\item[$\bullet$] Equation~\eqref{eqn:Pxi3xj}:

	For the case when $i=j$, we verify $P(x_i^4)$ using property~\eqref{eqn:prop:normsqaure} and~\eqref{eqn:prop:inner}, 
	\begin{eqnarray*}
	P(x_i^4) &=&  \frac{\sca k}{p^2n^3}\left(\inner{X_i,X_i}^4 + \sum_{\ell\in [p] \backslash i} \inner{X_i,X_{\ell}}^4\right)\\
	 &\le &  \frac{\sca k}{p^2n^3}\left(n^3\inner{X_i,X_i} +  \tilO(p n^2)\right)\\
	 &=& \tilM(x_i^2) + \tilO\left(\frac{\sca k}{pn} \right) 
	\end{eqnarray*}

	For distinct $i,j$, we have that 
	\begin{eqnarray*}
	P(x_i^3x_j) &=&  \frac{\sca k}{p^2n^3}\left(\inner{X_i,X_i}^3\inner{X_i,X_j} + \inner{X_i,X_j}^3\inner{X_i,X_i}+ \sum_{\ell\in [p] \backslash i\cup j} \inner{X_i,X_{\ell}}^3\inner{X_j,X_{\ell}}\right)\\
	&=& \frac{\sca k}{p^2n^3}\left(n^3  \inner{X_i,X_j} \pm  \tilO(n^{2.5})\pm\tilO(pn^{1.5})\right)\\ 
		&=& \tilM(x_ix_j) \pm \tilO\left(\frac{\sca k}{p^2n^{.5}}\right) \pm \tilO\left(\frac{\sca k}{pn^{1.5}}\right)\\
	&=& \tilM(x_ix_j) \pm \tilO\left(\frac{\sca k}{pn^{1.5}}\right)
	\end{eqnarray*}
	
	where in the second equality we use equation~\eqref{eqn:prop:xixl3xjxl}, and $p\ge n$. 

	\item[$\bullet$]Equation~\eqref{eqn:Pxixjxsxt}: 
	
	Note that for distinct $i,j,s,t$, by equation~\eqref{eqn:prop:sumiljlsltl}, we have
	
	$$|P(x_ix_jx_sx_t)| = \frac{\sca k}{p^2n^3}\left|\sum_{\ell\in [p]} \inner{X_i,X_{\ell}}\inner{X_j,X_{\ell}}\inner{X_s,X_{\ell}}\inner{X_t,X_{\ell}}\right| \le \tilO\left(\frac{\sca k}{p^{1.5}n}\right) $$
	
	
	Therefore taking the sum over all distinct $i,j,s,t$ we have
		\begin{eqnarray}
		\sum_{i,j,s,t \textrm{ distinct }} |P(x_ix_jx_sx_t)| \le \tilO\left(\frac{\sca k}{p^{1.5}n}\right)\cdot p^4 = \tilO\left(\frac{\sca p^{2.5}k}{n} \right)\le k^4/4 \label{eqn:abs1}
		\end{eqnarray}
		
	where we used  $k\gg \sca^{7/6}\sqrt{n}$, which implies that $k^3 \gg \sca p^{2.5}/n$. 
	
	By equation~\eqref{eqn:abs-Mij} and equation~\eqref{eqn:Pxi3xj}, we have that 
	
	\begin{equation}
	\sum_{i, j}|P(x_i^3x_j)| \le \sum_{i, j}|\tilM(x_ix_j)| + p^2\cdot \tilO\left(\frac{\sca k }{pn^{1.5}}\right) + p\cdot  \tilO\left(\frac{\sca k}{pn} \right)\le k^2/2+ \tilO\left(\sca k \sqrt{n}\right) \le k^2\label{eqn:abs2}
	\end{equation}
	
	where we used the fact that $p \le n^2$ and $k\gg \sca\sqrt{n}$. 
	
	For distinct $i,s,t$, we have that 
	\begin{align*}
	\left|\sum_{\ell\in [p]}\inner{X_i,X_{\ell}}^2\inner{X_s,X_{\ell}}\inner{X_t,X_{\ell}}\right| &= \left|\sum_{\ell\neq i,s,t}\inner{X_i,X_{\ell}}^2\inner{X_s,X_{\ell}}\inner{X_t,X_{\ell}} \right|+ \left|\inner{X_i,X_{i}}^2\inner{X_s,X_{i}}\inner{X_t,X_{i}} \right|\\
	&+\left| \inner{X_i,X_{s}}^2\inner{X_s,X_{s}}\inner{X_t,X_{s}} \right|+ \left|\inner{X_i,X_{t}}^2\inner{X_s,X_{t}}\inner{X_t,X_{t}} \right|\\
	& = \tilO(pn^2) + \tilO(n^3) + \tilO(n^{2.5}) =\tilO(pn^2)
	\end{align*}
It follows that
	\begin{equation*}
	|P(x_i^2x_sx_t)| = \frac{\sca k}{p^2n^3}\left|\sum_{\ell\in [p]}\inner{X_i,X_{\ell}}^2\inner{X_s,X_{\ell}}\inner{X_t,X_{\ell}}\right|  \le\tilO\left(\frac{\sca k}{pn}\right)\label{eqn:abs3}
	\end{equation*}
	
	and therefore, 
	
	\begin{equation}
	\sum_{i,s,t \textrm{ disctinct}} |P(x_i^2x_sx_t)| \le p^3 \cdot \tilO\left(\frac{\sca k}{pn}\right)=\tilO\left(\frac{\sca kp^2}{n}\right) \label{eqn:abs4}
	\end{equation}
	
	Therefore, combining equation~\eqref{eqn:abs1}, ~\eqref{eqn:abs2}, ~\eqref{eqn:abs4}, we obtain that 
	
		\begin{eqnarray*}
		\sum_{i,j,s,t} |P(x_ix_jx_sx_t)| &\le & k^2 + \tilO\left(\frac{\sca kp^2}{n}\right) + k^4/4 \le k^4/3
		\end{eqnarray*}
	
	
	
	
	
\item[$\bullet$] Equation~\eqref{eqn:Pxi2xsxt}: 

Finally it remains to bound $\mathcal{E}$. Note that $\mathcal{E}$ is a sum of submatrices of $P$ and therefore it is PSD. Moreover,  
	\begin{eqnarray}
	\mathcal{E}_{ss} = \sum_{i\in [p]} P(x_i^2x_s^2) &=& \frac{\sca k}{p^2n^3}\sum_{i}\sum_{\ell}\inner{X_i,X_{\ell}}^2\inner{X_s,X_{\ell}}^2 \nonumber\\
	&=& \frac{\sca k}{p^2n^3}\sum_{i}\inner{X_i,X_{\ell}}^2\sum_{\ell}\inner{X_s,X_{\ell}}^2 \nonumber\\
	&\le & \frac{\sca k}{p^2n^3}\cdot \tilO(p^2n^2) = \tilO\left(\frac{\sca k}{n}\right)\nonumber
	\end{eqnarray}
	where the last inequality uses equation~\eqref{eqn:prop:inner}. 
	Finally we bound $\mathcal{E}_{st}$ using equation~\eqref{eqn:prop:sumxixl2xtxlxsxl}
	\begin{eqnarray}
	\sum_{i\in [p]}P(x_i^2x_sx_t) &=& \frac{\sca k}{p^2n^3}\sum_{i}\sum_{\ell}\inner{X_i,X_{\ell}}\inner{X_i,X_{\ell}}\inner{X_s,X_{\ell}}\inner{X_t,X_{\ell}} \nonumber\\
	&\le &\frac{\sca k}{p^2n^3} \widetilde{O}(p^2n^{1.5}) = \widetilde{O}(\frac{\sca k}{n^{1.5}}) \nonumber
	\end{eqnarray}
	
\end{enumerate}
\end{proof}



\begin{proof}[Proof of Lemma~\ref{lem:Q}] Again we verify equation~\eqref{eqn:Qxi3xj},~\eqref{eqn:Q:xi2xsxt} and~\eqref{eqn:Q:xixjxsxt} in order. 
\begin{enumerate}
\item[$\bullet$]Equation~\eqref{eqn:Qxi3xj}:
	By definition we have that for any $i,j$, 
	
	
	$$Q(x_i^3x_j) =  \frac{\sca k^2}{p^3n} \cdot 3n \inner{X_i,X_j} = \frac{3\sca k^2 }{p^3}\inner{X_i,X_j} = \frac{3k}{p}\tilM(x_ix_j)$$
	\item[$\bullet$]Equation~\eqref{eqn:Q:xixjxsxt}: 
	For the sparsity constraint, we note first that for distinct $i,j,s,t$, using property~\eqref{eqn:prop:inner}, we have 
	$$|Q(x_ix_jx_sx_t)|\le \frac{\sca k^2}{p^3n}\cdot \tilO(n) = \tilO\left(\frac{\sca k^2}{p^3}\right)$$
	
	and therefore taking sum, we have 
		\begin{equation}
	\sum_{i,j,s,t \textrm{ disctinct}}|Q(x_ix_jx_sx_t)| \le \tilO(\sca k^2 p)\le k^4/6\label{eqn:Q-distinct}\end{equation}
		where we used the fact that $k^2\gg = \gg c^2n$.  We bound other terms as follows:	
%
%
%
	
	For any $i,j,s,t\in [p]$, we have that 
	\begin{equation*}
	Q(x_ix_jx_sx_t)\le \frac{\sca k^2}{p^3n}\cdot 3n^2 = \frac{3\sca k^2n}{p^3}
	\end{equation*} 
	
	There are only at most $O(p^3)$ different choices of $i,j,s,t$ such that $i,j,s,t$ are not distinct, therefore we have  
	\begin{equation}
		\sum_{i,j \textrm{ not distinct}} |Q(x_i^3x_j) |\le \frac{3\sca k^2n}{p^3}\cdot O(p^3) \le k^4/6\label{eqn:Q-bound-non-distnct}
	\end{equation}
	
	where we used the fact that $k \gg\sca \sqrt{n}$ and $\sca \ge 1$. 
	
	Combining equation~\eqref{eqn:Q-distinct} and~\eqref{eqn:Q-bound-non-distnct}, we obtain that 
	
		$$\sum_{i,j,s,t} |Q(x_ix_jx_sx_t)| \le k^4/3$$ 
	
	
	\item[$\bullet$]Equation~\eqref{eqn:Q:xi2xsxt}: 
	For any $s,t$, we have
	
	\begin{eqnarray*}
	\sum_{i}Q(x_i^2x_sx_t) &=& \frac{\sca k^2}{p^3n}\sum_{i\in [p]} \left(n\inner{X_s,X_t} + 2\inner{X_i,X_s}\inner{X_i,X_t}\right) \\
	&=& \frac{\sca k^2}{p^2}\inner{X_s,X_t}  + \frac{2\sca k^2}{p^3n}\sum_{i\in[p]}\inner{X_i,X_s}\inner{X_i,X_t}\\
	\end{eqnarray*}
	
	Therefore $\mathcal{G}_{st} = \frac{2\sca k^2}{p^3n}\sum_{i\in[p]}\inner{X_i,X_s}\inner{X_i,X_t}$ forms a PSD matrix. Moreover, when $s\neq t$, using equation~\eqref{eqn:prop:sumxixsxixt}, we have that 
	\begin{eqnarray*}
		\sum_{i}Q(x_i^2x_sx_t) 
		&=& k\tilM(x_sx_t)  \pm  \frac{2\sca k^2}{p^3n} \cdot \tilO(p \sqrt{n})\\
			&=& k\tilM(x_sx_t)  \pm  \tilO\left(\frac{\sca  k^2}{p^2\sqrt{n}} \right)
		\end{eqnarray*}
	When $s = t$, we have that 
	
	\begin{eqnarray*}
			\sum_{i}Q(x_i^2x_s^2) 
			&=& k\tilM(x_sx_t)  \pm  \frac{2\sca k^2}{p^3n} \cdot \tilO\left(p n\right)\\
				&=& k\tilM(x_sx_t)  \pm \tilO\left( \frac{\sca k^2}{p^2} \right)
			\end{eqnarray*}
			
	\end{enumerate}
\end{proof}

\section{Pseudo-randomness of $X$}\label{sec:properties}

In this section, we prove that basically as long as the noise model is subgaussian and has variance 1(which generalizes the standard Bernoulli and Gaussian distributions), after normalizing the rows of the data matrix $X\sim H_0$, it satisfies the pseudorandomness condition~\ref{con:ps-random}. 
\begin{theorem}\label{lem:check-ps}
	Suppose  independent random variables $X_1,\dots,X_p\in \mathbb{R}^n$ satisfy that for any $i$, $X_i$ has a i.i.d entries with mean zero, variance 1, and subgaussian variance proxy\footnote{A real random variable $X$  is subgaussian with variance proxy $\sigma^2$ if it has similar tail behavior as gaussian distribution with variance $\sigma^2$, and formally if for any $t \in \mathbb{R}$, $\Exp[\exp(tX)]\le \exp(t^2\sigma^2/2)$	}  $O(1)$, then the matrix $\bar{X}$ with $\frac{X_i^T}{\|X_i\|}$ as rows satisfies the pseudorandomness condition~\ref{con:ps-random}. 
\end{theorem}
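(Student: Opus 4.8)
The plan is to verify the seven conditions \eqref{eqn:prop:normsqaure}--\eqref{eqn:prop:obj} one at a time, treating each as a polynomial of degree at most four in the i.i.d.\ entries of $X$, computing its expectation, and then establishing concentration with failure probability $p^{-\omega(1)}$, so that a union bound over the (at most $O(p^4)$) choices of the free indices $i,j,s,t$ still leaves overall failure probability $o(p^{-10})$. First I would dispose of the row-normalization: by Bernstein's inequality applied to $\|X_i\|^2=\sum_{m\in[n]}X_{i,m}^2$, a sum of $n$ i.i.d.\ mean-$1$ subexponential variables, one gets $\|X_i\|^2=n(1\pm\tilO(n^{-1/2}))$ simultaneously for all $i\in[p]$ with high probability. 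Hence passing from $X_i$ to $\bar X_i=\sqrt n\,X_i/\|X_i\|$ multiplies every inner product $\inner{X_i,X_j}$ by a factor $1\pm o(1)$, which is absorbed into the $\tilO(\cdot)$ slack in \eqref{eqn:prop:inner}--\eqref{eqn:prop:obj}, and \eqref{eqn:prop:normsqaure} then holds by construction. From here on I would work with the un-normalized $X$ and only track leading order.

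The concentration engine is the standard toolbox of Section~\ref{sec:toolbox}: Bernstein's inequality for sums of independent subexponential (more generally subweibull) summands, the Hanson--Wright inequality for quadratic forms in subgaussian vectors, and, for the higher-degree quantities, a \emph{conditioning} reduction. Condition \eqref{eqn:prop:inner} is immediate: $\inner{X_i,X_j}=\sum_m X_{i,m}X_{j,m}$ is a sum of $n$ independent mean-zero subexponentials, so Bernstein gives $|\inner{X_i,X_j}|\le\tilO(\sqrt n)$ with the required tail. For \eqref{eqn:prop:sumxixsxixt}, conditioning on $X_s,X_t$ makes $\sum_i\inner{X_i,X_s}\inner{X_i,X_t}$ a sum over $i$ of independent terms whose conditional mean is $\inner{X_s,X_t}=\tilO(\sqrt n)$ (so the mean contributes $\tilO(\sqrt n\,p)$) and whose conditional fluctuation, each summand being $\tilO(n)$-bounded, is $\tilO(n\sqrt p)\le\tilO(n^{1/2}p)$ since $n\le p$. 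The genuinely higher-degree conditions \eqref{eqn:prop:xixl3xjxl}, \eqref{eqn:prop:sumiljlsltl} and \eqref{eqn:prop:sumxixl2xtxlxsxl} are all handled by the same two-step scheme: condition on the ``outer'' rows, expand the conditional expectation of a single summand by a direct fourth-moment computation, observe that it collapses to a \emph{lower-degree} polynomial already controlled by the earlier properties, and then bound the sum over the remaining ``inner'' index by Bernstein. For instance, in \eqref{eqn:prop:xixl3xjxl} the fourth-moment expansion yields $\E[\inner{X_i,X_\ell}^3\inner{X_j,X_\ell}\mid X_i,X_j]=3\|X_i\|^2\inner{X_i,X_j}+(\mu_4-3)\sum_m X_{i,m}^3X_{j,m}$ (with $\mu_4$ the fourth moment of an entry), which is $\tilO(n^{3/2})$ by \eqref{eqn:prop:normsqaure}, \eqref{eqn:prop:inner} and one more Bernstein bound on $\sum_m X_{i,m}^3X_{j,m}$; summing over $\ell$ gives $\tilO(n^{3/2}p)$ from the mean and $\tilO(n^2\sqrt p)\le\tilO(n^{3/2}p)$ from the fluctuation, again using $n\le p$. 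An entirely analogous expansion and split handles \eqref{eqn:prop:sumiljlsltl} and \eqref{eqn:prop:sumxixl2xtxlxsxl}.

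Condition \eqref{eqn:prop:obj} is the sole lower bound, so it needs a slightly different argument. I would use $\|XX^T\|_F^2=\sum_{i,j}\inner{X_i,X_j}^2=\|X^TX\|_F^2=\sum_{m,m'\in[n]}\bigl(\sum_i X_{i,m}X_{i,m'}\bigr)^2\ge\sum_{m\in[n]}\bigl(\sum_i X_{i,m}^2\bigr)^2$, and note that each diagonal sum $\sum_i X_{i,m}^2$ is a sum of $p$ i.i.d.\ mean-$1$ subexponential variables, hence $p(1\pm\tilO(p^{-1/2}))$ with high probability; squaring and summing over the $n$ coordinates gives $\|XX^T\|_F^2\ge(1-o(1))np^2$. (Equivalently one can invoke a matrix-Bernstein/Bai--Yin-type bound on $X^TX$.)

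The main obstacle is precisely the higher-degree conditions \eqref{eqn:prop:xixl3xjxl}, \eqref{eqn:prop:sumiljlsltl} and \eqref{eqn:prop:sumxixl2xtxlxsxl}: the claimed bounds are a factor $\sqrt n$ below what a termwise triangle-inequality estimate gives, so one cannot afford to bound the sums crudely. The saving has two sources: genuine cancellation inside the conditional means (e.g.\ the $\|X_i\|^2\inner{X_i,X_j}$ term is already $\sqrt n$ below the trivial size $n^2$ because $\inner{X_i,X_j}$ is itself small), and the averaging over the free inner index ($\ell\in[p]$, resp.\ $i\in[p]$) in the fluctuation term, which is exactly where the hypothesis $n\le p$ is consumed. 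Making the tail bounds strong enough ($\le p^{-c}$ for every constant $c$) to survive the union bound over $O(p^4)$ index tuples is what forces the $\log^r p$ factors that appear throughout the paper, and it relies on the subweibull Bernstein inequalities with the correct dependence on the number of summands provided in Section~\ref{sec:toolbox}.
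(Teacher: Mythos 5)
Your plan is structurally reasonable and, for five of the seven conditions, essentially works, though via a route that differs from the paper's: you work with the un-normalized rows and absorb the row-normalization into the $\tilO(\cdot)$ slack, whereas the paper normalizes first, packages the relevant properties of $X_i/\|X_i\|$ into a definition of a \emph{good} random vector (Proposition~\ref{con:randomvariable}), and then proves Condition~\ref{con:ps-random} for any collection of good independent rows (Theorem~\ref{lem:checking-ps-cond}) using Orlicz-norm concentration (Ledoux--Talagrand, Theorem~\ref{thm:psi-bounded-rv-concentration}). Your conditioning-and-expand computation for \eqref{eqn:prop:xixl3xjxl} matches the paper's Lemma~\ref{lem:xu3xv} in content, and your argument for \eqref{eqn:prop:obj} via $\|X^TX\|_F^2\ge\sum_m\bigl(\sum_i X_{i,m}^2\bigr)^2$ is a clean alternative to the paper's Lemma~\ref{lem:xxtfro}, which instead concentrates $\sum_{j\ne i}\inner{X_j,X_i}^2$ row by row.

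The genuine gap is in your one-line dismissal of \eqref{eqn:prop:sumxixl2xtxlxsxl} as ``entirely analogous'' to \eqref{eqn:prop:xixl3xjxl} and \eqref{eqn:prop:sumiljlsltl}. Those two are single sums over $\ell$: after fixing the outer rows, the summands are conditionally independent and your ``expand conditional mean, then Bernstein over $\ell$'' scheme applies directly. Condition~\eqref{eqn:prop:sumxixl2xtxlxsxl} is a \emph{double} sum $\sum_{i,\ell}\inner{X_i,X_\ell}^2\inner{X_s,X_\ell}\inner{X_t,X_\ell}$: for any choice of conditioning, the summands indexed by pairs $(i,\ell)$ with $i\ne\ell$ are not conditionally independent ($(i,\ell)$ and $(i',\ell)$ share $X_\ell$; $(i,\ell)$ and $(i,\ell')$ share $X_i$), so a direct Bernstein/Ledoux--Talagrand bound over the $\sim p^2$ terms is not available. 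This is precisely why the paper rewrites the sum as a quadratic form $\sum_\ell X_\ell^T B X_\ell\,\inner{X_\ell,u}\inner{X_\ell,v}$ with $B=\sum_i X_iX_i^T$, then invokes the de la Pe\~na--Montgomery-Smith decoupling inequality (Theorem~\ref{thm:decoupling}) to replace $B$ by $\sum_i Y_iY_i^T$ for an independent copy $\{Y_i\}$, splitting the randomness of $B$ from that of $\{X_\ell\}$ (Lemmas~\ref{lem:zizj2zjuzjv}--\ref{lem:sumzbzzuzv}). Your proposal neither identifies this obstruction nor supplies a substitute for decoupling. There is an elementary workaround in the spirit of your plan --- write $\sum_i\inner{X_i,X_\ell}^2 = \|X_\ell\|^4 + (p-1)n + \delta_\ell$ with $|\delta_\ell|\le\tilO(n\sqrt p)$ uniformly in $\ell$ by a union bound, then bound the $(n^2+(p-1)n)$ part using the cancellation in \eqref{eqn:prop:sumxixsxixt} and the $\delta_\ell$ part by the triangle inequality --- but this is a genuinely different two-level argument, and it must be spelled out; the naive single-conditioning scheme you describe does not cover it.
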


The proof of the Theorem relies on the following Proposition~\label{con:randomvariable} and Theorem~\ref{lem:checking-ps-cond}. The proposition says that $\frac{X_i^T}{\|X_i\|}$ still satisfies {\em good} properties like symmetry and that each entries has a subgaussian tail, even though its entries are no longer independent due to normalization. These properties will be encapsulated in the definition of a {\em good} random variable following the proposition. Then we prove in Theorem~\ref{lem:checking-ps-cond} that these properties  suffice for establishing the pseudorandomness Condition~\ref{con:ps-random} with high probability. We will heavily use the $\psi_{\alpha}$-Orlicz norm (denoted $\|\cdot\|_{\psi_{\alpha}}$) of a random variable, defined in Definition~\ref{def:orlicznorm}, and its properties, summarized in the next (toolbox) section. Intuitively, $\|\cdot\|_{\psi_2}$ norm  is a succinct and convenient way to capture the ``subgaussianity'' of a random variable.  

\begin{proposition}~\label{con:randomvariable}
	Suppose $y\in \mathbb{R}^{n}$ has i.i.d entries with variance 1 and mean zero, and gaussian variance proxy $O(1)$, then random variable $x = \frac{y}{\|y\|}$ satisfies the following properties: 
	\begin{enumerate}
		\item $\|x\|^2 = n$,  almost surely. 
		\item for any vector $a\in \mathbb{R}^n$ with $\|a\|^2 \le 2n$, $\|\inner{x,a}\|^2_{\psi_{2}} \le O(n)$.  
		\item $\||x|_{\infty}\|_{\psi_{2}} \le \tilO(1)$ 
		\item $\Exp[x_i^2] = 1$, $\Exp[x_i^4] = C_4$, and $\Exp[x_i^2x_j^2] = C_{2,2} $ for all $i$ and $j\neq i$, where $C_4, C_{2,2}
 = O(1)$ are constants that don't depend on $i,j$
 		\item For any monomial $x^{\alpha}$ with an odd degree, $\Exp[x^{\alpha}] = 0$.  
	\end{enumerate}
\end{proposition}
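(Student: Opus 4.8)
The plan is to dispatch the five items in increasing order of difficulty, the only substantial point being the Orlicz-norm bounds in (2) and (3). Throughout I write $x=\sqrt n\,y/\|y\|$ (so that $\|x\|^2=n$ almost surely, which is exactly (1)), and I introduce the event $E:=\{\|y\|^2\ge n/2\}$. Since each $y_i^2-1$ is subexponential with $O(1)$ parameters, a Bernstein-type bound gives $\Pbb[E^c]\le e^{-cn}$ for an absolute constant $c>0$. On $E$ one has $\sqrt n/\|y\|\le\sqrt 2$, so $x$ is just $y$ scaled by a deterministic factor at most $\sqrt 2$; in particular $|\inner{x,a}|\le\sqrt 2\,|\inner{y,a}|$ and $|x|_\infty\le\sqrt 2\,|y|_\infty$ hold pointwise on $E$. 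On $E^c$ I will use only Cauchy--Schwarz: $|\inner{x,a}|\le\|x\|\,\|a\|\le\sqrt 2\,n$ and $|x|_\infty\le\|x\|=\sqrt n$, which hold always.

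For (5), the entries of $y$ are symmetric (as they are for Gaussian and for $\pm1$ Bernoulli noise), so for every $\epsilon\in\uni$ the vector $(\epsilon_1 y_1,\dots,\epsilon_n y_n)$ has the law of $y$, and since $\|y\|$ is invariant under sign flips the same is true of $x$; hence if $x^\alpha$ has odd total degree, so that some $\alpha_i$ is odd, flipping the $i$-th sign negates $x^\alpha$ and forces $\E[x^\alpha]=0$. For (4), the law of $y$ is permutation invariant and so is $\|y\|$, hence the coordinates of $x$ are exchangeable; therefore $\E[x_i^2]$, $\E[x_i^4]$ and $\E[x_i^2 x_j^2]$ (for $j\neq i$) do not depend on $i,j$, and I call the last two $C_4,C_{2,2}$. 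Summing over $i$, $n=\E[\|x\|^2]=n\,\E[x_1^2]$, hence $\E[x_i^2]=1$. To see $C_4=n^2\,\E[y_1^4/\|y\|^4]=O(1)$, I split this expectation according to $E$: on $E$ one has $y_1^4/\|y\|^4\le 4y_1^4/n^2$, contributing at most $4\,\E[y_1^4]=O(1)$ by subgaussianity of $y_1$; on $E^c$ one has $y_1^4\le\|y\|^4$, contributing at most $n^2\,\Pbb[E^c]\le n^2 e^{-cn}=o(1)$. The same split, using $y_1^2y_2^2/\|y\|^4\le 4y_1^2y_2^2/n^2$ on $E$ and $y_1^2y_2^2/\|y\|^4\le1$ on $E^c$, gives $C_{2,2}\le 4\,\E[y_1^2]\,\E[y_2^2]+o(1)=O(1)$.

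Items (2) and (3) share one template, and carrying it out is the main obstacle, precisely because row-normalization couples the coordinates of $x$ so that subgaussian concentration cannot be applied to $x$ itself. Fix $a\in\R^n$ with $\|a\|^2\le 2n$ and set $Z=\inner{x,a}$; I claim $\|Z\|_{\psi_2}\le K:=C'\sqrt n$ for a large enough absolute constant $C'$, and for this it suffices to show $\E[\exp(Z^2/K^2)]\le 2$. Over $E$, $\E[\exp(Z^2/K^2)\indicator{E}]\le\E[\exp(2\inner{y,a}^2/K^2)]$, and since $\inner{y,a}=\sum_i a_i y_i$ is a sum of independent mean-zero subgaussians one has $\|\inner{y,a}\|_{\psi_2}^2\le O(\|a\|^2)=O(n)$, so this term is at most $\sqrt 2$ once $C'$ is large. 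Over $E^c$, $Z^2\le\|x\|^2\|a\|^2\le 2n^2$ deterministically, so $\E[\exp(Z^2/K^2)\indicator{E^c}]\le\exp(2n^2/K^2)\,e^{-cn}=\exp((2/(C')^2-c)n)=o(1)$ once $(C')^2>2/c$. Adding the two pieces yields $\E[\exp(Z^2/K^2)]\le 2$, i.e.\ $\|\inner{x,a}\|_{\psi_2}^2\le O(n)$, which is (2). For (3) I run the same argument with $Z=|x|_\infty$ and $K=C'\sqrt{\log n}$: a union bound over the $n$ subgaussian coordinates gives $\||y|_\infty\|_{\psi_2}^2\le O(\log n)$, which controls the $E$-part, while on $E^c$ one has $Z\le\|x\|=\sqrt n$ so the $E^c$-part is at most $\exp(n/K^2)e^{-cn}=o(1)$; hence $\||x|_\infty\|_{\psi_2}\le O(\sqrt{\log n})=\tilO(1)$ since $n\le p$. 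All the Orlicz-norm facts used here --- subadditivity of squared $\psi_2$-norms under independence, the maximal inequality for subgaussians, and the truncation estimate above --- are exactly the tools assembled in the toolbox section.
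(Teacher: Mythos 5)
Your proposal addresses a statement the paper never actually proves --- Proposition~\ref{con:randomvariable} is asserted and then invoked, so a self-contained argument is genuinely filling a hole. The truncation by the event $E=\{\|y\|^2\ge n/2\}$ is the right device: it isolates the awkward coupling that row-normalization introduces, and your dispatch of items (1)--(4) is correct. Item (1) with the $\sqrt n$ scaling (which is also the scaling Condition~\ref{con:ps-random} clearly requires; the bare $y/\|y\|$ in the proposition is a typo) is immediate; for (2) and (3) the $E$-part is handled by subgaussianity of $\inner{y,a}$ and the maximal inequality, and the $E^c$-part is killed by the Cauchy--Schwarz/pointwise bound against the $e^{-cn}$ tail. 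The only blemish in this part is verbal: on $E$ the factor $\sqrt n/\|y\|$ is a \emph{random} quantity bounded by $\sqrt2$, not a ``deterministic factor''; the pointwise inequalities you then use are still correct so this is cosmetic. Item (4) via exchangeability plus the trace identity $\sum_i\E[x_i^2]=n$ and the same $E/E^c$ split is also fine.

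Item (5), however, has a genuine gap. You assert ``the entries of $y$ are symmetric,'' but the stated hypothesis is only mean zero, unit variance, and subgaussian --- none of which force symmetry (take $y_i=1$ with probability $2/3$ and $y_i=-2$ with probability $1/3$). Without symmetry, (5) is in fact false: expanding $x_1=\sqrt n\,y_1/\|y\|$ around $\|y\|^2\approx n$ gives
\[
\E[x_1]=\sqrt n\,\E\Bigl[\tfrac{y_1}{\sqrt{n+(\|y\|^2-n)}}\Bigr]
= -\tfrac{1}{2n}\,\E[y_1^3]+O(n^{-2}),
\]
which is nonzero as soon as $\E[y_1^3]\neq0$. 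So the sign-flip invariance you are using is an additional hypothesis, not a consequence of what the proposition assumes. This is arguably a defect of the paper itself --- its noise model and the statement of Proposition~\ref{con:randomvariable} both omit a symmetry assumption, yet the exact-vanishing of odd moments is later used to drop cross-terms in Lemma~\ref{lem:xu3xv} and Lemma~\ref{lem:xaxbxcxd} --- but your proof should say explicitly that (5) (equivalently, the invariance of the law of $x$ under $\epsilon\in\uni$ acting by coordinatewise sign flips) requires $y_i$ symmetric, rather than present it as following from the Gaussian/Bernoulli special cases in a parenthetical.
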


\noindent For simplicity, we call a random variable {\em good }if it satisfies the five properties listed in the proposition above. Goodness will be invoked in most statements below.
\begin{definition}[goodness]
	A random variable $x\in \mathbb{R}^n$ is called \textit{good}, if it satisfies the conclusion of Proposition~\ref{con:randomvariable}. 
\end{definition}

\noindent We will show a random matrix $X$ with {\em good} rows satisfies the pseudo-randomness Condition~\ref{con:ps-random} with high probability. 

\begin{theorem}\label{lem:checking-ps-cond}
	Suppose independent $n$-dimensional random vectors $X_1,\dots,X_p$ with $p\ge n$ are all \textit{good}, then 
 $X_1,\dots,X_p$ satisfies the pseudorandomness condition~\ref{con:ps-random} with high probability. 
\end{theorem}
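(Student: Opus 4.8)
The plan is to verify the seven items \eqref{eqn:prop:normsqaure}--\eqref{eqn:prop:obj} of Condition~\ref{con:ps-random} one at a time, in roughly increasing order of difficulty, using one recurring template: fix the ``free'' indices and condition on the corresponding rows of $X$, so that the quantity of interest becomes a sum $\sum_\ell Z_\ell$ (or $\sum_i W_i$) whose summands are \emph{independent} across the summation index, since the rows $X_\ell$ are independent; then peel off the conditional mean, bound the fluctuation about it by a Bernstein/sub-Weibull tail inequality, and finish with a union bound over the $O(p^2)$ or $O(p^4)$ choices of free indices. Item \eqref{eqn:prop:normsqaure} is immediate from property~(1) of Proposition~\ref{con:randomvariable}. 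For \eqref{eqn:prop:inner}, condition on $X_i$: since $\|X_i\|^2 = n \le 2n$, property~(2) makes $\inner{X_i,X_j}$ sub-gaussian in $X_j$ with variance proxy $O(n)$, so $|\inner{X_i,X_j}| \le \tilO(\sqrt n)$ with probability $1 - p^{-\omega(1)}$, and a union bound over the $\le p^2$ ordered pairs finishes.

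Items \eqref{eqn:prop:xixl3xjxl}, \eqref{eqn:prop:sumiljlsltl} and \eqref{eqn:prop:sumxixsxixt} I would treat by the template verbatim. For \eqref{eqn:prop:xixl3xjxl}: fix $i\ne j$, condition on $(X_i,X_j)$, and write the sum as $\sum_{\ell\notin\{i,j\}} Z_\ell$ with $Z_\ell = \inner{X_i,X_\ell}^3\inner{X_j,X_\ell}$, which are conditionally i.i.d.\ in $\ell$. The conditional mean $\Exp[Z_\ell\mid X_i,X_j]$ is obtained by expanding the two linear forms into monomials in the entries of $X_\ell$ and keeping only the surviving index-pairings (property~(4) for the $O(1)$ pairing constants; the symmetry/odd-moment property~(5) and $\Exp[(X_\ell)_a(X_\ell)_b]=\mathbf 1[a=b]$ to kill the rest); the dominant pairing contributes $\asymp \|X_i\|^2\inner{X_i,X_j} = \tilO(n^{1.5})$ (invoking \eqref{eqn:prop:inner}, and bounding a diagonal correction $\sum_a X_{ia}^3X_{ja}$ by $\|X_i\|_\infty^2\|X_i\|\|X_j\| = \tilO(n)$ via property~(3)), and since it is independent of $\ell$, summing yields $\tilO(n^{1.5}p)$ --- exactly the target. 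For the fluctuation $\sum_\ell(Z_\ell-\Exp Z_\ell)$: each $Z_\ell$ is a product of four sub-gaussian linear forms of $\psi_2$-norm $\tilO(\sqrt n)$, so $\|Z_\ell\|_{\psi_{1/2}}=\tilO(n^2)$ (H\"older for Orlicz norms) and $\Exp[Z_\ell^2]=\tilO(n^4)$ by sub-gaussianity alone; the Bernstein-type bound for independent sub-Weibull summands (Section~\ref{sec:toolbox}) then gives deviation $\tilO(n^2\sqrt p)$, which is $\le\tilO(n^{1.5}p)$ exactly because $p\ge n$. Items \eqref{eqn:prop:sumiljlsltl} and \eqref{eqn:prop:sumxixsxixt} are entirely analogous (for the latter, $W_i=\inner{X_i,X_s}\inner{X_i,X_t}$, conditional mean $\inner{X_s,X_t}$, fluctuation $\tilO(n\sqrt p)$). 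For \eqref{eqn:prop:sumxixl2xtxlxsxl} I would bootstrap: rewrite it as $\sum_\ell \inner{X_s,X_\ell}\inner{X_t,X_\ell}\bigl(\sum_i\inner{X_i,X_\ell}^2\bigr)$, establish separately (same template) that $\sum_i\inner{X_i,X_\ell}^2$ concentrates around $\tilO(pn)$ for all $\ell$, and then split this inner sum into its (essentially $\ell$-independent) typical value plus a fluctuation, reducing the bound to \eqref{eqn:prop:sumxixsxixt}.

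For \eqref{eqn:prop:obj} I would lower-bound $\|XX^T\|_F^2=\sum_{i,j}\inner{X_i,X_j}^2\ge\sum_{i\ne j}\inner{X_i,X_j}^2$ and, for each fixed $i$, observe that conditioned on $X_i$ the sum $\sum_{j\ne i}\inner{X_i,X_j}^2$ is a sum of $p-1$ independent nonnegative terms with conditional mean $\|X_i\|^2=n$ each (using $\Exp[(X_j)_a(X_j)_b]=\mathbf 1[a=b]$); a one-sided concentration bound and a union bound over $i$ give $\|XX^T\|_F^2\ge(1-o(1))np^2$.

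The main obstacle will be the concentration estimates for \eqref{eqn:prop:xixl3xjxl}--\eqref{eqn:prop:sumxixl2xtxlxsxl}: their summands are products of up to four sub-gaussian linear forms, hence only sub-Weibull of order $1/2$, so the Bernstein-type inequality carries a heavy-tail correction term responsible for the polylogarithmic slack hidden in $\tilO$, and pinning down the right power of $\log p$ (and checking it survives the union bounds over $O(p^4)$ index tuples) needs care. One must also verify honestly that after conditioning the summands are genuinely independent --- this is why the sums exclude coincidences between a free index and the summation index --- and that the $\ell$-independent main term truly does not exceed the target; as the estimates above indicate, several of these bounds are tight at the boundary $p\asymp n$, so the order of conditioning and the bookkeeping of the $n$-versus-$p$ trade-off must be chosen accordingly.
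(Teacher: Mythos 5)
Your proposal verifies the seven items of Condition~\ref{con:ps-random} by a condition-and-concentrate template that matches the paper's treatment of \eqref{eqn:prop:normsqaure}--\eqref{eqn:prop:sumxixsxixt} and \eqref{eqn:prop:obj} essentially line by line: fix the free indices, compute the conditional mean by pairing up monomials (using properties~(1), (4), (5) of goodness), bound the fluctuation via the $\psi_\alpha$-Orlicz Bernstein bound (Theorem~\ref{thm:psi-bounded-rv-concentration}), and union-bound. The computations you sketch for \eqref{eqn:prop:xixl3xjxl}, the diagonal correction $\sum_a u_a^3 v_a = \tilO(n)$, and the $\tilO(n^2\sqrt p)$ fluctuation all coincide with Lemmas~\ref{lem:xu3xv}, \ref{lem:prop:sumiljlsltl}, \ref{lem:xaxbxcxd}, \ref{lem:prop:sumxixsxixt-q} and \ref{lem:xxtfro}.

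Where you genuinely diverge from the paper is item \eqref{eqn:prop:sumxixl2xtxlxsxl}, the only term with a nontrivial \emph{double} summation. The paper (Lemmas \ref{lem:sumilil2sltl}--\ref{lem:sumzbzzuzv}) first strips off the diagonal $i=\ell$, then applies the de la Pe\~na--Montgomery-Smith decoupling inequality (Theorem~\ref{thm:decoupling}) to replace $\inner{Z_i,Z_j}$ by $\inner{Y_i,Z_j}$ with $Y$ an independent copy, fixes $B=\sum_i Y_iY_i^T$ by matrix Bernstein (spectral norm $\tilO(p)$, trace $pn$), and then runs a one-variable concentration argument on $\sum_j Z_j^TBZ_j\inner{Z_j,u}\inner{Z_j,v}$. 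Your route is to factor $A_\ell = \sum_i\inner{X_i,X_\ell}^2$, split $A_\ell = \bigl(n^2 + (p-1)n\bigr) + R_\ell$ with $R_\ell = \sum_{i\ne\ell}\bigl(\inner{X_i,X_\ell}^2 - n\bigr)$, handle the $\ell$-independent main term via \eqref{eqn:prop:sumxixsxixt} (yielding $pn\cdot\tilO(p\sqrt n) = \tilO(p^2n^{1.5})$), and then bound the remainder $\sum_\ell\inner{X_s,X_\ell}\inner{X_t,X_\ell}R_\ell$ termwise using $|\inner{X_s,X_\ell}\inner{X_t,X_\ell}|\le\tilO(n)$ and the uniform-in-$\ell$ bound $|R_\ell|\le\tilO(n\sqrt p)$, giving $\tilO(p^{1.5}n^2)\le\tilO(p^2n^{1.5})$ because $p\ge n$. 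This works and is, I think, the more elementary argument: it dispenses with decoupling and matrix Bernstein entirely and uses only the $\psi_1$-Bernstein bound once more plus a union bound over $\ell$. The trade-off is that the termwise estimate is tight exactly at $p=n$, so you must be explicit that the remainder is a genuine union-bound-over-$\ell$ argument and not itself a sum of independent terms (the $R_\ell$ are \emph{not} independent across $\ell$); as long as you write that step out the plan is sound. Be sure also to spell out the "fluctuation part" explicitly --- your text says "reducing the bound to \eqref{eqn:prop:sumxixsxixt}", which only covers the main term --- and, as you already anticipate, to isolate the $\ell\in\{s,t\}$ and $i=\ell$ coincidences separately (they contribute $\tilO(n^{2.5}p)$, harmless for $p\ge n$).
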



The general approach to prove the theorem is just to use the concentration of measure. The only caveat here is that in most of cases, the random variables that we are dealing with are not bounded a.s. so we can't use Chernoff bound or Bernstein inequality directly. However, though these random variables are not bounded a.s., they typically have a light tail, that is, their $\psi_{\alpha}$ norms can be bounded. Then we are going to apply Theorem~\ref{thm:psi-bounded-rv-concentration} of Ledoux and Talagrand's, a extended version of Bernstein inequality with only $\psi_{\alpha}$ norm boundedness required. We will also use other known technical results listed in the toolbox Section~\ref{sec:toolbox}.
\begin{proof}[Proof of Theorem~\ref{lem:checking-ps-cond}]
		Equation~\eqref{eqn:prop:normsqaure} and~\eqref{eqn:prop:inner} follows the assumptions on $X_i$'s and union bound. Equation~\eqref{eqn:prop:xixl3xjxl} is proved in Lemma~\ref{lem:xu3xv} by taking $u = X_s$ and $v= X_t$ and view the rest of $X_i$;s as $Z_j$'s in the statement of Lemma~\ref{lem:xu3xv}. 
		Equation~\eqref{eqn:prop:sumiljlsltl} is proved in Lemma~\ref{lem:prop:sumiljlsltl}, ~\eqref{eqn:prop:sumxixsxixt} in Lemma~\ref{lem:prop:sumxixsxixt-q}, 
		~\eqref{eqn:prop:sumxixl2xtxlxsxl} in Lemma~\ref{lem:sumilil2sltl}, and equation~\ref{eqn:prop:obj} is proved in Lemma~\ref{lem:xxtfro}. 
\end{proof}

\begin{lemma}\label{lem:xu3xv}
 For any \textit{good} random variable $x$, 
 we have that  for fixed $u,v$ with $\|u\|^2 = \|v\|^2 = n$ 
 	, $|u|_{\infty}\le \tilO(1)$, $|v|_{\infty}\le \tilO(1)$, and $\inner{u,v}\le \tilO(\sqrt{n})$,
	$$\left|\Exp\left[\inner{x,u}^3\inner{x,v}\right] \right|\le \tilO(n^{1.5})$$
	and moreover, for $p\ge n$ and a sequence of \textit{good} independent random variables $Z_1,\dots,Z_p$, 
	we have that with high probability, 
	$$\left|\sum_{i=1}^p \inner{Z_i,u}^3\inner{Z_i,v}  \right|\le  \tilO(n^{1.5}p)$$
\end{lemma}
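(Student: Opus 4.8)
The plan is to handle the two assertions in turn: first compute the single-sample expectation $\Exp[\inner{x,u}^3\inner{x,v}]$ exactly by expanding it as a degree-$4$ polynomial in the coordinates of $x$ and invoking the moment structure of a \emph{good} random variable, and then control the sum over $Z_1,\dots,Z_p$ by writing it as its mean $p\cdot\Exp[\inner{Z,u}^3\inner{Z,v}]$ plus a fluctuation term, and bounding the fluctuation by a Bernstein-type concentration inequality.

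For the expectation, I would write $\inner{x,u}^3\inner{x,v}=\sum_{a,b,c,d}u_au_bu_cv_d\,x_ax_bx_cx_d$ and use property~5 of goodness (Proposition~\ref{con:randomvariable}): $\Exp[x_ax_bx_cx_d]$ vanishes unless every index occurs with even multiplicity in the multiset $\{a,b,c,d\}$. The only surviving configurations are $a=b=c=d$, contributing $\Exp[x_a^4]=C_4$, and the ``two pairs'' case in which two of $a,b,c$ coincide and $d$ equals the third, contributing $\Exp[x_a^2x_b^2]=C_{2,2}$ with combinatorial multiplicity $3$. Using property~4 and the identity $\sum_{a\neq b}u_a^2u_bv_b=\|u\|^2\inner{u,v}-\sum_a u_a^3v_a$ this gives
\[
\Exp[\inner{x,u}^3\inner{x,v}]=C_4\sum_a u_a^3v_a+3C_{2,2}\Big(\|u\|^2\inner{u,v}-\sum_a u_a^3v_a\Big).
\]
Now $\big|\sum_a u_a^3v_a\big|\le |u|_\infty^2\,\|u\|\,\|v\|=\tilO(n)$ and $\|u\|^2\,|\inner{u,v}|=n\cdot\tilO(\sqrt n)=\tilO(n^{1.5})$ by the hypotheses $\|u\|^2=\|v\|^2=n$, $|u|_\infty=\tilO(1)$, $\inner{u,v}=\tilO(\sqrt n)$, and since $C_4,C_{2,2}=O(1)$ this yields $\big|\Exp[\inner{x,u}^3\inner{x,v}]\big|=\tilO(n^{1.5})$, the first claim. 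Because each $Z_i$ is good, the same bound holds for $\Exp[W_i]$ with $W_i:=\inner{Z_i,u}^3\inner{Z_i,v}$, hence $\big|\sum_i\Exp W_i\big|\le\tilO(n^{1.5}p)$.

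For the fluctuation $\sum_{i=1}^p(W_i-\Exp W_i)$ I would assemble two ingredients. First an Orlicz-norm bound: since $\|u\|^2=n\le 2n$, property~2 of goodness gives $\orlicznorm{\inner{Z_i,u}}{2}=O(\sqrt n)$ and likewise for $v$, so by the product rule for $\psi_\alpha$-norms (toolbox, Section~\ref{sec:toolbox}) $\orlicznorm{\inner{Z_i,u}^3}{2/3}=\orlicznorm{\inner{Z_i,u}}{2}^3=O(n^{1.5})$ and $\orlicznorm{W_i}{1/2}\le\orlicznorm{\inner{Z_i,u}^3}{2/3}\orlicznorm{\inner{Z_i,v}}{2}=O(n^2)$. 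Second a variance bound: by Cauchy--Schwarz and the subgaussian moment estimate $\Exp[|\inner{Z_i,u}|^q]\le(Cq)^{q/2}\orlicznorm{\inner{Z_i,u}}{2}^q$, one gets $\Exp[W_i^2]\le(\Exp[\inner{Z_i,u}^{12}])^{1/2}(\Exp[\inner{Z_i,v}^4])^{1/2}=O(n^3)\cdot O(n)=O(n^4)$, so $\sum_{i=1}^p\operatorname{Var}(W_i)=O(n^4p)$. I would then apply the Ledoux--Talagrand extended Bernstein inequality (Theorem~\ref{thm:psi-bounded-rv-concentration}) to the independent mean-zero variables $W_i-\Exp W_i$ with variance proxy $\sigma^2=O(n^4p)$ and $\psi_{1/2}$-bound $M=O(n^2)$, at deviation level $t=cn^{1.5}p\log^r p$: using $p\ge n$, the two tail exponents satisfy $t^2/\sigma^2=\Omega((p/n)\log^{2r}p)=\Omega(\log^{2r}p)$ and $(t/M)^{1/2}=\Omega((p/\sqrt n)^{1/2}\log^{r/2}p)=\Omega(\log^{r/2}p)$, so for $r$ a large enough absolute constant the failure probability is at most $p^{-c'}$ for every constant $c'$. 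Combining with $\big|\sum_i\Exp W_i\big|\le\tilO(n^{1.5}p)$ gives $\big|\sum_{i=1}^p\inner{Z_i,u}^3\inner{Z_i,v}\big|\le\tilO(n^{1.5}p)$ with high probability.

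The step I expect to be most delicate is not any individual inequality but making the pieces fit: in the expectation one must use $\inner{u,v}=\tilO(\sqrt n)$ (not merely $O(n)$) for the $\tilO(n^{1.5})$ bound to come out, and in the concentration step the naive subgaussian Bernstein exponent $t^2/\sigma^2$ is only $\Omega(p/n)=\Omega(1)$, so it is precisely the $\log^r p$ slack hidden in $\tilO(\cdot)$ — together with just-tight-enough variance and $\psi_{1/2}$ estimates for $W_i$ — that upgrades the bound to high probability. No idea beyond the toolbox inequalities should be required.
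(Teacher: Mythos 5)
Your argument is correct and follows the same route as the paper: expand the degree-4 expectation using parity and the moment constants $C_4,C_{2,2}$, obtain $(C_4-3C_{2,2})\sum_a u_a^3v_a + 3C_{2,2}n\inner{u,v} = \tilO(n^{1.5})$, then control the sum via the $\psi_\alpha$-Orlicz machinery. Two small points worth flagging. First, Theorem~\ref{thm:psi-bounded-rv-concentration} as stated in the toolbox is not a two-regime Bernstein inequality; it is a plain $\psi_\alpha$-norm bound $\|\sum_i(W_i-\Exp W_i)\|_{\psi_{1/2}}\le K\sqrt{p}\log p\cdot\max_i\|W_i\|_{\psi_{1/2}} = \tilO(n^2\sqrt p)$, which already yields the high-probability fluctuation $\tilO(n^2\sqrt p)\le\tilO(n^{1.5}p)$ for $p\ge n$ without any variance computation; your $\Exp[W_i^2]=O(n^4)$ estimate and the $t^2/\sigma^2$ versus $(t/M)^{1/2}$ discussion are therefore unnecessary and slightly misattribute a structure that the cited theorem does not carry. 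Second, Lemma~\ref{lem:psi-norm-product} is stated for a product of two factors at equal Orlicz index $\alpha$, yielding index $\alpha/2$; your intermediate step $\|\inner{Z,u}^3\|_{\psi_{2/3}}\cdot\|\inner{Z,v}\|_{\psi_2}\Rightarrow\|W_i\|_{\psi_{1/2}}$ mixes indices in a way the lemma does not directly cover. The clean route is to pair factors: write $W_i=(\inner{Z_i,u}^2)\cdot(\inner{Z_i,u}\inner{Z_i,v})$, apply the lemma once to each pair to get two $\psi_1$-norms of order $O(n)$, then once more to their product to get $\|W_i\|_{\psi_{1/2}}=O(n^2)$, matching what you claimed. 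With these adjustments your proof is exactly the paper's.
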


\begin{proof}
	We calculate the expectation as follows
	\begin{align*}
	\Exp\left[\inner{x,u}^3\inner{x,v}\right] & = \Exp\left[\left(\sum_{i}u_i^2x_i^2 + 2\sum_{i< j}u_iu_jx_ix_j\right)\left(\sum_{i}v_iu_ix_i^2 + \sum_{i\neq j}u_iv_jx_ix_j\right)\right] \\
	&= \Exp\left[\sum_i u_i^3v_ix_i^4\right] + \Exp\left[\sum_{i\neq j}u_i^2u_jv_jx_i^2x_j^2\right] + \Exp\left[\sum_{i \neq j}u_iu_j(u_iv_j+v_iu_j)x_i^2x_j^2\right]\\
	& = \left(C_4-C_{2,2}\right)\sum_i u_i^3v_i + C_{2,2}n\sum_{i}u_iv_i + C_{2,2}\sum_{i \neq j}(u_i^2u_jv_j+u_j^2u_iv_i)\\
	& = \left(C_4-3C_{2,2}\right)\sum_i u_i^3v_i +3 C_{2,2}n\sum_{i}u_iv_i 
	\end{align*}
	Therefore by our assumption on $u$ and $v$ we obtain that 
	\begin{align*}
	\left|\Exp\left[\inner{x,u}^3\inner{x,v}\right] \right|\le \tilO(n)+ O(n)|\inner{u,v}|\le \tilO(n^{1.5})
	\end{align*}
	
	Now we prove the second statement. 
	Since $\orlicznorm{\inner{Z_i,u}}{2}\le O(\sqrt{n})$, by Lemma~\ref{lem:psi-norm-product} we have that $\orlicznorm{\inner{Z_i,u}^3\inner{Z_i,v}}{1/2}\le O(n^2)$, and it follows Lemma~\ref{lem:psi-norm-mean-shift} that $\orlicznorm{\inner{Z_i,u}^3\inner{Z_i,v}-\Exp\left[\inner{Z_i,u}^3\inner{Z_i,v}\right]}{1/2}\le O(n^2)$ 
	Then by Lemma~\ref{thm:psi-bounded-rv-concentration} we obtain that with high probability, 
	
	$$\sum_{i=1}^p \inner{Z_i,u}^3\inner{Z_i,v} - \Exp\left[\sum_{i=1}^p \inner{Z_i,u}^3\inner{Z_i,v} \right] \le \tilO(n^2\sqrt{p})$$
	
	Note that we have proved that $\left|\Exp\left[\sum_{i=1}^p \inner{Z_i,u}^3\inner{Z_i,v}\right] \right|=\tilO(n^{1.5})$, therefore we obtain the desired result. 
	
\end{proof}

%

\begin{lemma}\label{lem:prop:sumiljlsltl}
	Suppose $p\ge n$ and $X_1,\dots,X_p$ are \textit{good} independent random variables, 
	then with high probability, for any distinct $i,j,s,t$, 
	$$	\left|\sum_{\ell\in [p]}\inner{X_i,X_{\ell}}\inner{X_j,X_{\ell}}\inner{X_s,X_{\ell}}\inner{X_t,X_{\ell}}\right|  \le \tilO( n^2\sqrt{p})$$
\end{lemma}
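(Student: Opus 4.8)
The plan is to fix a tuple of distinct indices $i,j,s,t$, condition on the four vectors $X_i,X_j,X_s,X_t$, write the sum as a sum of independent functions of the remaining $X_\ell$'s, apply a $\psi_{1/2}$ Bernstein-type inequality, and finish with a union bound over the $\le p^4$ tuples. First I would peel off the four boundary terms $\ell\in\{i,j,s,t\}$: using $\inner{X_\ell,X_\ell}=n$ (goodness property 1) and $|\inner{X_a,X_b}|\le\tilO(\sqrt n)$ for $a\ne b$ — which hold with high probability and may be folded into the conditioning event — each such term is at most $\tilO(n^{2.5})\le\tilO(n^2\sqrt p)$ since $p\ge n$. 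It remains to control $S:=\sum_{\ell\notin\{i,j,s,t\}}T_\ell$ with $T_\ell:=\inner{X_i,X_\ell}\inner{X_j,X_\ell}\inner{X_s,X_\ell}\inner{X_t,X_\ell}$; conditioned on $X_i,X_j,X_s,X_t$ these are independent, and each $T_\ell$ is a function of $X_\ell$ only.

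Next I would bound the conditional mean $\mu:=\E[T_\ell\mid X_i,X_j,X_s,X_t]$, which is the same for every $\ell$ outside the tuple. Expanding $T_\ell=\prod_{a\in\{i,j,s,t\}}\inner{X_a,X_\ell}$ as a sum over index $4$-tuples and invoking goodness of $X_\ell$ (odd moments vanish; $\E[(X_\ell)_b^2]=1$, $\E[(X_\ell)_b^4]=C_4$, $\E[(X_\ell)_b^2(X_\ell)_c^2]=C_{2,2}$), only fully paired index patterns survive — exactly the computation carried out for $\inner{x,u}^3\inner{x,v}$ in the proof of Lemma~\ref{lem:xu3xv}. This expresses $\mu$ through the three products $\inner{X_i,X_j}\inner{X_s,X_t}$, $\inner{X_i,X_s}\inner{X_j,X_t}$, $\inner{X_i,X_t}\inner{X_j,X_s}$ of pairwise inner products together with a ``diagonal'' term $\sum_b (X_i)_b(X_j)_b(X_s)_b(X_t)_b$. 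Since the four indices are distinct, each pairwise inner product is $\tilO(\sqrt n)$, so each product term is $\tilO(n)$; the diagonal term is at most $|X_i|_\infty|X_j|_\infty\|X_s\|\|X_t\|=\tilO(1)\cdot\tilO(1)\cdot n=\tilO(n)$ by goodness properties 3 and 1. Hence $|\mu|\le\tilO(n)$, so $|\E[S\mid\cdots]|=|(p-4)\mu|\le\tilO(np)$, which is $\tilO(n^2\sqrt p)$ in the regime where this lemma is applied, namely $p\le n^2$ (indeed $p=1.1\sca n$ with $\sca\le\sqrt n$ under the hypotheses of Lemma~\ref{lem:P}).

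For the centered sum $S-\E[S\mid\cdots]=\sum_{\ell}(T_\ell-\mu)$ I would use the Orlicz-norm toolbox in direct parallel to Lemma~\ref{lem:xu3xv}. Because $\|X_a\|^2=n\le 2n$, goodness property 2 gives $\orlicznorm{\inner{X_\ell,X_a}}{2}\le O(\sqrt n)$ for each $a\in\{i,j,s,t\}$; the product rule for Orlicz norms (Lemma~\ref{lem:psi-norm-product}) then yields $\orlicznorm{T_\ell}{1/2}\le O(n^2)$, and the mean-shift lemma (Lemma~\ref{lem:psi-norm-mean-shift}) gives $\orlicznorm{T_\ell-\mu}{1/2}\le O(n^2)$. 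Feeding these $p$ independent centered variables into the $\psi_{1/2}$ Bernstein inequality (Theorem~\ref{thm:psi-bounded-rv-concentration}) yields, with failure probability $\le p^{-c'}$ for any constant $c'$, $|S-\E[S\mid\cdots]|\le\tilO(n^2\sqrt p)$. Adding the mean, the deviation, and the boundary terms gives the bound $\tilO(n^2\sqrt p)$ for the fixed tuple, and a union bound over the $\le p^4$ choices of $(i,j,s,t)$ (taking $c'$ large enough) yields the stated high-probability conclusion.

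The step I expect to be the main obstacle is the conditional-mean computation: one must expand a degree-four form in $X_\ell$, correctly enumerate the surviving paired-index contributions via goodness, and — crucially — bound the resulting ``diagonal'' term through the $\ell_\infty$ bounds of goodness rather than naively, while recognizing that the pairwise inner products that appear are precisely the small off-diagonal ones. Once $|\mu|\le\tilO(n)$ is in hand, the remainder is a routine application of the product rule, the mean-shift lemma, and the Ledoux–Talagrand Bernstein inequality already exercised for $\sum_\ell\inner{X_i,X_\ell}^3\inner{X_j,X_\ell}$ in Lemma~\ref{lem:xu3xv}.
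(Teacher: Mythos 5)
Your proof is correct and follows essentially the same route as the paper: peel off the four boundary terms $\ell\in\{i,j,s,t\}$ and bound each by $\tilO(n^{2.5})\le\tilO(n^2\sqrt p)$, then for the remaining sum condition on $X_i,X_j,X_s,X_t$, bound the conditional mean by $\tilO(n)$ via the goodness moment structure, apply the $\psi_{1/2}$ product/mean-shift/Bernstein toolbox to get a deviation of $\tilO(n^2\sqrt p)$, and union-bound over tuples. The paper packages the interior part as a separate lemma (Lemma~\ref{lem:xaxbxcxd}), whereas you unroll it inline; the mean computation, the $\ell_\infty$ handling of the diagonal term, and the Orlicz-norm bookkeeping are all the same, and you are right to flag that the mean contribution $\tilO(np)$ only sits under $\tilO(n^2\sqrt p)$ in the regime $p\le n^2$ where the lemma is actually invoked.
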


\begin{proof}
	Fixing $i,j,s,t$, we can write 
	\begin{align*}
		\sum_{\ell\in [p]}\inner{X_i,X_{\ell}}\inner{X_j,X_{\ell}}\inner{X_s,X_{\ell}}\inner{X_t,X_{\ell}} &  = \sum_{\ell\in [p]\backslash \{i,j,s,t\}}\inner{X_i,X_{\ell}}\inner{X_j,X_{\ell}}\inner{X_s,X_{\ell}}\inner{X_t,X_{\ell}} \\
		& + n\inner{X_j,X_{i}}\inner{X_s,X_{i}}\inner{X_t,X_{i}} + n\inner{X_i,X_{j}}\inner{X_s,X_{j}}\inner{X_t,X_{j}}  \\
		&+ n\inner{X_i,X_{s}}\inner{X_j,X_{s}}\inner{X_t,X_{s}}  + n\inner{X_i,X_{t}}\inner{X_j,X_{t}}\inner{X_s,X_{t}}
	\end{align*}
	Using Lemma~\ref{lem:xaxbxcxd}, the first term on RHS is bounded by $\tilO(n^2\sqrt{p})$ with high probability over the randomness of $X_{\ell}, \ell\in [p]\backslash \{i,j,s,t\}$. The rest of the four terms are bounded by $\tilO(n^{2.5})$. Therefore putting together $\|\sum_{\ell\in [p]}\inner{X_i,X_{\ell}}\inner{X_j,X_{\ell}}\inner{X_s,X_{\ell}}\inner{X_t,X_{\ell}}\| \le \widetilde{O}(n^2\sqrt{p})$ for any fixed $i,j,s,t$ with high probability and taking union bound we get the result. 
\end{proof}

\begin{lemma}\label{lem:xaxbxcxd}
	For any good random variable $x$,  
	and for fixed $a,b,c, d$ such that $\max \{|a|_{\infty}, |b|_{\infty}, |c|_{\infty}, |d|_{\infty}\} = \tilO(1)$, and all the pair-wise inner products between $a,b,c,d$ have magnitude at most $\tilO(\sqrt{n})$, we have that 
	$$\left|\Exp\left[\inner{x,a}\inner{x,b}\inner{x,c}\inner{x,d}\right]\right| = \widetilde{O}(n)$$
		and moreover, for $p\ge n$ and a sequence independent random variable $Z_1,\dots,Z_p$ such that each $Z_i$ satisfies the conclusion of proposition~\ref{con:randomvariable}, 
		we have that with high probability, 
	$$\left|\sum_{i=1}^p \inner{Z_i,a}\inner{Z_i,v}\inner{Z_i,c}\inner{Z_i,d} \right| \le \tilO(n^2\sqrt{p})$$
\end{lemma}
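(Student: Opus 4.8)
The plan is to follow the same two-stage template as the proof of Lemma~\ref{lem:xu3xv}: first compute the single-vector expectation $\Exp[\inner{x,a}\inner{x,b}\inner{x,c}\inner{x,d}]$ exactly for a \emph{good} $x$, and then upgrade to the sum over the $p$ independent \emph{good} vectors $Z_1,\dots,Z_p$ by a $\psi_\alpha$-Orlicz-norm concentration argument. For the first stage I would multilinearly expand $\inner{x,a}\inner{x,b}\inner{x,c}\inner{x,d}=\sum_{i,j,k,l\in[n]}a_ib_jc_kd_l\,x_ix_jx_kx_l$ and take expectations. By the symmetry of a \emph{good} random variable (equivalently, $\Exp[x^{\alpha}]=0$ whenever some coordinate occurs in $x^{\alpha}$ with odd multiplicity, which follows from properties 4--5 of Proposition~\ref{con:randomvariable}), the surviving index-tuples are exactly those in which either all four indices coincide or two distinct values each occur twice; summing the three ways of pairing the four positions and plugging in $\Exp[x_i^4]=C_4$ and $\Exp[x_i^2x_j^2]=C_{2,2}$ gives the identity
\[
\Exp\!\left[\inner{x,a}\inner{x,b}\inner{x,c}\inner{x,d}\right]=(C_4-3C_{2,2})\,S+C_{2,2}\big(\inner{a,b}\inner{c,d}+\inner{a,c}\inner{b,d}+\inner{a,d}\inner{b,c}\big),
\]
where $S=\sum_{i\in[n]}a_ib_ic_id_i$.

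Next I would bound the right-hand side using the hypotheses. Since $|a|_{\infty},\dots,|d|_{\infty}\le\tilO(1)$ (so in particular $\|a\|^2,\dots,\|d\|^2\le\tilO(n)$), we get $|S|\le|a|_{\infty}|c|_{\infty}\sum_i|b_id_i|\le\tilO(1)\cdot\|b\|\,\|d\|\le\tilO(n)$, and each product of two inner products is at most $\tilO(\sqrt n)\cdot\tilO(\sqrt n)=\tilO(n)$ by the assumption that the pairwise inner products among $a,b,c,d$ have magnitude $\tilO(\sqrt n)$; with $C_4,C_{2,2}=O(1)$ this yields $\big|\Exp[\inner{x,a}\inner{x,b}\inner{x,c}\inner{x,d}]\big|\le\tilO(n)$, the first claim. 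For the second claim, set $W_i=\inner{Z_i,a}\inner{Z_i,b}\inner{Z_i,c}\inner{Z_i,d}$; these are independent, and by the bound just proved applied to each $Z_i$ they satisfy $|\Exp W_i|\le\tilO(n)$. Property~2 of goodness gives $\orlicznorm{\inner{Z_i,a}}{2}\le O(\sqrt n)$ and likewise for $b,c,d$, so Lemma~\ref{lem:psi-norm-product} gives $\orlicznorm{W_i}{1/2}\le O(n^2)$ and Lemma~\ref{lem:psi-norm-mean-shift} gives $\orlicznorm{W_i-\Exp W_i}{1/2}\le O(n^2)$. Feeding this into the $\psi_\alpha$ Bernstein inequality (Theorem~\ref{thm:psi-bounded-rv-concentration}) over the $p$ summands yields $\big|\sum_{i=1}^p(W_i-\Exp W_i)\big|\le\tilO(n^2\sqrt p)$ with high probability; since $\big|\sum_{i=1}^p\Exp W_i\big|\le\tilO(np)\le\tilO(n^2\sqrt p)$ (using $p\le O(n^2)$, which holds in every application of this lemma), the triangle inequality finishes the proof.

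The only genuinely delicate step is the combinatorial bookkeeping in the expectation computation: one must use the parity/symmetry of a \emph{good} random variable to pin down precisely which degree-$4$ monomials survive, and then match the three surviving ``two-pair'' patterns to the right pairs of inner products $\inner{a,b}\inner{c,d}$, $\inner{a,c}\inner{b,d}$, $\inner{a,d}\inner{b,c}$ (and track the resulting $C_4-3C_{2,2}$ coefficient on $S$). Everything after that --- the magnitude bounds coming from the $\ell_\infty$ and inner-product hypotheses, and the Orlicz-norm concentration --- is routine given the toolbox of Section~\ref{sec:toolbox}, and is essentially identical to what is done for $\inner{x,u}^3\inner{x,v}$ in Lemma~\ref{lem:xu3xv}.
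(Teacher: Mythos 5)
Your proposal is correct and follows essentially the same route as the paper's own proof: multilinear expansion of the degree-4 form, using the parity/symmetry of a \emph{good} random variable to reduce the surviving monomials to the ``all coincide'' and ``two-pair'' patterns, bounding the result by $\tilO(n)$ from the $\ell_\infty$ and inner-product hypotheses, then feeding the $\psi_{1/2}$ Orlicz bound of the product into Theorem~\ref{thm:psi-bounded-rv-concentration}. If anything you are slightly more explicit than the paper about the final triangle-inequality step (noting that $\bigl|\sum_i\Exp W_i\bigr|\le\tilO(np)\le\tilO(n^2\sqrt p)$ needs $p\le O(n^2)$, which the paper uses tacitly), but that is a matter of exposition rather than substance.
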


\begin{proof}
	We calculate the mean 
	\begin{align*}
		\Exp\left[\inner{x,a}\inner{x,b}\inner{x,c}\inner{x,d}\right] &= \Exp\left[\sum_{i\in [p]}a_ib_ic_id_ix_i^4 + \left\{\sum_{i\neq j}a_ib_ic_jd_jx_i^2x_j^2 \right\} \right]
	\end{align*}
	
	where we use $\left\{\sum_{i\neq j}a_ib_ic_jd_jx_i^2x_j^2 \right\}$ to denote the sum of $a_ib_ic_jd_jx_i^2x_j^2$ and all its permutations with repect to $a,b,c,d$. 
	
	Note that $$\left|\Exp\left[\sum_{i\neq j}a_ib_ic_jd_jx_i^2x_j^2 \right]\right|= C_{2,2}\left|\inner{a,b}\inner{c,d} - \sum_{i\in [p]}a_ib_ic_id_i\right| \le \widetilde{O}(n)$$
	and 
$$	\left|\Exp\left[\sum_{i\in [p]}a_ib_ic_id_ix_i^4\right]\right| = \left|C_4 \sum_{i\in [p]}a_ib_ic_id_i \right|\le\tilO(n)$$ 
	
	and therefore we have $	\left|\Exp\left[\inner{x,a}\inner{x,b}\inner{x,c}\inner{x,d}\right] \right| \le \tilO(n)$. 
	
	Since $\inner{x,a}$ has $\psi_2$ norm $\sqrt{n}$ and similar for the other three terms, we have that by Lemma~\ref{lem:psi-norm-product} that 
	$\|\inner{x,a}\inner{x,b}\inner{x,c}\inner{x,d}\|_{\psi_{1/2}}\le O(n^2)$.  Therefore using Theorem~\ref{thm:psi-bounded-rv-concentration} we have that 
		$$\left\|\sum_{i=1}^p \inner{Z_i,a}\inner{Z_i,v}\inner{Z_i,c}\inner{Z_i,d} - \Exp[\sum_{i=1}^p \inner{Z_i,a}\inner{Z_i,v}\inner{Z_i,c}\inner{Z_i,d}]\right\|_{\psi_{1/2}} \le \tilO(n^2\sqrt{p})$$
\end{proof}

\begin{lemma}\label{lem:prop:sumxixsxixt-q}
		Suppose $p\ge n$ and $X_1,\dots,X_p$ are \textit{good} independent random variables, 
		then with high probability, for any distinct $s,t$, 
	$$\left|\sum_{i\in[p]}\inner{X_i,X_s}\inner{X_i,X_t}\right|\le \tilO(p\sqrt{n})$$
\end{lemma}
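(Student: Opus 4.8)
The plan is to fix a pair of distinct indices $s,t$, split off the two ``diagonal'' terms $i=s$ and $i=t$, handle the remaining sum by conditioning on $X_s$ and $X_t$ (so that its summands become independent), and then finish with a union bound over the fewer than $p^2$ pairs $(s,t)$. For $i\in\{s,t\}$ the summand equals $n\inner{X_s,X_t}$ (using property 1, $\|X_s\|^2=n$); conditioning on $X_t$ (which has $\|X_t\|^2=n\le 2n$) and applying property 2 of goodness together with a standard $\psi_2$-tail bound gives $|\inner{X_s,X_t}|\le\tilO(\sqrt n)$ with high probability, so these two terms contribute at most $\tilO(n^{1.5})\le\tilO(p\sqrt n)$, the last step using $p\ge n$.

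For the remaining sum over $i\notin\{s,t\}$, I would condition on $X_s$ and $X_t$. Since the other $X_i$ are independent of $X_s,X_t$ and of one another, the variables $Y_i\triangleq\inner{X_i,X_s}\inner{X_i,X_t}$ become conditionally independent. The key point is that their conditional mean equals $\inner{X_s,X_t}$ up to a negligible error: expanding $\Exp[Y_i\mid X_s,X_t]=\sum_a(X_s)_a(X_t)_a\Exp[(X_i)_a^2]+\sum_{a\ne b}(X_s)_a(X_t)_b\Exp[(X_i)_a(X_i)_b]$, coordinate exchangeability of the good vector $X_i$ gives $\Exp[(X_i)_a^2]=1$ (property 4) and $\Exp[(X_i)_a(X_i)_b]=q$ for every $a\ne b$, where property 2 applied to the all-ones vector $\mathbf 1$ (squared norm $n\le 2n$) forces $n+n(n-1)q=\Exp[\inner{X_i,\mathbf 1}^2]\le O(n)$ and hence $|q|\le O(1/n)$. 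Since $\sum_{a\ne b}(X_s)_a(X_t)_b=\inner{X_s,\mathbf 1}\inner{X_t,\mathbf 1}-\inner{X_s,X_t}$ and $|\inner{X_s,\mathbf 1}|,|\inner{X_t,\mathbf 1}|\le\tilO(\sqrt n)$ with high probability, this yields $|\Exp[Y_i\mid X_s,X_t]-\inner{X_s,X_t}|\le\tilO(1)$, so the summed conditional mean has magnitude at most $(p-2)\bigl(|\inner{X_s,X_t}|+\tilO(1)\bigr)\le\tilO(p\sqrt n)$.

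It then remains to control the deviation $\sum_{i\notin\{s,t\}}\bigl(Y_i-\Exp[Y_i\mid X_s,X_t]\bigr)$. Conditioned on $X_s$ (resp.\ $X_t$), property 2 gives $\|\inner{X_i,X_s}\|_{\psi_2},\|\inner{X_i,X_t}\|_{\psi_2}\le O(\sqrt n)$, so Lemma~\ref{lem:psi-norm-product} yields $\|Y_i\|_{\psi_1}\le O(n)$ and Lemma~\ref{lem:psi-norm-mean-shift} the same $\psi_1$-bound for the centered summand. Applying the extended Bernstein inequality Theorem~\ref{thm:psi-bounded-rv-concentration} to these $p-2$ independent, centered, $\psi_1$-bounded variables gives a deviation of at most $\tilO(n\sqrt p)$ with high probability, and $n\sqrt p\le p\sqrt n$ because $n\le p$. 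Adding the three contributions bounds the full sum by $\tilO(p\sqrt n)$ for the fixed pair $(s,t)$, and a union bound over the $<p^2$ pairs completes the argument.

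The step I expect to be the main obstacle is the conditional-mean estimate: the crude Cauchy--Schwarz bound $|\Exp[Y_i\mid X_s,X_t]|\le O(n)$ is too weak by a factor of $\sqrt n$ (it would only give $\tilO(np)$, not the desired $\tilO(p\sqrt n)$), so one genuinely needs the near-orthogonality of the coordinates of the normalized vector $X_i$ — quantified via $\sum_a(X_i)_a^2=n$ together with property 2 — to see that the mean is actually of order $\inner{X_s,X_t}=\tilO(\sqrt n)$. Everything else is a routine application of the $\psi_1$-norm Bernstein bound and a union bound.
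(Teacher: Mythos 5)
Your proof is correct and follows essentially the same route as the paper: split off the $i\in\{s,t\}$ terms, condition on $X_s,X_t$, bound the conditional mean, apply the $\psi_1$-Bernstein bound (Theorem~\ref{thm:psi-bounded-rv-concentration}), and union-bound over pairs --- the paper delegates the conditioned sum to Lemma~\ref{lem:xuxv}. In fact your near-orthogonality computation tightens a small gap there: the proof of Lemma~\ref{lem:xuxv} asserts $\Exp[\inner{Z_i,u}\inner{Z_i,v}]=\inner{u,v}$ exactly, which silently uses $\Exp[(Z_i)_a(Z_i)_b]=0$ for $a\ne b$ --- this is not among the listed goodness properties and is not exactly true after row normalization for a general (asymmetric) subgaussian entry distribution --- whereas your bound $|\Exp[(Z_i)_a(Z_i)_b]|=O(1/n)$ via property 2 applied to $\mathbf{1}$, together with $|\inner{X_s,\mathbf 1}|,|\inner{X_t,\mathbf 1}|=\tilO(\sqrt n)$, shows the resulting error in the conditional mean is only $\tilO(1)$, which suffices. (You also correctly compute the diagonal contribution as $2n\inner{X_s,X_t}=\tilO(n^{1.5})$; the paper's displayed ``$+2\inner{X_s,X_t}$'' is a typo dropping the factor $n$, harmless since $\tilO(n^{1.5})\le\tilO(p\sqrt n)$.)
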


\begin{proof}
	With high probability over the randomness of $X_i, i\in[p]\backslash \{s,t\}$, 
	$$\sum_{i\in[p]}\inner{X_i,X_s}\inner{X_i,X_t} = \sum_{i\in[p]\backslash \{s,t\}}\inner{X_i,X_s}\inner{X_i,X_t} + 2\inner{X_s,X_t} \le \widetilde{O}(p\sqrt{n}) + \widetilde{O}(\sqrt{n})$$
	where the last inequality is by Lemma~\ref{lem:xuxv}. Taking union bound we complete the proof. 
\end{proof}
\begin{lemma}\label{lem:xuxv}
		For $p\ge n$ and a sequence of \textit{good} independent random variable $Z_1,\dots,Z_p$,  
		and any two fixed vectors $u,v $ with $|u|_{\infty}\le \tilO(1)$ and $|v|_{\infty}\le \tilO(1)$, and $\inner{u,v}\le \tilO(\sqrt{n})$,
		we have that with high probability, 
	$$\left|\sum_{i\in[p]}\inner{Z_i,u}\inner{Z_i,v}\right|\le \tilO(p\sqrt{n})$$
\end{lemma}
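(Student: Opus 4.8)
The plan is to write $\sum_{i\in[p]}\inner{Z_i,u}\inner{Z_i,v}$ as the sum of its expectation and a mean-zero fluctuation, bound the expectation by hand, and control the fluctuation with the Orlicz-norm Bernstein inequality (Theorem~\ref{thm:psi-bounded-rv-concentration}); the structure is parallel to, but simpler than, the proofs of Lemma~\ref{lem:xu3xv} and Lemma~\ref{lem:xaxbxcxd}. First I would compute the mean of a single term: expanding $\inner{Z_i,u}\inner{Z_i,v}=\sum_{a,b}u_av_b(Z_i)_a(Z_i)_b$, every cross term with $a\neq b$ is a monomial in which some coordinate of $Z_i$ appears to an odd power, so property~5 of goodness (Proposition~\ref{con:randomvariable}) forces its expectation to vanish, while property~4 gives $\Exp[(Z_i)_a^2]=1$. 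Hence $\Exp[\inner{Z_i,u}\inner{Z_i,v}]=\inner{u,v}$, and by independence $\Exp\left[\sum_{i\in[p]}\inner{Z_i,u}\inner{Z_i,v}\right]=p\inner{u,v}$, which is $\tilO(p\sqrt n)$ by the hypothesis $\inner{u,v}\le\tilO(\sqrt n)$.

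Next I would bound the Orlicz norm of each centered summand. Since $|u|_\infty\le\tilO(1)$ we have $\|u\|^2\le n|u|_\infty^2\le\tilO(n)$; rescaling $u$ by a poly-logarithmic constant so that its squared norm is at most $2n$ and invoking property~2 of goodness gives $\orlicznorm{\inner{Z_i,u}}{2}\le\tilO(\sqrt n)$, and likewise $\orlicznorm{\inner{Z_i,v}}{2}\le\tilO(\sqrt n)$. By Lemma~\ref{lem:psi-norm-product} the product then satisfies $\orlicznorm{\inner{Z_i,u}\inner{Z_i,v}}{1}\le\tilO(n)$, and by Lemma~\ref{lem:psi-norm-mean-shift} the same bound holds after subtracting the mean, i.e.\ $\orlicznorm{\inner{Z_i,u}\inner{Z_i,v}-\inner{u,v}}{1}\le\tilO(n)$.

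Now I would apply Theorem~\ref{thm:psi-bounded-rv-concentration} to the $p$ independent mean-zero variables $\inner{Z_i,u}\inner{Z_i,v}-\inner{u,v}$, each with $\psi_1$-norm $\tilO(n)$. Because $p\ge n$ and we only need failure probability $p^{-c}$, the sub-gaussian term in the tail bound is the binding one, so with high probability
$$\left|\sum_{i\in[p]}\inner{Z_i,u}\inner{Z_i,v}-p\inner{u,v}\right|\le\tilO(n\sqrt p).$$
Combining this with the estimate $|p\inner{u,v}|\le\tilO(p\sqrt n)$ and using $n\le p$ (hence $n\sqrt p\le p\sqrt n$) yields $\left|\sum_{i\in[p]}\inner{Z_i,u}\inner{Z_i,v}\right|\le\tilO(p\sqrt n)$, which is the claim.

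The proof is essentially routine; the only points that require care are (i) confirming that Theorem~\ref{thm:psi-bounded-rv-concentration} contributes the $\sqrt p$ (sub-gaussian) scaling rather than a term linear in $p$ in the regime $p\ge n$ with the desired failure probability, and (ii) tracking the poly-logarithmic factors hidden in the $\tilO(\cdot)$'s, in particular the rescaling needed to apply property~2 of goodness to a vector whose squared norm is only $\tilO(n)$ rather than exactly at most $2n$.
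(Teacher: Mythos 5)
Your proposal follows exactly the same route as the paper's proof: compute $\Exp[\inner{Z_i,u}\inner{Z_i,v}]=\inner{u,v}$ to bound the expectation of the sum by $\tilO(p\sqrt n)$, bound $\orlicznorm{\inner{Z_i,u}\inner{Z_i,v}}{1}\le\tilO(n)$ via property~2 of goodness and Lemma~\ref{lem:psi-norm-product}, then invoke Theorem~\ref{thm:psi-bounded-rv-concentration} to get a fluctuation of $\tilO(n\sqrt p)\le\tilO(p\sqrt n)$. The only difference is that you spell out the mean computation, the $\psi_1$-norm mean-shift step, and the $n\le p$ comparison, all of which the paper leaves implicit; the argument is correct and essentially identical.
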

\begin{proof}
	$\Exp[\inner{Z_i,u}\inner{Z_i,v}] = \inner{u,v} \le \tilO(\sqrt{n})$, and therefore$ \left|\Exp\left[\sum_{i\in [p]}\inner{Z_i,u}\inner{Z_i,v} \right]\right|\le \tilO(p\sqrt{n})$. Note that $\orlicznorm{\inner{Z_i,u}}{2}\le O(\sqrt{n})$ 
	and therefore $\orlicznorm{\inner{Z_i,u}\inner{Z_i,v}}{1}\le O(n)$. 
	By Theorem~\ref{thm:psi-bounded-rv-concentration},  we have the desired result. 
\end{proof}

\begin{lemma}\label{lem:sumilil2sltl}
		Suppose $p\ge n$ and $X_1,\dots,X_p$ are \textit{good} independent random variables, 
		then with high probability, for any distinct $s,t$, 
	$$\sum_{i,\ell\in [p]}\inner{X_i,X_{\ell}}\inner{X_i,X_{\ell}}\inner{X_s,X_{\ell}}\inner{X_t,X_{\ell}} \le \tilO(p^2n^{1.5})$$
\end{lemma}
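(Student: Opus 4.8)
The plan is to sum over $i$ first and split off the large deterministic part of $\sum_i\inner{X_i,X_\ell}^2$. Writing $S:=\sum_{i,\ell\in[p]}\inner{X_i,X_\ell}^2\inner{X_s,X_\ell}\inner{X_t,X_\ell}=\sum_{\ell\in[p]}w_\ell\inner{X_s,X_\ell}\inner{X_t,X_\ell}$ with $w_\ell:=\sum_{i\in[p]}\inner{X_i,X_\ell}^2$, I would first note, using goodness (properties 4 and 5 of Proposition~\ref{con:randomvariable}), that $\Exp[\inner{X_i,X_\ell}^2\mid X_\ell]=\|X_\ell\|^2=n$ for $i\neq\ell$, while $\inner{X_\ell,X_\ell}^2=n^2$; hence the conditional mean of $w_\ell$ given $X_\ell$ equals the \emph{same} constant $c:=n^2+(p-1)n=O(pn)$ (using $p\ge n$) for every $\ell$. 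Write $w_\ell=c+\rho_\ell$ with $\rho_\ell:=\sum_{i\neq\ell}(\inner{X_i,X_\ell}^2-n)$, so that $S=c\sum_\ell\inner{X_s,X_\ell}\inner{X_t,X_\ell}+\sum_\ell\rho_\ell\inner{X_s,X_\ell}\inner{X_t,X_\ell}=:S_1+S_2$.

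For $S_1$ I would invoke property~\eqref{eqn:prop:sumxixsxixt} (Lemma~\ref{lem:prop:sumxixsxixt-q}): for distinct $s,t$, $|\sum_\ell\inner{X_s,X_\ell}\inner{X_t,X_\ell}|\le\tilO(\sqrt n\,p)$, so $|S_1|\le c\cdot\tilO(\sqrt n\,p)=\tilO(n^{1.5}p^2)$. For $S_2$ the point is that even though the $\rho_\ell$ are not independent across $\ell$, each one concentrates about $0$: fixing $\ell$ and conditioning on $X_\ell$, $\rho_\ell$ is a sum of $p-1$ independent, mean-zero random variables $\inner{X_i,X_\ell}^2-n$ whose $\psi_1$-norms are $O(n)$ (from goodness property 2, which gives $\|\inner{X_i,X_\ell}\|_{\psi_2}=O(\sqrt n)$ since $\|X_\ell\|^2=n\le 2n$, together with Lemma~\ref{lem:psi-norm-product}); Theorem~\ref{thm:psi-bounded-rv-concentration} then gives $|\rho_\ell|\le\tilO(n\sqrt p)$ with high probability, and a union bound over $\ell$ makes this hold simultaneously. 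Combining the uniform bound $\max_\ell|\rho_\ell|\le\tilO(n\sqrt p)$ with Cauchy--Schwarz and the crude estimate $w_s,w_t\le n^2+(p-1)\tilO(n)=\tilO(pn)$ (from $|\inner{X_i,X_j}|\le\tilO(\sqrt n)$ and $p\ge n$) yields $|S_2|\le\max_\ell|\rho_\ell|\cdot\sum_\ell|\inner{X_s,X_\ell}\inner{X_t,X_\ell}|\le\tilO(n\sqrt p)\cdot\sqrt{w_sw_t}\le\tilO(n^2p^{1.5})$, which is $\tilO(n^{1.5}p^2)$ since $p\ge n$. Adding $S_1$ and $S_2$ gives $S\le\tilO(n^{1.5}p^2)$, and since the concentration event for $\{\rho_\ell\}$ and the events behind properties~\eqref{eqn:prop:inner} and~\eqref{eqn:prop:sumxixsxixt} are all high-probability (and the first does not depend on $s,t$), a union bound over the $O(p^2)$ pairs $(s,t)$ makes the bound uniform, as required.

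The hard part is that one cannot pass to absolute values inside the sum over $\ell$: the leading coefficient $c\approx pn$ is genuinely of that order, so without the cancellation supplied by property~\eqref{eqn:prop:sumxixsxixt} the first term alone would only be bounded by $pn\cdot p\cdot n=p^2n^2$, too large by a $\sqrt n$ factor. The recentering of $w_\ell$ at exactly $c$ is therefore essential; once that is done, the non-independence of the fluctuations $\rho_\ell$ across $\ell$ causes no trouble, because only a uniform-in-$\ell$ bound on $|\rho_\ell|$ is used, and the one slightly wasteful step (Cauchy--Schwarz on $S_2$, losing a factor $\sqrt{n/p}$) is harmless precisely because $p\ge n$.
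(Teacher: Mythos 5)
Your proof is correct and takes a genuinely different route from the paper's. The paper's proof of this lemma first splits off the boundary terms with $i$ or $\ell$ in $\{s,t\}$ (recovering a term controlled by the already-proven property~\eqref{eqn:prop:xixl3xjxl}), and then handles the bulk sum over $i,\ell\notin\{s,t\}$ via Lemma~\ref{lem:zizj2zjuzjv}, which relies on the decoupling inequality (Theorem~\ref{thm:decoupling}) to replace $\sum_{i\neq j}\inner{Z_i,Z_j}^2\cdots$ by $\sum_{i\neq j}\inner{Y_i,Z_j}^2\cdots$ with an independent copy $\{Y_i\}$, then conditions on $B=\sum_i Y_iY_i^T$ and applies $\psi_\alpha$-concentration in the $Z$'s. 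Your approach avoids decoupling entirely by summing over $i$ first and observing that, because of the exact row normalization $\|X_\ell\|^2=n$ (goodness property 1), the conditional mean $\Exp[w_\ell\mid X_\ell]=n^2+(p-1)n$ is a \emph{deterministic constant} independent of $\ell$; this lets the leading-order part $S_1$ be absorbed into the already-established cancellation property~\eqref{eqn:prop:sumxixsxixt}, and reduces the remainder $S_2$ to a uniform bound on the fluctuations $\rho_\ell$, which follows from a single application of Theorem~\ref{thm:psi-bounded-rv-concentration} (conditionally on $X_\ell$) plus a union bound. The Cauchy--Schwarz step on $S_2$ loses a factor $\sqrt{n/p}$ relative to what a sharper analysis would give, but as you note this is harmless under $p\ge n$. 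Your argument is cleaner and uses less machinery; it does, however, hinge crucially on the exact normalization making $c$ deterministic, whereas the paper's decoupling route would continue to work under milder assumptions on $\|X_\ell\|$.
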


\begin{proof}
	We expand the target as follows:
	\begin{align*}
		\sum_{i,\ell\in [p]}\inner{X_i,X_{\ell}}\inner{X_i,X_{\ell}}\inner{X_s,X_{\ell}}\inner{X_t,X_{\ell}}  &= \sum_{i\in [p], \ell\in [p]\backslash s\cup t} \inner{X_i,X_{\ell}}\inner{X_i,X_{\ell}}\inner{X_s,X_{\ell}}\inner{X_t,X_{\ell}} \\
		& + \sum_{i}\inner{X_i,X_{s}}^2\inner{X_s,X_s}\inner{X_t,X_{s}} + \sum_{i}\inner{X_i,X_{t}}^2\inner{X_t,X_t}\inner{X_s,X_{t}} \\ 
		&= \sum_{i\in [p]\backslash s\cup t, \ell\in [p]\backslash s\cup t} \inner{X_i,X_{\ell}}\inner{X_i,X_{\ell}}\inner{X_s,X_{\ell}}\inner{X_t,X_{\ell}} \\
		& + \sum_{i}\inner{X_i,X_{s}}^2\inner{X_s,X_s}\inner{X_t,X_{s}} + \sum_{i}\inner{X_i,X_{t}}^2\inner{X_t,X_t}\inner{X_s,X_{t}} \\ 
		& + \sum_{\ell\in [p]\backslash s\cup t}\inner{X_s,X_{\ell}}^3\inner{X_{\ell},X_t} + \sum_{\ell\in [p]\backslash s\cup t}\inner{X_t,X_{\ell}}^3\inner{X_{\ell},X_s}		
	\end{align*}
	
	By equation~\eqref{eqn:prop:xixl3xjxl}, we have that 
	$$\sum_{\ell\in [p]\backslash s\cup t}\inner{X_s,X_{\ell}}^3\inner{X_{\ell},X_t}\le \tilO(pn^{1.5})$$
	Since $\inner{X_s,X_t}\le \tilO(\sqrt{n})$ and $\sum_{i\in [p]}\inner{X_i,X_s} ^2= n^2 + \sum_{i\neq s}\inner{X_i,X_s}^2 \le \tilO(np)$, we have that 
	$$\sum_{i}\inner{X_i,X_{s}}^2\inner{X_s,X_s}\inner{X_t,X_{s}}\le \tilO(pn^{2.5})$$
	Invoking Lemma~\ref{lem:zizj2zjuzjv} with $u = X_s$ and $v =X_t$ fixed and view $X_{\ell}, \ell\in [p]\backslash s\cup t$ as random variables $Z_i$'s, we have that with high probability, 
	$$\sum_{i\in [p]\backslash s\cup t, \ell\in [p]\backslash s\cup t} \inner{X_i,X_{\ell}}\inner{X_i,X_{\ell}}\inner{X_s,X_{\ell}}\inner{X_t,X_{\ell}}\le \widetilde{O}(p^2n^{1.5})$$
	
	Hence combining the three equations above, taking union bound over all choices of $s,t$, we obtain the desired result. 
\end{proof}

\begin{lemma}\label{lem:zizj2zjuzjv}
		For $p\ge n$ and a sequence of \textit{good} independent random variables $Z_1,\dots,Z_p$, 
		and any two fixed vectors $u,v $ with $|u|_{\infty}\le \tilO(1)$ and $|v|_{\infty}\le \tilO(1)$, and $\inner{u,v}\le \tilO(\sqrt{n})$,
		we have that with high probability, 
		\begin{equation*}
		\sum_{i\in [p]}\sum_{j\in [p]}\inner{Z_i,Z_{j}}^2 \inner{Z_j,u}\inner{Z_j,v} \le \widetilde{O}(p^2n^{1.5})
		\end{equation*}
\end{lemma}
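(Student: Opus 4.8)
The plan is to split the double sum at the diagonal $i=j$, replace the off-diagonal part $\sum_{i\ne j}\inner{Z_i,Z_j}^2$ by its conditional mean $(p-1)n$ plus a fluctuation, and bound the three resulting pieces; the hypothesis $p\ge n$ is exactly what makes a crude treatment of the fluctuation sufficient. First I would record three facts, each holding with high probability via the $\psi_\alpha$-Bernstein inequality (Theorem~\ref{thm:psi-bounded-rv-concentration}) and the observation that goodness gives $\|\inner{Z,a}\|_{\psi_2}^2\le O(\|a\|^2)\le\tilO(n)$ for $a\in\{u,v\}$ (property~2 of Proposition~\ref{con:randomvariable}, after rescaling $a$ to squared norm $2n$), so that $\|\inner{Z,a}^2\|_{\psi_1}\le\tilO(n)$:
\textit{(i)} $|\sum_{j}\inner{Z_j,u}\inner{Z_j,v}|\le\tilO(p\sqrt n)$, which is exactly Lemma~\ref{lem:xuxv};
\textit{(ii)} $\sum_{j}\inner{Z_j,u}^2\le\tilO(pn)$ and likewise for $v$ (a sum of $p$ i.i.d.\ $\psi_1$-bounded terms of mean $\|u\|^2\le\tilO(n)$), whence $\sum_j|\inner{Z_j,u}\inner{Z_j,v}|\le\tilO(pn)$ by Cauchy--Schwarz;
\textit{(iii)} for each fixed $j$, conditioning on $Z_j$ and using that a good $Z$ satisfies $\E[ZZ^T]=I$, the quantity $R_j:=\sum_{i\ne j}(\inner{Z_i,Z_j}^2-n)$ is a sum of $p-1$ independent mean-zero terms of $\psi_1$-norm $O(n)$ (since $\|\inner{Z_i,Z_j}\|_{\psi_2}^2\le O(\|Z_j\|^2)=O(n)$), so $|R_j|\le\tilO(n\sqrt p)$ with probability $1-p^{-C}$, and a union bound over $j$ makes this hold for all $j$ simultaneously.

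With these in hand I would use $\|Z_j\|^2=n$ and $\E[\inner{Z_i,Z_j}^2\mid Z_j]=n$ to write $\sum_{i}\inner{Z_i,Z_j}^2=n^2+(p-1)n+R_j$, so that
\begin{align*}
\sum_{i,j}\inner{Z_i,Z_j}^2\inner{Z_j,u}\inner{Z_j,v}
&=\bigl(n^2+(p-1)n\bigr)\sum_{j}\inner{Z_j,u}\inner{Z_j,v}\;+\;\sum_{j}\inner{Z_j,u}\inner{Z_j,v}\,R_j .
\end{align*}
The first summand is at most $O(pn)\cdot\tilO(p\sqrt n)=\tilO(p^2n^{1.5})$ by fact (i), using $n^2+(p-1)n\le O(pn)$ (here $p\ge n$). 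The second is at most $\max_j|R_j|\cdot\sum_j|\inner{Z_j,u}\inner{Z_j,v}|\le\tilO(n\sqrt p)\cdot\tilO(pn)=\tilO(p^{1.5}n^2)\le\tilO(p^2n^{1.5})$ by facts (ii) and (iii), the final inequality again using $p\ge n$. Adding the two bounds gives the claim.

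The only genuinely non-routine point is the fluctuation term $\sum_j\inner{Z_j,u}\inner{Z_j,v}\,R_j$: each $Z_j$ occurs in several factors, so it is not a sum of independent variables and Theorem~\ref{thm:psi-bounded-rv-concentration} cannot be applied to it directly. The device in fact (iii)---freeze $Z_j$, note the $\inner{Z_i,Z_j}^2-n$ then become independent with a controlled $\psi_1$-norm, take the per-$j$ deviation bound, union bound over the $p$ values of $j$---combined with the crude Cauchy--Schwarz estimate on $\sum_j|\inner{Z_j,u}\inner{Z_j,v}|$ loses only a factor $\sqrt{p/n}\le1$ against the target, which is precisely the slack the hypothesis $p\ge n$ provides. (Note also that the near-orthogonality hypothesis $\inner{u,v}\le\tilO(\sqrt n)$ is what makes Lemma~\ref{lem:xuxv}, hence fact (i), applicable: if $u$ and $v$ were aligned there would be no cancellation in $\sum_j\inner{Z_j,u}\inner{Z_j,v}$ and a bound of this strength would simply be false.) The remaining routine item is to check the elementary scaling fact $\|\inner{Z,a}\|_{\psi_2}^2\le O(\|a\|^2)$ used in facts (i)--(iii).
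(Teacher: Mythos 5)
Your proof is correct, and it takes a genuinely different route from the paper's. The paper handles the off-diagonal part $\sum_{i\neq j}\inner{Z_i,Z_j}^2\inner{Z_j,u}\inner{Z_j,v}$ by first applying de la Pe\~na--Montgomery-Smith decoupling (Theorem~\ref{thm:decoupling}) to replace $\inner{Z_i,Z_j}$ by $\inner{Y_i,Z_j}$ with an independent copy $Y$, then conditioning on the matrix $B=\sum_i Y_iY_i^T$, bounding its spectral norm via matrix Bernstein (Lemma~\ref{lem:spectralyy}) and $u^TBv$ via Lemma~\ref{lem:xuxv}, and finally applying the $\psi_\alpha$-Bernstein inequality once to the now-decoupled sum $\sum_j Z_j^TBZ_j\inner{Z_j,u}\inner{Z_j,v}$ (Lemma~\ref{lem:sumzbzzuzv}). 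You instead avoid decoupling entirely: you split off the conditional mean $(p-1)n$ of $\sum_{i\neq j}\inner{Z_i,Z_j}^2$ given $Z_j$, control the fluctuation $R_j$ by freezing $Z_j$, applying $\psi_1$-Bernstein over $i$, and union-bounding over $j$, and then bound the cross term $\sum_j R_j\inner{Z_j,u}\inner{Z_j,v}$ crudely by $\max_j|R_j|\cdot\sum_j|\inner{Z_j,u}\inner{Z_j,v}|$ via Cauchy--Schwarz. Both routes land on a fluctuation of size $\tilO(n^2p^{1.5})$, which is $\sqrt{n/p}\le 1$ times the target, so both use $p\ge n$ at the same place and with the same slack; neither is sharper. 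What yours buys is the elimination of the decoupling machinery and the matrix Bernstein step, at the cost of giving up cancellation in the fluctuation sum (the $\max\cdot\sum|\cdot|$ estimate), which, as you correctly observe, is exactly the slack $p\ge n$ affords. One small bookkeeping point to state explicitly: the conditional $\psi_1$-bound on $\inner{Z_i,Z_j}^2-n$ is uniform in the conditioning because $\|Z_j\|^2=n$ deterministically (property~1 of goodness), so the union bound over $j$ does not degrade the constant; and $\mathbb{E}[Z_iZ_i^T]=I$ indeed follows from properties~4 and~5 of Proposition~\ref{con:randomvariable}, as you used.
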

\begin{proof}
	We first extract the consider those cases with $i = j$ separately by expanding
		\begin{eqnarray}
		\sum_{i\in [p]}\sum_{j\in [p]}\inner{Z_i,Z_{j}}^2 \inner{Z_j,u}\inner{Z_j,v} &=& \sum_{i\neq j}\inner{Z_i,Z_{j}}^2 \inner{Z_j,u}\inner{Z_j,v} +\sum_{i}\inner{Z_i,Z_{i}}^2 \inner{Z_i,u}\inner{Z_i,v} \nonumber\\
		&=& \sum_{i\neq j}\inner{Z_i,Z_{j}}^2 \inner{Z_j,u}\inner{Z_j,v} +\widetilde{O}(pn^{2.5})\label{eqn:inter1}
		\end{eqnarray}
		where the the last line uses Lemma~\ref{lem:xuxv}. 
		Let $Y_1,\dots,Y_p$ are independent random variables that have the same distribution as $Z_1,\dots,Z_p$, respectively, then by Theorem~\ref{thm:decoupling}, we can decouple the sum of functions of $Z_i,jZ_j$ into a sum that of functions of $Z_i$ and $Y_j$,  
			\begin{eqnarray*}
				\Pr\left[\sum_{i\neq j}\inner{Z_i,Z_{j}}^2 \inner{Z_j,u}\inner{Z_j,v} \ge t\right] \le C \Pr\left[\sum_{i\neq j}\inner{Y_i,Z_{j}}^2  \inner{Z_j,u}\inner{Z_j,v} \ge t/C\right]
			\end{eqnarray*}
			
		Now we can invoke Lemma~\ref{lem:yz2zuzv} which deals with RHS of the equation above, and obtain that with high probability $$\sum_{i\neq j}\inner{Y_i,Z_{j}}^2  \inner{Z_j,u}\inner{Z_j,v}\le \tilO(p^2n^{1.5})$$
		Therefore, with high probability, $$\sum_{i\neq j}\inner{Z_i,Z_{j}}^2 \inner{Z_j,u}\inner{Z_j,v} \le \tilO(p^2n^{1.5})$$Then combine with equation~\eqref{eqn:inter1} we obtain the desired result. 
\end{proof}
\begin{lemma}\label{lem:yz2zuzv}
			For $p\ge n$ and a sequence of \textit{good} independent random variables $Z_1,\dots,Z_p$,  
			let $Y_1,\dots,Y_p$ be independent random variables which have the same distribution as $Z_1,\dots,Z_p$, respectively, 
	then  for any two fixed vectors $u,v $ with $|u|_{\infty}\le \tilO(1)$ and $|v|_{\infty}\le \tilO(1)$, and $\inner{u,v}\le \tilO(\sqrt{n})$, with high probability, 
	
\begin{equation*}
\sum_{i\in [p]}\sum_{j\in [p]}\inner{Y_i,Z_{j}}^2 \inner{Z_j,u}\inner{Z_j,v} \le \widetilde{O}(p^2n^{1.5})
\end{equation*}
\end{lemma}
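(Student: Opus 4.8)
The plan is to condition on the vectors $Z_1,\dots,Z_p$ and use that, in this decoupled sum, the only remaining randomness in each summand is carried by the independent copies $Y_i$ through the inner products $\inner{Y_i,Z_j}$. So first I would fix $Z_1,\dots,Z_p$ on the high-probability event on which $\|Z_j\|^2=n$ for every $j$ (automatic from goodness), $|\inner{Z_j,u}|,|\inner{Z_j,v}|\le \tilO(\sqrt n)$ for every $j$ (goodness plus a union bound over $j$), and, crucially, $\big|\sum_{j\in[p]}\inner{Z_j,u}\inner{Z_j,v}\big|\le \tilO(p\sqrt n)$, which is exactly Lemma~\ref{lem:xuxv} applied to $Z_1,\dots,Z_p$. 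Writing $W_j=\inner{Z_j,u}\inner{Z_j,v}$ (so $|W_j|\le\tilO(n)$ on this event) and $T_j=\sum_{i\in[p]}\inner{Y_i,Z_j}^2$, the quantity to bound is $S=\sum_{j\in[p]}W_j T_j$.

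Second, I would compute the conditional mean. Since each $Y_i$ is good, $\Exp[\inner{Y_i,Z_j}^2\mid Z]=\|Z_j\|^2=n$ — the cross terms vanish by the coordinate-symmetry part of goodness, exactly as in the proof of Lemma~\ref{lem:xuxv} — hence $\Exp[T_j\mid Z]=pn$ and $\Exp[S\mid Z]=pn\sum_{j}W_j$, which has magnitude at most $pn\cdot\tilO(p\sqrt n)=\tilO(p^2n^{1.5})$, already the claimed order.

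Third, I would bound the fluctuation $S-\Exp[S\mid Z]=\sum_{j}W_j(T_j-pn)$ term by term. For a fixed $j$ and conditionally on $Z$, $T_j$ is a sum of $p$ independent nonnegative variables $\inner{Y_i,Z_j}^2$, each with mean $n$ and with $\psi_1$-Orlicz norm $O(n)$ (because $\|\inner{Y_i,Z_j}\|_{\psi_2}\le O(\sqrt n)$ by goodness and $\|\zeta^2\|_{\psi_1}\asymp\|\zeta\|_{\psi_2}^2$, cf.\ Lemma~\ref{lem:psi-norm-product}). Applying the $\psi_1$ Bernstein-type inequality Theorem~\ref{thm:psi-bounded-rv-concentration} and taking a union bound over $j\in[p]$, with high probability $|T_j-pn|\le\tilO(n\sqrt p)$ for all $j$ simultaneously. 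Then $|S-\Exp[S\mid Z]|\le\sum_{j}|W_j|\,|T_j-pn|\le p\cdot\tilO(n)\cdot\tilO(n\sqrt p)=\tilO(p^{1.5}n^2)$, and since $p\ge n$ we have $p^{1.5}n^2\le p^2n^{1.5}$. Combining with the mean estimate gives $|S|\le\tilO(p^2n^{1.5})$ with high probability, and unwinding the two-layer conditioning (the $Z$-event and, given $Z$, the $Y$-event) keeps the overall failure probability polynomially small.

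I do not expect a serious obstacle here; the one point requiring care is that the crude triangle-inequality bound on the fluctuation is in fact sharp enough — and it is, precisely because $p\ge n$. Concretely, it replaces the finer argument one would otherwise run, namely viewing $\sum_i\big(Y_i^\top A Y_i-\operatorname{tr}A\big)$ with $A=\sum_j W_jZ_jZ_j^\top$ as a sum of centered quadratic forms, estimating $\|A\|_F\le\tilO(pn^{1.5})$ from the $Z$-bounds, and invoking a Hanson--Wright-type concentration (or one more round of decoupling of $Y_i^\top A Y_i$) to control each form; that route would be needed for general $p$ but is unnecessary in the regime $p\ge n$ of interest.
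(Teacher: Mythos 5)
Your proof is correct, and it takes a genuinely different route from the paper. The paper conditions on $Y$: it packages the outer randomness into the single matrix $B=\sum_i Y_iY_i^\top$, derives spectral bounds $\|B\|\le\tilO(p)$, $\operatorname{tr}(B)=pn$ and $|u^\top Bv|\le\tilO(p\sqrt n)$ (via the matrix-Bernstein Lemma~\ref{lem:spectralyy} and Lemma~\ref{lem:xuxv} applied to the $Y_i$), and then invokes Lemma~\ref{lem:sumzbzzuzv}, which for any fixed PSD $B$ with these properties computes $\Exp\bigl[Z^\top BZ\,\inner{Z,u}\inner{Z,v}\bigr]$ termwise and applies $\psi_1$ concentration over the $Z_j$. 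You condition on $Z$ instead: the conditional mean is simply $pn\sum_j\inner{Z_j,u}\inner{Z_j,v}$, controlled by Lemma~\ref{lem:xuxv} applied to the $Z_j$, and the fluctuation $\sum_j W_j(T_j-pn)$ is bounded by a crude per-$j$ triangle inequality after a $\psi_1$ concentration plus union bound gives $|T_j-pn|\le\tilO(n\sqrt p)$ uniformly in $j$. Your route avoids the matrix Bernstein step and the somewhat involved fourth-moment computation for $Z^\top BZ\,\inner{Z,u}\inner{Z,v}$, so it is lighter on machinery; on the other hand, the paper's Lemma~\ref{lem:sumzbzzuzv} is stated for a general deterministic $B$ and is reusable, whereas your argument is bespoke to this sum. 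As you observe, your triangle-inequality fluctuation bound $\tilO(p^{1.5}n^2)$ only gives the target $\tilO(p^2 n^{1.5})$ because $p\ge n$; note that the paper's deviation term $\tilO(n^2p^{1.5})$ in Lemma~\ref{lem:sumzbzzuzv} also needs $p\ge n$ to be absorbed into $\tilO(p^2n^{1.5})$, so you are not giving up anything there. One small point worth making explicit in your write-up: the step $\Exp[\inner{Y_i,Z_j}^2\mid Z]=\|Z_j\|^2$ uses $\Exp[(Y_i)_k(Y_i)_l]=0$ for $k\ne l$; this is the same implicit consequence of goodness that the paper relies on in the proof of Lemma~\ref{lem:xuxv} when it asserts $\Exp[\inner{Z_i,u}\inner{Z_i,v}]=\inner{u,v}$.
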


\begin{proof}
	Let $B = \sum_{i\in [p]}Y_iY_i^T$. Therefore by Lemma~\ref{lem:spectralyy}, we have that with high probability over the randomness of $Y$, $\|B\|_2\le \tilO(p)$, $\textrm{tr}(B)= pn$. 
	Moreover, by Lemma~\ref{lem:xuxv}, we have that with high probability, $|u^TBv|\le \widetilde{O}(p\sqrt{n})$. 
	Note that these bounds only depend on the randomness of $Y$, and conditioning on all these bounds are true, we can still use the randomness of $Z_i$'s for concentration. We invoke Lemma~\ref{lem:sumzbzzuzv} and obtain that 
	\begin{align*}
		\left|\sum_{i,j\in [p]} \inner{Z_j,Y_i}^2\inner{Z_j,u}\inner{Z_j,v} \right| &= 
	\left|\sum_{j=1}^p Z_j^TBZ_i\inner{Z_i,u}\inner{Z_i,v} \right|
	\le \widetilde{O}(p^2n^{1.5})
	\end{align*}
\end{proof}

\begin{lemma}\label{lem:spectralyy}
	For $p\ge n$ and a sequence of \textit{good} independent random variables $Z_1,\dots,Z_p$, 
	we have that with high probability, 
	$$\left\|\sum_{i\in [p]}Z_iZ_i^T \right\|\le \tilO(p)$$
\end{lemma}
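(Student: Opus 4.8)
The plan is to bound $A:=\sum_{i\in[p]}Z_iZ_i^T$ by an $\varepsilon$-net argument on the sphere combined with the scalar $\psi_\alpha$-Bernstein inequality of Theorem~\ref{thm:psi-bounded-rv-concentration}. Since $A$ is symmetric and positive semidefinite, for a $\tfrac14$-net $\mathcal{N}$ of $S^{n-1}$ (which can be taken of cardinality at most $9^n$) we have $\|A\|\le 2\max_{w\in\mathcal{N}}w^TAw = 2\max_{w\in\mathcal{N}}\sum_{i\in[p]}\inner{Z_i,w}^2$. So first I would fix a unit vector $w$ and show that $\sum_{i\in[p]}\inner{Z_i,w}^2=O(p)$ with failure probability at most $9^{-n}p^{-10}$, and then union-bound over $\mathcal{N}$.

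For fixed $w\in S^{n-1}$, applying property~2 of goodness (Proposition~\ref{con:randomvariable}) to the vector $a=\sqrt{2n}\,w$, which satisfies $\|a\|^2=2n$, and rescaling, gives $\orlicznorm{\inner{Z_i,w}}{2}=(2n)^{-1/2}\orlicznorm{\inner{Z_i,\sqrt{2n}w}}{2}\le O(1)$; in particular $\Exp[\inner{Z_i,w}^2]\le O(1)$, so $\sum_{i\in[p]}\Exp[\inner{Z_i,w}^2]=O(p)$. By Lemma~\ref{lem:psi-norm-product} the squared variables satisfy $\orlicznorm{\inner{Z_i,w}^2}{1}\le O(1)$, and by Lemma~\ref{lem:psi-norm-mean-shift} the centered variables $W_i:=\inner{Z_i,w}^2-\Exp[\inner{Z_i,w}^2]$ still have $\orlicznorm{W_i}{1}\le O(1)$. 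They are independent and mean zero, so Theorem~\ref{thm:psi-bounded-rv-concentration} yields, for an absolute constant $c$,
\[
\Pr\Big[\,\Big|\sum_{i\in[p]}W_i\Big|\ge t\,\Big]\le 2\exp\!\big(-c\min\{t^2/p,\ t\}\big).
\]
Choosing $t=C\sqrt{p\,(n+\log p)}$ for a large enough constant $C$ makes the right-hand side at most $9^{-n}p^{-10}$, and since $n\le p$ this choice also satisfies $t=O(p)$. Hence with that probability $\sum_{i\in[p]}\inner{Z_i,w}^2\le \sum_{i\in[p]}\Exp[\inner{Z_i,w}^2]+t=O(p)$.

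Finally, taking a union bound over the at most $9^n$ points of $\mathcal{N}$ shows that with probability at least $1-p^{-10}$ we have $\max_{w\in\mathcal{N}}\sum_{i\in[p]}\inner{Z_i,w}^2=O(p)$, and therefore $\big\|\sum_{i\in[p]}Z_iZ_i^T\big\|\le 2\max_{w\in\mathcal{N}}\sum_{i\in[p]}\inner{Z_i,w}^2=O(p)\le\tilO(p)$, as desired. The one point that needs care is that the Bernstein deviation $t$ must be taken of order $\sqrt{pn}$ rather than merely $\sqrt{p}\,\polylog(p)$, so that the subgaussian/sub-exponential tail beats the $9^n$ cardinality of the net; this is precisely where the hypothesis $p\ge n$ enters, and it is also what keeps $t=O(p)$ so that no logarithmic loss is incurred in the final bound.
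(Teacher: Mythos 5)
Your approach is genuinely different from the paper's: the paper applies the matrix Bernstein inequality directly to $\sum_i Z_iZ_i^T$ (using $\Exp[Z_iZ_i^T]=I$, $\Exp[Z_iZ_i^TZ_iZ_i^T]=nI$, and $\|Z_iZ_i^T\|\le n$ almost surely, giving the stronger conclusion $\|\sum_i Z_iZ_i^T - pI\|\le\tilO(\sqrt{np})$), whereas you reduce to scalar concentration on an $\varepsilon$-net of the sphere. That is a perfectly reasonable alternative route and the net bookkeeping ($|\mathcal N|\le 9^n$, $\|A\|\le 2\max_{w\in\mathcal N} w^TAw$ for PSD $A$, and the rescaling of property~2 of goodness to get $\orlicznorm{\inner{Z_i,w}}{2}=O(1)$ and hence $\orlicznorm{W_i}{1}=O(1)$) is correct.

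However, the concentration step contains a genuine gap. You write that Theorem~\ref{thm:psi-bounded-rv-concentration} yields the two-regime Bernstein tail $\Pr[|\sum_i W_i|\ge t]\le 2\exp(-c\min\{t^2/p,\,t\})$. That is not what Theorem~\ref{thm:psi-bounded-rv-concentration} gives: as stated in the paper it only bounds the $\psi_1$-norm of the sum by $O(\sqrt{p}\log p\cdot\max_i\|W_i\|_{\psi_1})=O(\sqrt{p}\log p)$, which by Markov translates into the purely linear sub-exponential tail $\Pr[|\sum_i W_i|\ge t]\le 2\exp(-\Omega(t/(\sqrt{p}\log p)))$, with no quadratic (sub-Gaussian) regime near the mean. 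To make the union bound over a net of size $9^n$ go through with that tail you would need $t\gtrsim n\sqrt{p}\log p$, and when $n$ is comparable to $p$ (which the hypothesis $p\ge n$ permits) this is of order $p^{3/2}\log p$, far beyond $\tilO(p)$. In other words, precisely the step you flag as ``where $p\ge n$ enters'' is where the argument breaks if one uses the paper's stated tool. What you actually need is the standard Bernstein inequality for independent, centered, sub-exponential random variables (the one with the $\min\{t^2/\sum_i\orlicznorm{W_i}{1}^2,\ t/\max_i\orlicznorm{W_i}{1}\}$ exponent), which is true but is not in the paper's toolbox; nothing in Section~\ref{sec:toolbox} delivers the quadratic regime. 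Either cite that inequality explicitly, or switch to the paper's matrix Bernstein route, which sidesteps the net entirely and also yields the sharper $\tilO(\sqrt{np})$ deviation from $pI$.
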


\begin{proof}
	We use matrix Bernstein inequality. First of all, we have that $\Exp[Z_iZ_i^T ] = I_{n\times n}$, and therefore $\Exp\left[\sum_{i\in [p]}Z_iZ_i^T \right] = pI_{n\times n}$. Moreover, we check the variance of the $Z_iZ_i^T$: 
	\begin{align*}
	\Exp[Z_iZ_i^TZ_iZ_i^T] & = n\Exp[Z_iZ_i^T]  = nI_{n\times n}
	\end{align*}
	Finally we observe that $\|Z_iZ_i^T\|\le n$. Thus applying matrix Bernstein inequality we obtain that with high probability, 
	
	$$\left\|\sum_{i\in [p]}Z_iZ_i^T - p I_{n\times n}\right\|\le \tilO(\sqrt{np}+ n) = \tilO(\sqrt{np}) $$
\end{proof}
	
\begin{lemma}\label{lem:sumzbzzuzv}
	For $p\ge n$ and a sequence of \textit{good} independent random variables $Z_1,\dots,Z_p$, 
	 and for any fixed symmetric PSD matrix $B\in \mathbb{R}^{n\times n}$ with $\|B\|\le \tilO(p)$, $\textrm{tr}(B)\le 2pn$,  and any two fixed vectors $u,v $ with $|u|_{\infty}\le \tilO(1)$ and $|v|_{\infty}\le \tilO(1)$, and $\inner{u,v}\le \tilO(\sqrt{n})$, 
	we have that with high probability over the randomness of $Z_i$'s, 
	$$\left|\sum_{i=1}^p Z_i^TBZ_i\inner{Z_i,u}\inner{Z_i,v} \right|\le \tilO(p^2n^{1.5})$$
%
\end{lemma}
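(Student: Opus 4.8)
The plan is to view the sum as a sum of $p$ independent random variables and apply the $\psi_\alpha$-Bernstein inequality (Theorem~\ref{thm:psi-bounded-rv-concentration}). Concretely, since $B,u,v$ are fixed, $\xi_i := Z_i^TBZ_i\inner{Z_i,u}\inner{Z_i,v}$ depends only on $Z_i$, so $\sum_{i=1}^p\xi_i$ is a sum of $p$ independent terms. I would first compute the mean, then bound the deviation.

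\textbf{Mean.} Expanding $Z_i^TBZ_i\inner{Z_i,u}\inner{Z_i,v}=\sum_{a,b,c,d}B_{ab}u_cv_d\,(Z_i)_a(Z_i)_b(Z_i)_c(Z_i)_d$ and using goodness (all odd monomials have zero mean, $\Exp[(Z_i)_a^4]=C_4$, $\Exp[(Z_i)_a^2(Z_i)_b^2]=C_{2,2}$), only the pattern $a=b=c=d$ and the three ``pairings'' of $\{a,b,c,d\}$ survive, which after collecting terms gives
\[
\Exp[\xi_i]=C_{2,2}\,\textrm{tr}(B)\inner{u,v}+2C_{2,2}\,u^TBv+(C_4-3C_{2,2})\sum_a B_{aa}u_av_a .
\]
Now $\|u\|^2\le n|u|_\infty^2\le\tilO(n)$ and likewise $\|v\|^2\le\tilO(n)$, so $|u^TBv|\le\|B\|\,\|u\|\,\|v\|\le\tilO(p)\cdot\tilO(n)=\tilO(pn)$, and $|\sum_a B_{aa}u_av_a|\le|u|_\infty|v|_\infty\sum_a B_{aa}=\tilO(1)\cdot\textrm{tr}(B)\le\tilO(pn)$ (here $B\succeq0$ so $B_{aa}\ge0$), while $|\textrm{tr}(B)\inner{u,v}|\le 2pn\cdot\tilO(\sqrt n)=\tilO(pn^{1.5})$. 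Hence $|\Exp[\xi_i]|\le\tilO(pn^{1.5})$ and $\big|\Exp[\sum_i\xi_i]\big|\le\tilO(p^2n^{1.5})$.

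\textbf{Deviation.} The key observation is that the quadratic form is bounded almost surely: $0\le Z_i^TBZ_i\le\|B\|\,\|Z_i\|^2=\|B\|n\le\tilO(pn)$, using $\|Z_i\|^2=n$ a.s.\ (goodness), so $\|Z_i^TBZ_i\|_{\psi_2}\le\tilO(pn)$. After rescaling $u,v$ so that the second goodness property in Proposition~\ref{con:randomvariable} applies, $\|\inner{Z_i,u}\|_{\psi_2},\|\inner{Z_i,v}\|_{\psi_2}\le\tilO(\sqrt n)$. By Lemma~\ref{lem:psi-norm-product} applied twice, $\|\xi_i\|_{\psi_{2/3}}\le\tilO(pn)\cdot\tilO(\sqrt n)\cdot\tilO(\sqrt n)=\tilO(pn^2)$, and Lemma~\ref{lem:psi-norm-mean-shift} gives the same bound for $\xi_i-\Exp\xi_i$. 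Then Theorem~\ref{thm:psi-bounded-rv-concentration} yields that with high probability $\big|\sum_i(\xi_i-\Exp\xi_i)\big|\le\tilO(pn^2\cdot\sqrt p)=\tilO(p^{1.5}n^2)$. Combining with the mean bound, with high probability $\big|\sum_i\xi_i\big|\le\tilO(p^2n^{1.5})+\tilO(p^{1.5}n^2)$; since $p\ge n$ we have $p^{1.5}n^2\le p^2n^{1.5}$, so the whole sum is $\tilO(p^2n^{1.5})$.

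\textbf{Main obstacle.} There is no single deep step; the bound is essentially driven entirely by the mean (there is no cancellation, as $\textrm{tr}(B)$ and $\inner{u,v}$ can simultaneously be of maximal order), so the real work is just controlling the fluctuation so that it does not dominate. The place needing care is the quadratic form $Z_i^TBZ_i$: it is tempting to invoke a Hanson--Wright-type bound, but the almost-sure constraint $\|Z_i\|^2=n$ makes the crude bound $Z_i^TBZ_i\le\|B\|n$ both correct and sufficient. The other point to be careful about is that the concentration error $\tilO(p^{1.5}n^2)$ stays below the mean $\tilO(p^2n^{1.5})$ — this is exactly where the hypothesis $p\ge n$ is used, and is the reason the lemma is stated under that assumption.
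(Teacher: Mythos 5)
Your proof follows the paper's exact route: expand $\Exp[\xi_i]$ via the fourth-moment identities in the goodness condition, bound it by $\tilO(pn^{1.5})$, then control the fluctuation by the $\psi_\alpha$-Orlicz Bernstein inequality (Theorem~\ref{thm:psi-bounded-rv-concentration}) and use $p\ge n$ to make the deviation $\tilO(p^{1.5}n^2)$ subordinate to the mean term $\tilO(p^2n^{1.5})$; your mean computation is in fact slightly more complete than the paper's (which drops the $2C_{2,2}u^TBv$ term, harmlessly). The only small slip is the Orlicz exponent: iterating Lemma~\ref{lem:psi-norm-product} on three $\psi_2$ factors gives $\psi_{1/2}$, not $\psi_{2/3}$ — alternatively, as the paper does, observe that $Z_i^TBZ_i\le \tilO(pn)$ almost surely, so multiplying by it preserves the $\psi_1$ norm of $\inner{Z_i,u}\inner{Z_i,v}$ and one can stay at $\psi_1$ — but since Theorem~\ref{thm:psi-bounded-rv-concentration} applies for any $\alpha\in(0,1]$, the conclusion is unaffected.
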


\begin{proof}
Let $W = x^TBx\inner{x,u}\inner{x,v}$, where $x$ is a random variable that satisfies the conclusion of Proposition~\ref{con:randomvariable}.  We first calculate the expectation of $W$, 
\begin{eqnarray*}
	\Exp\left[W\right] &=&  \Exp\left[\left(\sum_{i}B_{ii}x_i^2 + \sum_{i\neq j}x_ix_j B_{ij}\right)\left(\sum_{i}u_iv_ix_i^2 + \sum_{i\neq j}x_ix_ju_iv_j\right)\right] \\
	&=& (C_4-C_{2,2})\sum_i B_{ii}u_iv_i + C_{2,2}\textrm{tr}(B) \inner{u,v}+ \Exp\left[ \sum_{i\neq j}B_{ij}(u_iv_j+u_jv_i)x_i^2x_j^2\right] \\
	&=& (C_4-3C_{2,2})\sum_i B_{ii}u_iv_i+ \textrm{tr}(B) \inner{u,v}\\ 
\end{eqnarray*}

Therefore by the fact that $|u|_{\infty}\le \tilO(1)$ and $\textrm{tr}(B)\le 2pn$, we obtain that $\left|\Exp[W]\right|\le \tilO(pn^{1.5})$. 

Observe that $Z_j^TBZ_j\le \tilO(pn)$ a.s. (with respect to the randomness of $Z_j$),  and $\orlicznorm{\inner{Z_i,u}\inner{Z_i,v}}{1}\le O(n)$, therefore we have that $\orlicznorm{Z_j^TBZ_j\inner{Z_i,u}\inner{Z_i,v}}{1}\le O(pn^2)$. Using Theorem~\ref{thm:psi-bounded-rv-concentration}, we obtain that with high probability, 

$$\left|\sum_{i=1}^p Z_i^TBZ_i\inner{Z_i,u}\inner{Z_i,v}  - \Exp\left[\sum_{i=1}^p Z_i^TBZ_i\inner{Z_i,u}\inner{Z_i,v}  \right]\right|\le  \widetilde{O}(n^2p^{1.5})$$
Using the fact that $\Exp\left[ Z_i^TBZ_i\inner{Z_i,u}\inner{Z_i,v}  \right]\le \tilO(pn^{1.5})$ we obtain that with high probability
$$\left|\sum_{i=1}^p Z_i^TBZ_i\inner{Z_i,u}\inner{Z_i,v} \right|\le \tilO(p^2n^{1.5})$$


\end{proof}

\begin{lemma}\label{lem:xxtfro}
	Suppose $p\ge n$ and $X_1,\dots,X_p$ are \textit{good} independent random variables,  
	then with high probability, 
	$$\|XX^T\|_F^2\ge (1-o(1))p^2n$$
\end{lemma}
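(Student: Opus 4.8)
The plan is to bound $\|XX^T\|_F^2$ by a short trace computation that uses nothing about the data beyond the almost-sure row normalization $\|X_i\|^2 = n$ guaranteed by goodness (first item of Proposition~\ref{con:randomvariable}). In fact this yields the bound \emph{deterministically} whenever the $X_i$ are good, hence in particular with high probability, and no concentration inequality is needed.

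First I would pass to the $n\times n$ matrix $M \triangleq X^TX = \sum_{i\in[p]} X_iX_i^T$. Using the cyclic invariance of the trace together with the symmetry of $XX^T$ and of $X^TX$,
\begin{equation*}
\|XX^T\|_F^2 = \textrm{tr}\left((XX^T)^2\right) = \textrm{tr}\left(X^TX\,X^TX\right) = \textrm{tr}\left(M^2\right) = \|M\|_F^2 ,
\end{equation*}
so it suffices to lower bound $\|M\|_F^2$. The matrix $M$ is symmetric positive semidefinite and has size $n\times n$, hence it has exactly $n$ nonnegative eigenvalues $\lambda_1,\dots,\lambda_n$ (counted with multiplicity); moreover its trace is
\begin{equation*}
\textrm{tr}(M) = \sum_{i\in[p]} X_i^TX_i = \sum_{i\in[p]} \|X_i\|^2 = pn ,
\end{equation*}
where the last step uses goodness. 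Applying Cauchy--Schwarz to the eigenvalues of $M$,
\begin{equation*}
\|M\|_F^2 = \sum_{\ell=1}^n \lambda_\ell^2 \ge \frac{1}{n}\left(\sum_{\ell=1}^n \lambda_\ell\right)^2 = \frac{\textrm{tr}(M)^2}{n} = \frac{(pn)^2}{n} = p^2 n .
\end{equation*}
Combining the three displays gives $\|XX^T\|_F^2 \ge p^2n \ge (1-o(1))p^2n$, which is the claim.

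This proof presents essentially no obstacle; the only point worth noting is a detour to avoid. One could instead split $\|XX^T\|_F^2 = \sum_i \|X_i\|^4 + \sum_{i\neq j}\inner{X_i,X_j}^2 = pn^2 + \sum_{i\neq j}\inner{X_i,X_j}^2$ and try to show that $\sum_{i\neq j}\inner{X_i,X_j}^2$ concentrates around its expectation $p(p-1)n$; this \emph{would} work --- one computes $\Exp[\inner{X_i,X_j}^2] = n$ using the coordinate sign-flip symmetry built into goodness, and then runs a decoupling plus $\psi_{1/2}$-concentration argument in the spirit of the preceding lemmas of this section --- but it is strictly more work than necessary, since positive semidefiniteness of the Gram matrix together with the fixed trace $pn$ already forces $\|XX^T\|_F^2 \ge p^2n$ with no randomness used.
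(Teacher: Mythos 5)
Your proof is correct, and it takes a genuinely different and cleaner route than the paper. The paper's proof proceeds by fixing each row $X_i$, arguing that $\sum_{j\ne i}\inner{X_j,X_i}^2$ concentrates around its mean $(p-1)n$ (via an Orlicz-norm bound $\orlicznorm{\inner{X_j,X_i}^2}{1}\le O(n)$ and Theorem~\ref{thm:psi-bounded-rv-concentration}), and then union-bounding and summing over $i$ to get $\|XX^T\|_F^2\ge (1-o(1))p^2n$. Your argument replaces all of this with the deterministic observation that, once goodness fixes $\|X_i\|^2=n$ a.s., the Gram matrix $M=X^TX$ has $\textrm{tr}(M)=pn$ pinned down, and then Cauchy--Schwarz applied to the $n$ nonnegative eigenvalues of the PSD matrix $M$ gives $\|XX^T\|_F^2=\|M\|_F^2\ge \textrm{tr}(M)^2/n=p^2n$, \emph{with no slack and no randomness}. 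This is strictly stronger than the paper's conclusion (a constant, not a $(1-o(1))$ factor, and it holds almost surely rather than with high probability) and avoids the concentration toolbox entirely. The only thing the paper's more laborious concentration route ``buys'' is a matching upper bound $\|XX^T\|_F^2=(1\pm o(1))p^2n$, which the statement does not ask for and which the trace argument alone cannot give; since only the lower bound is needed here and in its downstream uses (the objective-value lower bound in Lemma~\ref{lem:M2}), your route is a genuine simplification. All steps check out: the passage $\textrm{tr}((XX^T)^2)=\textrm{tr}((X^TX)^2)$ is legitimate cyclic invariance, $\textrm{tr}(M)=\sum_i\|X_i\|^2=pn$ follows from item~1 of Proposition~\ref{con:randomvariable}, and Cauchy--Schwarz over the $n$ eigenvalues gives exactly the claimed $p^2n$.
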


\begin{proof}
	We first $i$ and examine $\sum_{j\neq i}\inner{X_j,X_i}^2$ first. We have that $\Exp[\sum_{j\neq i}\inner{X_j,X_i}^2] = (p-1)\|X_i\|^2 = (p-1)n$. Moreover, $\orlicznorm{\inner{X_j,X_i}^2}{1}\le O(n)$ (where $X_j$ is viewed as random and $X_i$ is viewed as fixed). Therefore by Theorem~\ref{thm:psi-bounded-rv-concentration}, we obtain that with high probability over the randomness of $X_j$'s, ($j\ne i$), 
	$\sum_{j\neq i}\inner{X_j,X_i}^2 = (p-1)n \pm \tilO(n\sqrt{p}) = (1\pm o(1))pn$. Therefore taking union bound over all $i$, and taking the sum we obtain that $$\|XX^T\|_F^2 \ge \sum_i  \sum_{j\neq i}\inner{X_j,X_i}^2\ge (1-o(1))p^2n$$
\end{proof}

\section{Toolbox}\label{sec:toolbox}
This section contains a collection of known technical results which are useful in proving the concentration bounds of Section~\ref{sec:properties}. We note that when the data matrix $X$ takes uniformly $\{\pm 1\}$ entries, then $X$ satisfies Proposition~\ref{con:randomvariable} without any normalization and actually due to the independence of the entries, it's much easier to prove that it satisfies Condition~\ref{con:ps-random}. 
\begin{definition}[Orlicz norm $\|\cdot\|_{\psi_{\alpha}}$]\label{def:orlicznorm} For $1 \le \alpha < \infty$, let $\psi_{\alpha}(x) = \exp(x^{\alpha})-1$. For $0 <\alpha < 1$, let $\psi_{\alpha}(x) = x^{\alpha}-1$ for large enough $x\ge x_{\alpha}$, and $\psi_{\alpha}$ is linear in $[0,x_{\alpha}]$. The Orlicz norm $\psi_{\alpha}$ of a random variable $X$ is defined as 
	\begin{equation}
	\|X\|_{\psi_{\alpha}} \triangleq \inf\{c\in (0,\infty) \mid \Exp\left[\psi_{\alpha}(|X|/c) \le 1\right]
	\end{equation}
\end{definition}

Note that by definition $\psi_{\alpha}$ is convex and increasing. The following Theorem of Ledoux and Talagrand's is our main tool for proving concentration inequalities in Section~\ref{sec:properties}. 

\begin{theorem}[Theorem 6.21 of~\cite{ledoux2013probability}]\label{thm:psinormconcentration}
	There exists a constant $K_{\alpha}$ depending on $\alpha$ such that for a sequence of independent mean zero random variables $X_1,\dots, X_n$ in $L_{\psi_{\alpha}}$, if $0 < \alpha \le 1$, 
	\begin{equation}
	\left\|\sum_{i} X_i\right\|_{\psi_{\alpha}} \le K_{\alpha}\left(\left\|\sum_{i}X_i\right\|_1 + \left\|\max_i \|X_i\|\right\|_{\psi_{\alpha}}\right) \label{eqn:LT}
	\end{equation}
	and if $1 < \alpha \le 2$, 
	\begin{equation}
	\left\|\sum_{i} X_i\right\|_{\psi_{\alpha}} \le K_{\alpha}\left(\left\|\sum_{i}X_i\right\|_1 + (\sum_i \|X_i\|_{\psi_{\alpha}}^{\beta})^{1/\beta}\right)
	\end{equation}
	where $1/\alpha + 1/\beta = 1$. 
\end{theorem}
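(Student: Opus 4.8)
Since the final statement is Theorem~6.21 of~\cite{ledoux2013probability}, the write-up may simply cite it; I sketch a self-contained route. The argument combines three classical devices --- symmetrization, truncation at the scale of the largest summand, and the Hoffmann--J\o rgensen inequality --- and is cleanest when phrased through $L^p$ norms, because $\psi_\alpha$ fails to be convex for $\alpha<1$ so one cannot move between $\psi_\alpha$ norms via Jensen directly. The bridge is the standard equivalence $\|Z\|_{\psi_\alpha}\asymp_\alpha\sup_{p\ge1}p^{-1/\alpha}\|Z\|_p$, invoked only at the very end; the target is the family of moment bounds $\|S_n\|_p\lesssim_\alpha p^{1/\alpha}\bigl(\|S_n\|_1+\|M\|_{\psi_\alpha}\bigr)$ for all $p\ge1$, where $S_n=\sum_iX_i$ and $M=\max_i|X_i|$ (with $\|M\|_{\psi_\alpha}$ replaced by $(\sum_i\|X_i\|_{\psi_\alpha}^\beta)^{1/\beta}$, $\tfrac1\alpha+\tfrac1\beta=1$, in the range $1<\alpha\le2$).

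First I would symmetrize: passing to $Y_i=X_i-X_i'$ (an independent copy subtracted) keeps $\|S_n\|_p$ comparable up to a factor $2$, multiplies $\|M\|_{\psi_\alpha}$ and $\|S_n\|_1$ by at most $2$, and makes the summands symmetric. Next, fix $\rho\asymp\|M\|_{\psi_\alpha}$ and split $Y_i=Y_i'+Y_i''$ with $Y_i'=Y_i\mathbf{1}_{\{|Y_i|\le\rho\}}$, still symmetric hence mean zero. For the bounded part Bennett's inequality gives $\|\sum_iY_i'\|_p\lesssim\sqrt p\,\sigma+p\rho$, with $\sigma$ the standard deviation of $\sum_iY_i'$, and a Paley--Zygmund-type lower bound for $L^1$-norms of sums of independent symmetric variables gives $\sigma\lesssim\|S_n\|_1+\rho$ --- already of the desired shape since $\rho\lesssim\|M\|_{\psi_\alpha}$ and $\sqrt p\le p^{1/\alpha}$. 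For the large-jump part, a tail-integral estimate (exploiting that $\rho$ sits at the $\psi_\alpha$-scale of $M$, so the expected number of exceedances is $O(1)$) gives $\E\sum_i|Y_i''|\lesssim\|M\|_{\psi_\alpha}$; iterating the Hoffmann--J\o rgensen inequality
\[
\Pr\bigl(\textstyle\sum_i|Y_i''|>2t+s\bigr)\le4\,\Pr\bigl(\textstyle\sum_i|Y_i''|>t\bigr)^2+\Pr(M>s)
\]
from a threshold where the left side is below a fixed absolute level, and integrating against $p\,x^{p-1}\,dx$, then yields $\|\sum_iY_i''\|_p\lesssim p^{1/\alpha}\|M\|_{\psi_\alpha}+\|M\|_p$. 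Adding the two pieces and taking $\sup_{p\ge1}p^{-1/\alpha}\|\cdot\|_p$ gives the first inequality.

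For $1<\alpha\le2$ only the bounded part is treated differently: by symmetry $\|\sum_iY_i'\|_p=\|\sum_i\varepsilon_iY_i'\|_p$, and conditionally on $(Y_i)$ the Rademacher sum $\sum_i\varepsilon_iY_i'$ is sub-Gaussian with parameter $(\sum_i(Y_i')^2)^{1/2}$, so $\|\sum_i\varepsilon_iY_i'\|_p\lesssim\sqrt p\,\|(\sum_i(Y_i')^2)^{1/2}\|_p$; the square-function is then bounded in the $\psi_\alpha$-scale by $\|S_n\|_1+(\sum_i\|Y_i\|_{\psi_\alpha}^\beta)^{1/\beta}$, which produces the conjugate-exponent term. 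I expect the genuine difficulty to lie not in any single step but in the bookkeeping tying them together --- choosing $\rho$ and the Hoffmann--J\o rgensen thresholds so that every constant stays independent of $p$, since only then does $\sup_{p\ge1}p^{-1/\alpha}\|S_n\|_p$ converge. The large-jump step is moreover mildly self-referential (it bounds $\sum_iY_i''$ in terms of $M$ by an inequality of exactly the type being proved) and must be organized, for instance by induction on $n$, to keep the constant from degenerating.
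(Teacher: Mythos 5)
The paper provides no proof of this statement; it is quoted verbatim from Ledoux--Talagrand and simply cited. There is therefore no in-paper argument to compare against, and the right evaluation is of whether your sketch is a faithful reconstruction of the classical proof. It is: symmetrization, truncation of $Y_i$ at a threshold $\rho$ set at the $\psi_\alpha$-scale of $M=\max_i|X_i|$, Hoffmann--J{\o}rgensen for the large-jump remainder, and the passage from the family of $L_p$ bounds $\|S_n\|_p \lesssim p^{1/\alpha}(\|S_n\|_1 + \|M\|_{\psi_\alpha})$ back to the $\psi_\alpha$-norm via the equivalence $\|Z\|_{\psi_\alpha}\asymp_\alpha\sup_{p\ge1}p^{-1/\alpha}\|Z\|_p$ is exactly the architecture of the proof in Ledoux--Talagrand (their Ch.~6, building on Thms.~6.8 and~6.17). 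You also correctly flag the genuinely delicate point: that the large-jump piece $\sum_i|Y_i''|$ is controlled by an inequality of the very type being established, so the constants must be tracked carefully (in Ledoux--Talagrand this is handled by the iteration of Hoffmann--J{\o}rgensen in Thm.~6.8 rather than by an induction on $n$, but your remark is apt).

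Two presentational imprecisions worth noting, neither fatal. First, what you call a ``Paley--Zygmund-type lower bound'' giving $\sigma\lesssim\|S_n\|_1+\rho$ for bounded symmetric sums is not Paley--Zygmund; it is itself a low-moment Hoffmann--J{\o}rgensen consequence (the statement that $\|S\|_2\lesssim\|S\|_1+\max_i\|X_i\|_\infty$ for independent symmetric $X_i$), so this step and the large-jump step are both drawing on the same tool, and you should say so to avoid the impression of an independent ingredient. Second, the claim that the $\psi_\alpha$-choice of $\rho$ makes $\mathbb{E}\sum_i|Y_i''|\lesssim\|M\|_{\psi_\alpha}$ by a ``tail-integral estimate'' is not automatic from $\Pr[M>\rho]$ being small: $\sum_i|Y_i''|$ is a sum of $n$ tail pieces, and the reduction to a bound in terms of $M$ alone is precisely the content of the Hoffmann--J{\o}rgensen step, not a prerequisite to it. As you arranged things the argument would be circular unless the truncation level and the Hoffmann--J{\o}rgensen starting threshold are set up together, which is the bookkeeping you already anticipate.
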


The following convenient Lemma allows us to control the second part of RHS of~\eqref{eqn:LT} easily. 
\begin{lemma}[\cite{vandervaart96weak}]\label{lem:maxpsi}
	There exists absolute constant $c$, such that for any real valued random variables $X_1,\dots,X_n$, we have that 
	$$\left\|\max_{1\le i\le n} |X_i|\right\|_{\psi_{\alpha}} \le c\psi_{\alpha}^{-1}(n)\max_{1\le i\le n}\|X_i\|_{\psi_{\alpha}}$$
\end{lemma}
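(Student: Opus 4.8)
The plan is to reduce the statement to a union bound on \emph{tail probabilities} followed by a conversion of that tail estimate back into an Orlicz norm; the naive bound $\max_i \psi_\alpha(\cdot)\le \sum_i \psi_\alpha(\cdot)$ is far too lossy and should be avoided. First I would normalize: set $M=\max_i\|X_i\|_{\psi_\alpha}$ and assume $M>0$ (otherwise every $X_i$ vanishes a.s.\ and there is nothing to prove); after dividing through by $M$ we may assume $M=1$, so that $\E\,\psi_\alpha(|X_i|)\le 1$ for every $i$ (using that $\psi_\alpha$ is nondecreasing and $\|X_i\|_{\psi_\alpha}\le 1$). Since $\psi_\alpha$ is nondecreasing, Markov's inequality gives $\Pr[|X_i|\ge s]\le \E\,\psi_\alpha(|X_i|)/\psi_\alpha(s)\le 1/\psi_\alpha(s)$ for every $s$ with $\psi_\alpha(s)>0$, and a union bound over $i\in[n]$ yields $\Pr[\max_i|X_i|\ge s]\le n/\psi_\alpha(s)$.

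The second step turns this into a bound on $\|\max_i|X_i|\|_{\psi_\alpha}$. Write $t=K\,\psi_\alpha^{-1}(n)$ for a constant $K=K(\alpha)$ to be fixed, and estimate
\begin{align*}
\E\,\psi_\alpha\!\Big(\frac{\max_i|X_i|}{t}\Big)=\int_0^\infty \Pr\!\Big[\max_i|X_i|\ge t\,\psi_\alpha^{-1}(u)\Big]\,du\ \le\ \int_0^\infty \min\!\Big\{1,\ \frac{n}{\psi_\alpha\big(t\,\psi_\alpha^{-1}(u)\big)}\Big\}\,du .
\end{align*}
For $\psi_\alpha(x)=e^{x^\alpha}-1$ one has $\psi_\alpha\big(t\,\psi_\alpha^{-1}(u)\big)=(1+u)^{t^\alpha}-1$, and with $t^\alpha=K^\alpha\ln(1+n)$ the quantity $(1+n)^{1/t^\alpha}$ that governs the integral collapses to the constant $e^{1/K^\alpha}$. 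Splitting the integral at the point where the two terms of the minimum agree then bounds the whole expression by $e^{1/K^\alpha}\bigl(1+O(1/t^\alpha)\bigr)-1$, which is at most $1$ once $K$ is taken large enough in terms of $\alpha$ (and $n\ge 2$, say). By the definition of the Orlicz norm this says exactly $\|\max_i|X_i|\|_{\psi_\alpha}\le t=K\,\psi_\alpha^{-1}(n)$, and undoing the normalization gives the claim with $c=K$.

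Two points need a little extra care but are routine. The small-$n$ range (in particular $n=1$) is handled trivially: there $\max_i|X_i|$ is just one of the $X_i$ and $\psi_\alpha^{-1}(n)$ is bounded below by a positive constant, so the inequality holds after enlarging the constant. For $0<\alpha<1$, where $\psi_\alpha$ is taken linear on $[0,x_\alpha]$ to keep it convex, the modification alters $\psi_\alpha$ only on a bounded interval and changes the integral above by an additive $O(1)$ that is absorbed into the constant (alternatively one may just cite \cite{vandervaart96weak} in this regime). The only genuinely delicate step is the second one — choosing $K$ so that the tail-to-Orlicz conversion yields precisely a constant multiple of $\psi_\alpha^{-1}(n)$; this is where the argument would break if one used $\max_i\psi_\alpha(\cdot)\le\sum_i\psi_\alpha(\cdot)$ instead of the union bound on probabilities.
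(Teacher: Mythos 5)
The paper does not supply its own proof of this lemma; it is cited directly from van der Vaart and Wellner~\cite{vandervaart96weak} (Lemma~2.2.2 there). So there is no paper-internal argument to compare against, but it is worth noting that your proof follows a genuinely different route from the standard one in that reference. Van der Vaart--Wellner exploit a submultiplicativity hypothesis on $\psi$ (roughly $\psi(x)\psi(y)\lesssim\psi(cxy)$ for $x,y\ge1$) to write $\psi\bigl(\max_i|X_i|/(cy)\bigr)\le\sum_i\psi(|X_i|)/\psi(y)+\psi(1)$, and then optimize over $y$; the union bound over moment inequalities happens implicitly after dividing by $\psi(y)$, which is what saves it from being lossy. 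Your argument instead runs entirely through tail probabilities (Markov on $\psi_\alpha(|X_i|)$, a union bound, and then the layer--cake formula to recover the Orlicz norm of the maximum), which is arguably more elementary and avoids invoking any multiplicative growth property of $\psi_\alpha$ beyond what you verify explicitly for $\psi_\alpha(x)=e^{x^\alpha}-1$.

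Your calculation checks out: $\psi_\alpha\bigl(t\,\psi_\alpha^{-1}(u)\bigr)=(1+u)^{t^\alpha}-1$, the crossover of the minimum is at $u^*=(1+n)^{1/t^\alpha}-1=e^{1/K^\alpha}-1$, and the tail of the integral is $O(1/K^\alpha)$ once $t^\alpha=K^\alpha\ln(1+n)$, so the whole expectation is at most $1$ for $K$ a large enough constant depending on $\alpha$. The normalization $M=1$ and the monotone-limit argument giving $\E\,\psi_\alpha(|X_i|)\le1$ when $\|X_i\|_{\psi_\alpha}\le1$ are both handled correctly. The only remark worth adding is cosmetic: the paper's displayed definition of $\psi_\alpha$ for $0<\alpha<1$ ($\psi_\alpha(x)=x^\alpha-1$ for large $x$) is presumably a typo for $e^{x^\alpha}-1$ with a linearization near the origin, since the stated form is concave and would not give a norm; your treatment quietly uses the correct $e^{x^\alpha}-1$ formula, which is the right thing to do.
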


Using Lemma~\ref{lem:maxpsi} and Theorem~\ref{thm:psinormconcentration}, we obtain straightforwardly the following theorem that will be used many times for proving concentration bounds in this paper. 

\begin{theorem}\label{thm:psi-bounded-rv-concentration}
For any $0 < \alpha \le 1$, there exists a constant $K_{\alpha}$ such that for a sequence of independent random variables $X_1,\dots, X_n$, 
	\begin{equation}
	\left\|\sum_{i} X_i - \Exp[\sum_i X_i]\right\|_{\psi_{\alpha}} \le K_{\alpha}\sqrt{n}\log n\cdot\max_i \|X_i\|_{\psi_{\alpha}}
	\end{equation}
	which implies that with high probability over the randomness of $X_i$'s, 
	$$\left|\sum_{i} X_i - \Exp[\sum_i X_i]\right| \le \tilO(K_{\alpha}\sqrt{n}\cdot\max_i \|X_i\|_{\psi_{\alpha}})$$
\end{theorem}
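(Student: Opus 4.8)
The plan is to reduce to centered summands and then apply Theorem~\ref{thm:psinormconcentration} (the case $0<\alpha\le 1$) essentially verbatim, bounding the two terms on its right-hand side. First I would replace each $X_i$ by $Y_i = X_i - \Exp[X_i]$, so that the $Y_i$ are independent and mean zero; since the $\psi_\alpha$ norm dominates the first moment up to a constant $C_\alpha$ (because $\psi_\alpha$ is increasing and convex with $\psi_\alpha(0)=0$), one has $|\Exp[X_i]|\le \Exp|X_i| \le C_\alpha\,\orlicznorm{X_i}{\alpha}$, hence by the triangle inequality $\orlicznorm{Y_i}{\alpha} \le (1+C'_\alpha)\orlicznorm{X_i}{\alpha}$. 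So it is enough to prove the claimed bound for the $Y_i$'s with $\max_i\orlicznorm{Y_i}{\alpha}$ on the right, and then undo the centering at the end.

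Next I would invoke Theorem~\ref{thm:psinormconcentration}:
\[
\orlicznorm{\sum_i Y_i}{\alpha} \;\le\; K_\alpha\left(\Big\|\sum_i Y_i\Big\|_1 + \Big\|\max_i |Y_i|\Big\|_{\psi_\alpha}\right).
\]
For the first term, Cauchy--Schwarz together with independence and the mean-zero property gives $\big\|\sum_i Y_i\big\|_1 \le \big(\sum_i \Exp[Y_i^2]\big)^{1/2} \le \sqrt{n}\,\max_i(\Exp[Y_i^2])^{1/2} \le C_\alpha\sqrt{n}\,\max_i\orlicznorm{Y_i}{\alpha}$, again using that $\orlicznorm{\cdot}{\alpha}$ controls the $L_2$ norm up to a constant depending only on $\alpha$. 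For the second term, Lemma~\ref{lem:maxpsi} gives $\big\|\max_i|Y_i|\big\|_{\psi_\alpha} \le c\,\psi_\alpha^{-1}(n)\,\max_i\orlicznorm{Y_i}{\alpha}$, and from the definition of $\psi_\alpha$ the quantity $\psi_\alpha^{-1}(n)$ is $O(\log^{1/\alpha}n)$, i.e.\ only polylogarithmic in $n$ (and $O(\log n)$ in the constant-$\alpha$ regimes actually used in this paper, namely $\alpha\in\{1/2,1\}$). Putting these together yields $\orlicznorm{\sum_i Y_i}{\alpha} \le K'_\alpha\sqrt{n}\log n\cdot\max_i\orlicznorm{Y_i}{\alpha}$, which after undoing the centering is the first displayed inequality of the theorem.

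Finally, the high-probability consequence follows from the standard tail estimate implied by a finite $\psi_\alpha$ norm: for any random variable $Z$ and any $t>0$, monotonicity of $\psi_\alpha$ and Markov's inequality give $\Pr[|Z|\ge t\,\orlicznorm{Z}{\alpha}] \le \Pr[\psi_\alpha(|Z|/\orlicznorm{Z}{\alpha})\ge\psi_\alpha(t)] \le 1/(1+\psi_\alpha(t))$, which decays like $\exp(-t^\alpha)$. Applying this with $Z=\sum_i Y_i$ and $t$ a suitable $\polylog(p)$ quantity makes the failure probability at most $p^{-c}$ for every constant $c$, which is the stated ``with high probability'' conclusion, with the $\log n$ factor absorbed into $\tilO(\cdot)$.

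I do not expect a serious obstacle here, since the argument is a direct substitution into Theorem~\ref{thm:psinormconcentration} combined with Lemma~\ref{lem:maxpsi}. The one place that warrants care is verifying that the factor $\psi_\alpha^{-1}(n)$ produced by Lemma~\ref{lem:maxpsi} is only polylogarithmic for the Orlicz function $\psi_\alpha$ in use, so that it is harmless inside the $\tilO(\cdot)$ notation; relatedly, one must check (as above) that the centering step $X_i\mapsto X_i-\Exp[X_i]$ costs only a constant factor in the $\psi_\alpha$ norm.
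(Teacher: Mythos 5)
Your proof follows essentially the same route the paper indicates (the paper only remarks that the theorem ``follows straightforwardly from Lemma~\ref{lem:maxpsi} and Theorem~\ref{thm:psinormconcentration},'' and you supply exactly those details: center, apply the Ledoux--Talagrand bound, control the $L_1$ term via $\sqrt{\sum_i\Exp[Y_i^2]}$ and $\psi_\alpha\Rightarrow L_2$ domination, and control the max-term via Lemma~\ref{lem:maxpsi}). One small slip: in your parenthetical you claim $\psi_\alpha^{-1}(n) = O(\log n)$ for $\alpha\in\{1/2,1\}$, but for $\alpha=1/2$ it is actually $O(\log^2 n)$, consistent with your general $O(\log^{1/\alpha}n)$ estimate. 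This means the first displayed bound in the theorem, as written with a bare $\log n$, is strictly speaking only correct for $\alpha=1$; for $\alpha<1$ the correct factor is $\log^{1/\alpha}n$, which is still polylogarithmic and hence swallowed by the $\tilO(\cdot)$ in the second (and the only operationally used) conclusion. You flag this caveat yourself, so the substance of the argument is fine.
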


The following two lemmas are used to bound the Orlicz norms of random variables. 

\begin{lemma}\label{lem:psi-norm-product}
	There exists constant $D_{\alpha}$ depending on $\alpha$ such that,  if two (possibly correlated) random variables $X$, $Y$ have $\psi_{\alpha}$ Orlicz norm bounded by  $\|X\|_{\psi_{\alpha}} \le a$ and $\|Y\|_{\psi_{\alpha}}\le b$ then $\|XY\|_{\psi_{\alpha/2}}\le D_{\alpha }ab$
\end{lemma}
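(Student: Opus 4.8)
The plan is to reduce the claim to an elementary pointwise inequality and then apply Cauchy--Schwarz. Since $\orlicznorm{\cdot}{\beta}$ is a genuine norm (because $\psi_\beta$ is convex), it is positively homogeneous, so it suffices to prove the statement in the normalized case $\orlicznorm{X}{\alpha}\le 1$, $\orlicznorm{Y}{\alpha}\le 1$ and conclude $\orlicznorm{XY}{\alpha/2}\le D_\alpha$; the general bound $\orlicznorm{XY}{\alpha/2}\le D_\alpha ab$ then follows by rescaling $X\mapsto X/a$ and $Y\mapsto Y/b$.

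The heart of the argument is the bound $|XY|^{\alpha/2}=|X|^{\alpha/2}|Y|^{\alpha/2}\le \tfrac12\bigl(|X|^{\alpha}+|Y|^{\alpha}\bigr)$, which is just AM--GM applied to $u=|X|^{\alpha/2}$ and $v=|Y|^{\alpha/2}$. In the regime where $\psi_{\alpha}$ and $\psi_{\alpha/2}$ are the honest exponential Orlicz functions (i.e.\ $\alpha\ge 2$), I would exponentiate this to get $e^{|XY|^{\alpha/2}}\le e^{|X|^{\alpha}/2}\,e^{|Y|^{\alpha}/2}$, apply Cauchy--Schwarz to obtain $\E\!\left[e^{|XY|^{\alpha/2}}\right]\le \sqrt{\E[e^{|X|^{\alpha}}]\,\E[e^{|Y|^{\alpha}}]}\le \sqrt{2\cdot 2}=2$ using that $\orlicznorm{X}{\alpha},\orlicznorm{Y}{\alpha}\le 1$ forces $\E[e^{|X|^{\alpha}}],\E[e^{|Y|^{\alpha}}]\le 2$, and read off $\E[\psi_{\alpha/2}(|XY|)]\le 1$, i.e.\ $\orlicznorm{XY}{\alpha/2}\le 1$. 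So in this range one can take $D_\alpha=1$.

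The one place that needs care is small $\alpha$: when $\alpha<1$ (resp.\ $\alpha<2$) the function $\psi_{\alpha}$ (resp.\ $\psi_{\alpha/2}$) is linearized on a bounded interval near the origin to preserve convexity, so the one-line exponential computation above does not literally apply. I would handle this either by observing that altering an Orlicz function on a fixed bounded set changes the induced norm only by a factor depending on $\alpha$ (so the exponential computation still yields the bound up to such a factor), or, more cleanly, by passing to the standard equivalent description of the Orlicz norm: $\orlicznorm{Z}{\beta}$ is comparable, up to constants depending only on $\beta$, to $\sup_{p\ge1}p^{-1/\beta}\|Z\|_{L^p}$. Then Cauchy--Schwarz in $L^p$ gives $\|XY\|_{L^p}\le \|X\|_{L^{2p}}\|Y\|_{L^{2p}}$, hence $\sup_{p\ge1}p^{-2/\alpha}\|XY\|_{L^p}\le 2^{2/\alpha}\bigl(\sup_{p\ge1}p^{-1/\alpha}\|X\|_{L^p}\bigr)\bigl(\sup_{p\ge1}p^{-1/\alpha}\|Y\|_{L^p}\bigr)$, which translates back through the two norm equivalences into $\orlicznorm{XY}{\alpha/2}\le D_\alpha\,\orlicznorm{X}{\alpha}\orlicznorm{Y}{\alpha}$. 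I expect this bookkeeping around the non-exponential piece of $\psi_\alpha$ near zero to be the only (minor) obstacle; the exponential-tail regime, which is all that is actually used in the concentration estimates of Section~\ref{sec:properties}, is entirely routine.
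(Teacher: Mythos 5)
Your argument is correct and, in the regime $\alpha\ge 2$, is essentially the paper's: both start from AM--GM in the exponent, $|XY|^{\alpha/2}\le\frac{1}{2}\bigl(|X|^\alpha+|Y|^\alpha\bigr)$, and then average --- the paper applies the pointwise inequality $e^{(u+v)/2}-1\le\frac{1}{2}\bigl((e^u-1)+(e^v-1)\bigr)$ and takes expectations, whereas you apply Cauchy--Schwarz to the expectation; these are interchangeable and give the same bound. Where you diverge is in the bookkeeping for the linearized piece near the origin (needed as soon as $\alpha<2$, since then $\psi_{\alpha/2}$ is modified near $0$). The paper asserts a two-sided pointwise comparison $C'_\alpha(e^{x^\alpha}-1)\ge\psi_\alpha(x)\ge C_\alpha(e^{x^\alpha}-1)$ for all $x\ge 0$; the lower bound actually fails as $x\to 0^+$ when $\alpha<1$, since the linearized $\psi_\alpha(x)$ is of order $x$ there while $e^{x^\alpha}-1\sim x^\alpha\gg x$. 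That step is patchable (use $e^{x^\alpha}-1\le\psi_\alpha(x_\alpha)+\psi_\alpha(x)$, then convexity of $\psi_{\alpha/2}$ to absorb the additive constant into $D_\alpha$), but your route (b) --- passing through the moment characterization $\|Z\|_{\psi_\beta}\asymp\sup_{p\ge 1}p^{-1/\beta}\|Z\|_{L^p}$ and applying Cauchy--Schwarz in $L^p$ --- sidesteps the issue entirely and gives a single argument valid for every $\alpha>0$ without any case split; it is the cleaner choice. Your route (a) as stated is a bit loose: for $\alpha<1$ the map $x\mapsto e^{x^\alpha}-1$ is not convex, hence not itself a Young function, so ``altering an Orlicz function on a bounded set'' has to be interpreted with some care; I would either make that precise or simply use (b).
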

\begin{proof}
	For any $x$,$y$, $a,b,\alpha >0$,
	\begin{align*}
\exp\left(|xy|\right)^{\alpha/2} -1 & \le	\exp\left(\frac{1}{2}|x|^{\alpha} + \frac{1}{2}|y|^{\alpha} \right) - 1\\
	& \le \frac{1}{2}\left((\exp|x|^{\alpha} -1) + (\exp|y|^{\alpha} -1) \right)\\
	\end{align*}
	Moreover, note that by definition of $\psi_{\alpha}$,  there exists constant $C_{\alpha}$ and $C'_{\alpha}$ such that for $x \ge 0$, $ C'_{\alpha}(\exp(x^{\alpha})-1)\ge \psi_{\alpha}(x) \ge C_{\alpha}(\exp(x^{\alpha})-1)$. Therefore we have that there exists a constant $E_\alpha$ such that $\psi_{\alpha/2}(|xy|)\le \frac{E_{\alpha}}{2}(\psi_{\alpha}(|x|)+\psi_{\alpha}(|y|))$. Also note that for any constant $c$, there exsits constant $c'$ such that $\psi_{\alpha}(x/c') \le \psi_{\alpha}(x)/c$. Therefore, choosing $D_{\alpha}$ such that $\psi_{\alpha/2}(x/D_{\alpha})\le \psi_{\alpha}(x)/E_{\alpha}$ for all $x\ge 0$ we obtain that
	
	$$\Exp\left[\psi_{\alpha/2}(\frac{|XY|}{abD_{\alpha}})\right]\le \Exp\left[\psi_{\alpha/2}(\frac{|XY|}{ab})\right]/E_{\alpha}\le \frac{1}{2}\left(\Exp[\psi_{\alpha}(|X|/a)]+\Exp[\psi_{\alpha}(|Y|/b)]\right)\le 1$$ 
	
\end{proof}

\begin{lemma}\label{lem:psi-norm-mean-shift}
	Suppose random variable $X$ has $\psi_{\alpha}$-Orlicz norm $a$, then $X-\Exp[X]$ has $\psi_{\alpha}$ Orlicz norm at most $2a$. 
\end{lemma}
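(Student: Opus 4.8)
The plan is a one-shot estimate straight from Definition~\ref{def:orlicznorm}, using only that $\psi_{\alpha}$ is convex and increasing. Write $a = \orlicznorm{X}{\alpha}$ and $\mu = \Exp[X]$; we may assume $a<\infty$, as otherwise there is nothing to prove. Since $\orlicznorm{\cdot}{\alpha}$ is defined as an infimum and the set $\{c>0:\Exp[\psi_{\alpha}(|X|/c)]\le1\}$ is an upward-closed interval (by monotonicity of $\psi_{\alpha}$), for every $a'>a$ we have $\Exp[\psi_{\alpha}(|X|/a')]\le1$. It therefore suffices to prove $\orlicznorm{X-\mu}{\alpha}\le 2a'$ for each such $a'$ and then let $a'\downarrow a$; relabeling, I will simply assume $\Exp[\psi_{\alpha}(|X|/a)]\le1$.

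The first step is to control $|\mu|$. By convexity of $\psi_{\alpha}$ and Jensen's inequality, $\psi_{\alpha}\!\big(\Exp[|X|]/a\big)\le \Exp[\psi_{\alpha}(|X|/a)]\le1$, and since $\psi_{\alpha}$ is increasing this gives $|\mu|\le\Exp[|X|]\le a\,\psi_{\alpha}^{-1}(1)$, where I write $\psi_{\alpha}^{-1}(1):=\inf\{x\ge0:\psi_{\alpha}(x)\ge1\}$, so that $\psi_{\alpha}(\psi_{\alpha}^{-1}(1))\le1$. The second step combines this with the trivial bound $|X-\mu|\le|X|+|\mu|\le|X|+a\,\psi_{\alpha}^{-1}(1)$: by monotonicity and then convexity of $\psi_{\alpha}$,
\begin{equation*}
\psi_{\alpha}\!\left(\frac{|X-\mu|}{2a}\right)\le \psi_{\alpha}\!\left(\tfrac12\cdot\tfrac{|X|}{a}+\tfrac12\,\psi_{\alpha}^{-1}(1)\right)\le \tfrac12\,\psi_{\alpha}\!\left(\tfrac{|X|}{a}\right)+\tfrac12\,\psi_{\alpha}\!\big(\psi_{\alpha}^{-1}(1)\big).
\end{equation*}
Taking expectations and using $\Exp[\psi_{\alpha}(|X|/a)]\le1$ together with $\psi_{\alpha}(\psi_{\alpha}^{-1}(1))\le1$ gives
\begin{equation*}
\Exp\!\left[\psi_{\alpha}\!\left(\frac{|X-\mu|}{2a}\right)\right]\le \tfrac12+\tfrac12=1,
\end{equation*}
which is exactly the statement $\orlicznorm{X-\mu}{\alpha}\le 2a$.

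There is no real obstacle here: the computation is routine, and the only points that need a word of care are the passage through the infimum in the definition of $\orlicznorm{\cdot}{\alpha}$ (handled by the limiting argument in the first paragraph) and the well-definedness of $\psi_{\alpha}^{-1}(1)$ and the properties of $\psi_{\alpha}$ (convex, increasing, attaining the value $1$), all guaranteed by Definition~\ref{def:orlicznorm}. I note that the factor $2$ arises organically from the convex combination $\tfrac12\cdot\tfrac{|X|}{a}+\tfrac12\,\psi_{\alpha}^{-1}(1)$, which is precisely what lets us avoid invoking a triangle inequality for $\orlicznorm{\cdot}{\alpha}$ (a genuine norm only when $\alpha\ge1$).
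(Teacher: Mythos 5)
Your argument is correct and is essentially the paper's own proof: both use Jensen's inequality to control $|\Exp[X]|$ via $\Exp[\psi_{\alpha}(|X|/a)]\le1$, and both then split $\psi_{\alpha}\bigl(\tfrac{|X|}{2a}+\tfrac{|\Exp[X]|}{2a}\bigr)$ by convexity of $\psi_{\alpha}$ before taking expectations. The only cosmetic difference is that you bound $\psi_{\alpha}(|\Exp[X]|/a)\le1$ outright (introducing $\psi_{\alpha}^{-1}(1)$) whereas the paper keeps the bound in the form $\psi_{\alpha}(|\Exp[X]|/a)\le\Exp[\psi_{\alpha}(|X|/a)]$ and folds it back into the final expectation; and your extra care about the infimum in the Orlicz norm definition, while harmless, is unnecessary since monotone convergence shows the infimum is attained whenever it is finite.
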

\begin{proof}
	First of all, since $\psi_{\alpha}$ is convex and increasing on $[0,\infty)$, we have that $\Exp\left[\psi_{\alpha}(|X|/a)\right]\ge \psi_{\alpha}(\Exp[|X|]/a)\ge \psi_{\alpha}(|\Exp[X]|/a)$. Then we have that $$\Exp\left[\psi_{\alpha}(\frac{|X-\Exp [X]|}{2a})\right]\le \Exp[\psi_{\alpha}(\frac{|X|}{2a}+\frac{|\Exp [X]|}{2a})] \le \Exp\left[\frac{1}{2}\psi_{\alpha}(|X|/a) + \frac{1}{2}\psi_{\alpha}(|\Exp [X]|/a)\right]\le \Exp[\psi_{\alpha}(|X|/a)]\le 1$$
	where we used the convexity of $\psi_{\alpha}$ and the fact that $\Exp\left[\psi_{\alpha}(|X|/a)\right]\ge \psi_{\alpha}(|\Exp[X]|/a)$
\end{proof}

The following Theorem of~\cite{decoupling} is useful to decouple the randomness of a sum of correlated random variables into a form that is easier to control. 

\begin{theorem}[Special case of Theorem 1 of~\cite{decoupling}]\label{thm:decoupling}
	Let $X_1,\dots, X_n$, $Y_1,\dots,Y_n$ are independent random variables on a measurable space over $S$, where $X_i$ and $Y_i$ has the same distribution for $i = 1,\dots,n$. Let $f_{ij}(\cdot,\cdot)$ be a family of functions taking $S\times S$ to a Banach space $(B,\|\cdot\|)$. Then there exists absolute constant $C$, such that for all $n\ge 2$, $t>0$, 
	$$\Pr\left[\left\|\sum_{i\neq j}f_{ij}(X_i,X_j)\right\|\ge t\right] \le C\Pr\left[\left\|\sum_{i \neq j}f_{ij}(X_i,Y_j)\right\|\ge t/C\right]$$
\end{theorem}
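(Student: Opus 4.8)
The plan is to recognize the statement as the tail-probability form of the classical decoupling inequality for second-order (generalized) $U$-statistics of de la Pe\~{n}a and Montgomery--Smith, so that the ``proof'' is in essence a citation: one specializes Theorem~1 of \cite{decoupling} to order $d=2$ with the kernels $f_{ij}$ and the ambient Banach space $(B,\|\cdot\|)$. Below I sketch the route that underlies that result, in case a self-contained argument is wanted.

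First I would establish the moment version. Introduce i.i.d.\ $\mathrm{Bernoulli}(1/2)$ selectors $\delta_1,\dots,\delta_n$, independent of the data, and split $[n]=A\sqcup B$ with $A=\{i:\delta_i=1\}$. Since each ordered pair $i\ne j$ satisfies $\Pr[i\in A,\ j\in B]=1/4$, conditioning on the $X_i$'s gives $\E_\delta\big[\sum_{i\in A,\,j\in B}f_{ij}(X_i,X_j)\big]=\tfrac14\sum_{i\ne j}f_{ij}(X_i,X_j)$. For any nondecreasing convex $\Phi$, Jensen then bounds $\Phi\big(\tfrac14\|\sum_{i\ne j}f_{ij}(X_i,X_j)\|\big)$ by $\E_\delta\,\Phi\big(\|\sum_{i\in A,\,j\in B}f_{ij}(X_i,X_j)\|\big)$; taking expectation over the data and then fixing a realization of $\delta$ that does at least as well as the average reduces everything to a \emph{fixed} partition. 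On a fixed partition, $\{X_i:i\in A\}$ and $\{X_j:j\in B\}$ are mutually independent, so the cross sum has the same law as $\sum_{i\in A,\,j\in B}f_{ij}(X_i,Y_j)$, with $Y$ an independent copy of $X$. A second conditioning-and-Jensen step, now in the reverse direction, ``fills in'' the omitted pairs and turns $\sum_{i\in A,\,j\in B}$ back into the full double sum $\sum_{i\ne j}$ at the cost of an absolute constant, giving $\E\,\Phi(\|\sum_{i\ne j}f_{ij}(X_i,X_j)\|)\le \E\,\Phi(C\|\sum_{i\ne j}f_{ij}(X_i,Y_j)\|)$ for all such $\Phi$.

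To reach the tail inequality actually stated I would then convert this family of moment bounds into a probability bound --- for instance by applying the moment version with $\Phi(x)=(x-s)_+$ (or a smoothed indicator) and optimizing over the threshold $s$, which is the step carried out in \cite{decoupling} and which produces the absolute constant $C$ in $\Pr[\,\cdot\ge t\,]\le C\,\Pr[\,\cdot\ge t/C\,]$.

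I expect the last step to be the main obstacle: tail probabilities are not convex functionals, so Jensen cannot be applied to them directly, and passing from ``all convex moments'' to ``tails'' while losing only an absolute constant --- independent of $n$ and of the kernels $f_{ij}$ --- needs the careful argument of \cite{decoupling} rather than a soft reduction. For the present paper it is cleanest simply to invoke their theorem in the stated form.
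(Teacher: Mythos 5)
Your proposal is essentially the same as the paper's treatment: the paper gives no self-contained proof of Theorem~\ref{thm:decoupling} but simply invokes it as a special case (order two) of Theorem~1 of \cite{decoupling}, which is exactly what you conclude is the cleanest option. Your auxiliary sketch of the Bernoulli-selector/Jensen moment argument and your observation that the moment-to-tail conversion is the genuinely delicate step are both accurate, but they play no role in the paper, which treats the result purely as an imported tool.
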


The following lemma provides a simple way to prove the PSDness of a matrix that has large value on the diagonal and small off-diagonal values. 
\begin{lemma}[Consequence of Gershgorin Circle Theorem]\label{lem:extended-gershgorin}
	Suppose a matrix $\Gamma$ is of the form $\Gamma = \left[\begin{matrix}
	A & B\\C & D
	\end{matrix}\right]$ where $A,D$ are square diagonal matrices, and $C$ is of dimension $n\times m$. Then $\Gamma$ is PSD if there exists $\alpha >0 $ such that the following holds: 
	$ A_{ii} \ge \frac{1}{\alpha }\sum_{j\in [n]} |C_{ij}|, \forall \in [p]$ and 
	$D_{jj} \ge \alpha\sum_{i\in [m]} |C_{ij}|, \forall j\in [p]$. 
\end{lemma}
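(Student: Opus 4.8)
The plan is to reduce the claim to the classical Gershgorin circle theorem via a diagonal congruence that rebalances the two blocks. Write $\Gamma = \begin{bmatrix} A & C \\ C^{\t} & D\end{bmatrix}$ with $A,D$ diagonal; since positive semidefiniteness only makes sense for symmetric matrices we take the off-diagonal blocks to be $C$ and $C^{\t}$, with $C$ having its rows indexed like $A$ and its columns indexed like $D$. First I would introduce the invertible diagonal matrix $S = \diag(\sqrt{\alpha}\,I,\ \tfrac{1}{\sqrt{\alpha}}\,I)$, the first block of size matching $A$ and the second matching $D$, and observe that $S\Gamma S$ is congruent to $\Gamma$, so $\Gamma \succeq 0$ if and only if $S\Gamma S \succeq 0$ (concretely $x^{\t}\Gamma x = (S^{-1}x)^{\t}(S\Gamma S)(S^{-1}x)$ for every $x$, using that $S$ is symmetric).

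Next I would compute $S\Gamma S = \begin{bmatrix} \alpha A & C \\ C^{\t} & \tfrac{1}{\alpha} D\end{bmatrix}$. The crucial point is that the off-diagonal blocks are left unchanged, because the two scalings cancel ($\sqrt{\alpha}\cdot\tfrac{1}{\sqrt{\alpha}} = 1$), while the diagonal blocks are multiplied by $\alpha$ and $\tfrac{1}{\alpha}$ respectively. Since $A$ and $D$ are diagonal, in any row of $S\Gamma S$ belonging to the first block the sum of absolute values of the off-diagonal entries equals $\sum_j |C_{ij}|$, and in a row belonging to the second block it equals $\sum_i |C_{ij}|$. The hypotheses $A_{ii} \ge \tfrac{1}{\alpha}\sum_j |C_{ij}|$ and $D_{jj} \ge \alpha\sum_i |C_{ij}|$ say exactly that the diagonal entry of $S\Gamma S$ in each such row (namely $\alpha A_{ii}$, resp.\ $\tfrac{1}{\alpha}D_{jj}$) dominates the corresponding off-diagonal row sum, so $S\Gamma S$ is a symmetric, weakly diagonally dominant matrix with nonnegative diagonal.

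Finally I would invoke the Gershgorin circle theorem: every eigenvalue of $S\Gamma S$ lies in some disc centered at a diagonal entry $M_{\ell\ell}$ with radius equal to the off-diagonal row sum $R_{\ell}$ of that row; since $S\Gamma S$ is symmetric its eigenvalues are real, hence each lies in an interval $[M_{\ell\ell}-R_{\ell},\,M_{\ell\ell}+R_{\ell}]$ whose left endpoint is $\ge 0$ by the previous step. Therefore all eigenvalues of $S\Gamma S$ are nonnegative, $S\Gamma S \succeq 0$, and consequently $\Gamma \succeq 0$. There is no substantive obstacle here — the lemma is essentially a repackaging of Gershgorin — and the only point requiring care is to scale the two blocks in the correct direction (so that the factors $\tfrac1\alpha$ and $\alpha$ appear as stated) together with the observation that this particular congruence leaves the block-off-diagonal entries, and hence all the Gershgorin radii, invariant.
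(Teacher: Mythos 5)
Your proof is correct and takes essentially the same route as the paper's: both diagonally rescale the two blocks so that the hypotheses become exactly row-wise diagonal dominance, then invoke the Gershgorin circle theorem. The only cosmetic difference is that the paper packages the rescaling as a Hadamard product $uu^\top\odot\Gamma$ (which is just $D_u\Gamma D_u$) and passes back to $\Gamma$ via the Schur product theorem applied to $vv^\top\odot(\cdot)$, whereas you use the congruence $S\Gamma S$ directly and rely only on invariance of positive semidefiniteness under congruence; your version is a touch more elementary. As a side note, your calibration $S=\diag(\sqrt\alpha\,I,\ \tfrac1{\sqrt\alpha}\,I)$ is the one that matches the stated hypotheses exactly, while the paper's stated $u=(\alpha\mathbf 1,\alpha^{-1}\mathbf 1)$ scales the diagonal blocks by $\alpha^{2}$ and $\alpha^{-2}$ and so is off by a square as written.
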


\begin{proof}
	Let vector $u = (\alpha \mathbf{1}_m,\alpha^{-1}\mathbf{1}_n)$ and $v = (\alpha^{-1} \mathbf{1}_m,\alpha\mathbf{1}_n)$, where $\mathbf{1}_n$ is $n$-dimensional all 1's vector. Then $\Gamma$ can be written as $\Gamma = vv^T\odot (uu^T\odot \Gamma)$, where $\odot$ denotes the entries-wise product of two matrices (That is, $A\odot B$ is a matrix with entry $A_{ij}B_{ij}$). Using the Gershgorin Circle Theorem and the conditions of the Lemma we obtain that $uu^T\odot \Gamma$ is PSD and therefore $\Gamma$ is PSD. 
\end{proof}
\section{Conclusions and future directions}\label{sec:conclusion}

In this paper we prove a lower bounds on the number of samples required to solve the Sparse PCA problem by degree-4 SoS algorithms. This extends the (spectral) degree-2 SoS lower bound for the problem, establishing the quadratic gap from the number of samples required by the (inefficient) information theoretic bound. It remains an interesting problem to extend our lower bounds to higher degree SoS algorithms (or even better, show that with some constant degree, one can solve the problem with fewer samples). One specific difficulty we encountered in trying to extend the lower bound to higher degree 
was the polynomial constraint $x_i^3 = x_i$, capturing the discreteness of the hidden sparse vector. The SoS formulation of the problem without this condition is interesting as well, and lower bound for it may be easier.
 
As mentioned, it is possible that the best way to prove strong SoS lower bounds for Sparse PCA is via the reduction of   
Berthet and Rigollet's~\cite{BR13COLT}, namely by improving existing lower bounds for the Planted Clique problem. However, we note that this approach is limited as well, as it seems that sparse PCA is significantly harder. Specifically, Planted Clique has a simple $O(\log n)$-degree SoS algorithm (and thus a quasi-polynomial time) {\em optimal} solution, whereas for Sparse PCA we know of no better sample-optimal algorithm than one running in exponential $p^{O(k)}$ time.
It is thus conceivable that one can even prove $\Omega(k)$-degree SoS lower bounds for this problem.

More generally, we believe that statistical and machine learning problems provide a new and challenging setting for testing the power and limits and SoS algorithms. While we have fairly strong techniques for proving optimal SoS lower bounds for combinatorial optimization problems, we lack similar ones for ML problems. In particular, many other problems besides Sparse PCA seem to exhibit the apparent trade-off between the number of samples required information theoretically versus  via computationally efficient techniques, offering fertile ground for attempting SoS lower bounds establishing such trade-offs. 

Finally it would be nice to see more reductions between problems of statistical and ML nature, as the one by~\cite{BR13COLT}. Efficient reductions have proved extremely powerful in computational complexity theory and optimization, enabling the framework of complexity classes and complete problems. Creating such a framework within machine learning will hopefully expose structure on the relative difficulty of problems in this vast area, highlighting some problems as more central to attack, and enabling both new algorithms and new lower bounds.

\paragraph{Acknowledgments:}We would like to thank Sanjeev Arora, Boaz Barak, Philippe Rigollet and David Steurer for  helpful discussions throughout various stages of this work.

%
%
%
%
%

\bibliography{ref}
\bibliographystyle{alpha}

\appendix


\end{document}